\documentclass[journal]{IEEEtran}

\usepackage{times}
 
\usepackage[bookmarks=true]{hyperref}

\usepackage{amssymb}		% to get all AMS symbols
\usepackage{graphicx}		% to insert figures
\usepackage{amsmath}
\usepackage{amsfonts}
\usepackage{amsthm}
\usepackage[utf8]{inputenc}
\usepackage[linesnumbered,ruled,noend]{algorithm2e}
\usepackage{algpseudocode}
\usepackage{xcolor}
\usepackage{cite}
\usepackage{comment}
\usepackage{multirow}
\usepackage{multicol}
\usepackage{wrapfig}
\usepackage{ifthen}
\usepackage{eqparbox}
\usepackage{subcaption}

\begin{document}

% paper title
\title{
% Near-Optimal, Pareto-complete Kinodynamic Motion Planning with Multi-Objectives
Multi-Objective Kinodynamic Motion Planning with Asymptotic Pareto Optimality\\
% Near-Pareto-Optimal Multi-Objective Kinodynamic Planning\\
% Near-Optimal Multi-Objective Kinodynamic Motion Planning
% Multi-Objective Kinodynamic Motion Planning
}
% Multi-objective motion planning: From Lexicographic optimal to Pareto Front

% Authors
% \author{Anonymous Authors}
\author{Yusif Razzaq, Anne Theurkauf, Nisar Ahmed, and Morteza Lahijanian%
% \thanks{Manuscript received: September 8, 2024; Revised: December 4, 2024; Accepted: January 2, 2025.}%
% \thanks{This paper was recommended for publication by Editor Lucia Pallottino upon evaluation of the Associate Editor and Reviewers' comments.}%
% \thanks{This work was supported by NASA STTR award 80NSSC20C0314.}%
\thanks{Authors are with the University of Colorado Boulder, CO, USA. 
        {\tt\small \{firstname.lastname\}@colorado.edu}}%
% \thanks{$^{2}$Author is with Boston Dynamics, MA, USA. 
        % {\tt\small atheurkauf@bostondynamics.com}}%
}

\maketitle

% Environments
\newtheorem{theorem}{Theorem}
\newtheorem{problem}{Problem}
\newtheorem{myexam}{Example}
\newtheorem{assumption}{Assumption}
\newtheorem{proposition}{Proposition}
\newtheorem{corollary}{Corollary}
\newtheorem{lemma}{Lemma}
\newtheorem{definition}{Definition}
\newtheorem{remark}{Remark}

% Common Commands
\newcommand{\reals}{\mathbb{R}}
\newcommand{\A}{\mathcal{A}}
\newcommand{\B}{\mathcal{B}}
\newcommand{\C}{\mathcal{C}}
\newcommand{\E}{\mathcal{E}}
\newcommand{\N}{\mathcal{N}}
\newcommand{\Pcal}{\mathcal{P}}
\newcommand{\R}{\mathbb{R}}
\newcommand{\Scal}{\mathcal{S}}
\newcommand{\U}{\mathbb{U}}
\newcommand{\W}{\mathcal{W}}
\newcommand{\X}{\mathbb{X}}
\newcommand{\Y}{\mathcal{Y}}

% Paper Specific
\newcommand{\traj}{\pi(x)}
\newcommand{\dom}{\succ}
\newcommand{\lexDom}{\succ_{\text{lex}}}

\newcommand{\sol}{\Pi_\text{sol}}
\newcommand{\Sols}{\Pi^\textsc{alg}_\text{sol}}
\newcommand{\poSol}{\Pi^*_\text{sol}}
\newcommand{\poDSol}{\Pi^*_{\delta,\text{sol}}}
\newcommand{\algSol}{\pi_n^{\textsc{alg}}}
\newcommand{\algSols}{\Pi_n^{\textsc{alg}}}
\newcommand{\algRVs}{\Y_n^{\textsc{alg}}}
\newcommand{\algRV}{Y_n^{\textsc{alg}}}
\newcommand{\lexSols}{\Pi^\epsilon_\text{lex}}
\newcommand{\conSols}{\Pi_\text{co}}

\newcommand{\repSet}{R_s}
\newcommand{\lexSet}{L^{\epsilon}}
\newcommand{\repLex}{R^\text{lex}_s}
\newcommand{\repCon}{R^\text{co}_s}
\newcommand{\repPO}{R^\text{PO}_s}

\newcommand{\wit}{s}
\newcommand{\wits}{S}
\newcommand{\witEps}{\vec\varepsilon}

\newcommand{\lex}{\text{lex}}
\newcommand{\epsMin}{\C^{\min}}
\newcommand{\lexMin}{\pi^*_{\text{lex}}}
\newcommand{\conMin}{\pi^*_{\text{co}}}
\newcommand{\lexBall}{x^*_{\text{lex},j}}
\newcommand{\conBall}{x^*_{\text{co},j}}

\newcommand{\lexApprox}{\pi^\epsilon_\text{lex}}
\newcommand{\conApprox}{\pi_\text{co}}

\newcommand{\SST}{\textsc{SST}\xspace}
\newcommand{\lexSST}{\textsc{lexSST}\xspace}
\newcommand{\conSST}{\textsc{coSST}\xspace}
\newcommand{\poSST}{\textsc{poSST}\xspace}
\newcommand{\wSST}{\textsc{ws-SST}\xspace}
\newcommand{\CL}{\textsc{MaxMinClearance}\xspace}
\newcommand{\PL}{\textsc{PathLength}\xspace}
\newcommand{\CF}{\textsc{GaussianCost}\xspace}
\newcommand{\DI}{\textsc{Integrator}\xspace}
\newcommand{\BI}{\textsc{Bicycle}\xspace}
\newcommand{\ws}{\texttt{WS-1}\xspace}
\newcommand{\wss}{\texttt{WS-2}\xspace}
\newcommand{\wsss}{\texttt{WS-3}\xspace}
\newcommand{\wssss}{\texttt{WS-4}\xspace}

\newcommand{\dps}{\delta_\text{PS}}
\newcommand{\ballNodes}{V^\text{PO}_{\mathcal{B}_s}}
\newcommand{\ball}{\mathcal{B}_{\delta_s}}

% Collaborators
\newcommand{\ml}[1]{\textcolor{blue}{[ML: #1]}}
\newcommand{\yr}[1]{\textcolor{red}{[YR: #1]}}

\begin{abstract}
% Motion planning often yields numerous valid solutions, but their quality can vary significantly when evaluated across multiple objectives. 
In this paper, we address the challenge of multi-objective motion planning for systems under kinodynamic constraints. We consider three problem classes: (i) lexicographic optimization, in which objectives are minimized according to a strict priority ordering, (ii) constrained optimization, in which a primary objective is minimized subject to bounds on the remaining costs, and (iii) Pareto front optimization, in which the goal is to approximate the full set of optimal trade-offs among competing objectives. 
% We first prove that no scalar utility function can represent lexicographic dominance over continuous cost spaces, establishing a fundamental limitation of scalarization-based approaches. 
% To overcome this, we introduce $\epsilon$-equivalence, which relaxes strict cost equality to a user-defined tolerance, enabling meaningful secondary-objective refinement in the sampling-based setting. 
We first show that established cost scalarization methods for multi-objective problems \emph{cannot} be extended to continuous-domain systems with correctness guarantees. Then, we propose a unified algorithmic framework built upon the Stable Sparse-RRT (\SST) algorithm, in which the single representative maintained at each witness neighborhood is replaced by a \emph{representative set} of locally Pareto-optimal nodes. This structure gives rise to three distinct algorithms: \lexSST for lexicographic minimization, \conSST for constrained optimization, and \poSST for Pareto-front approximation. 
% We prove that these algorithms are probabilistically $\delta$-robustly Pareto-complete and asymptotically near-Pareto-optimal. 
% planner for continuous dynamical systems with an arbitrary number of cost objectives. 
We provide theoretical guarantees for the completeness and optimality of our algorithms and demonstrate their effectiveness through extensive empirical evaluations.
% We prove that \lexSST is probabilistically $\delta$-robustly complete and asymptotically near-optimal for bi-objective problems; that \coSST is complete and near-optimal for constrained problems, resolving the structural incompleteness of single-representative planners; and that \poSST is the first probabilistically $\delta$-robustly Pareto-complete and asymptotically near-Pareto-optimal planner for continuous, simulatable dynamic systems with an arbitrary number of cost objectives. 
% Finally, we demonstrate the effectiveness of our approach through extensive empirical evaluations, showing that our methods consistently recover high-quality solutions across diverse environments and dynamic models, particularly in scenarios where scalarization techniques fail.
\end{abstract}

\begin{IEEEkeywords}
    Kinodynamic motion planning, multi-objective optimization, sampling-based motion and path planning.
\end{IEEEkeywords}

\IEEEpeerreviewmaketitle

% \section{Paper Story Outline}
% \input{sections/outline}

\section{Introduction}
\label{sec:intro}

\IEEEPARstart{M}{otion} planning is a core capability for intelligent autonomous systems, governing how dynamic agents move safely and efficiently through their environments. Beyond collision avoidance, effective motion planning requires a notion of trajectory \emph{quality}, which is classically captured by a \emph{single} cost metric, e.g., path length. In practice, however, objectives are rarely singular: an autonomous vehicle must balance passenger comfort against travel time, and a mobile manipulator must trade off workspace clearance against task completion speed.
% , and a UAV must weigh fuel efficiency against sensor coverage. 
When \emph{multiple} competing objectives are present, no single trajectory is universally optimal. Rather, the space of high-quality solutions becomes a rich, multidimensional landscape of trade-offs, where the goal shifts to identifying \emph{Pareto-optimal} solutions, i.e., those that cannot be improved in one objective without sacrificing another.  This introduces several computational challenges, especially under kinodynamic constraints. In this work, we address these challenges and develop a \emph{foundational} framework for \emph{multi-objective kinodynamic motion planning} that enables efficient exploration, approximation, and reasoning over this landscape.

\begin{figure}[t]
    \centering
    \begin{subfigure}[b]{0.24\textwidth}
        \centering
        \includegraphics[width=\textwidth]{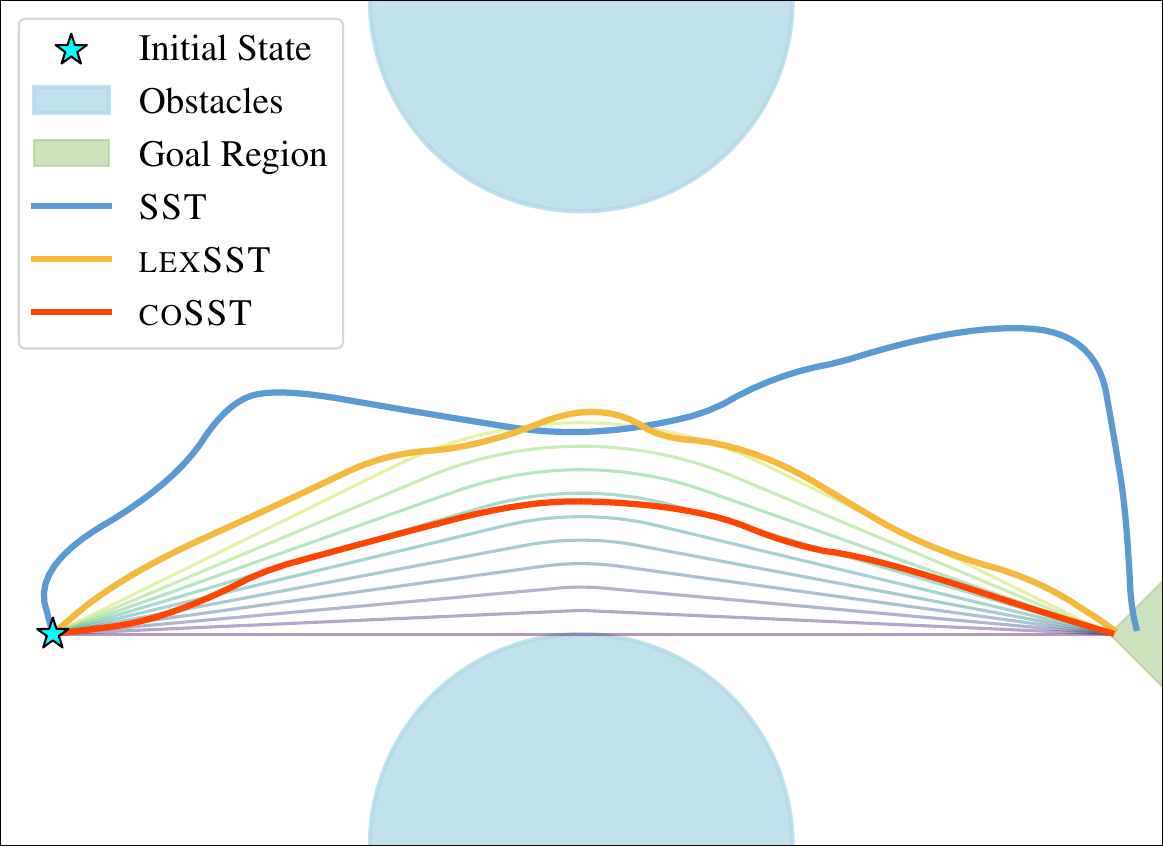}
        \caption{Solution paths in workspace.}
        \label{fig:Ex1a}
    \end{subfigure}
    \hfill
    \begin{subfigure}[b]{0.24\textwidth}
        \centering
        \includegraphics[width=\textwidth]{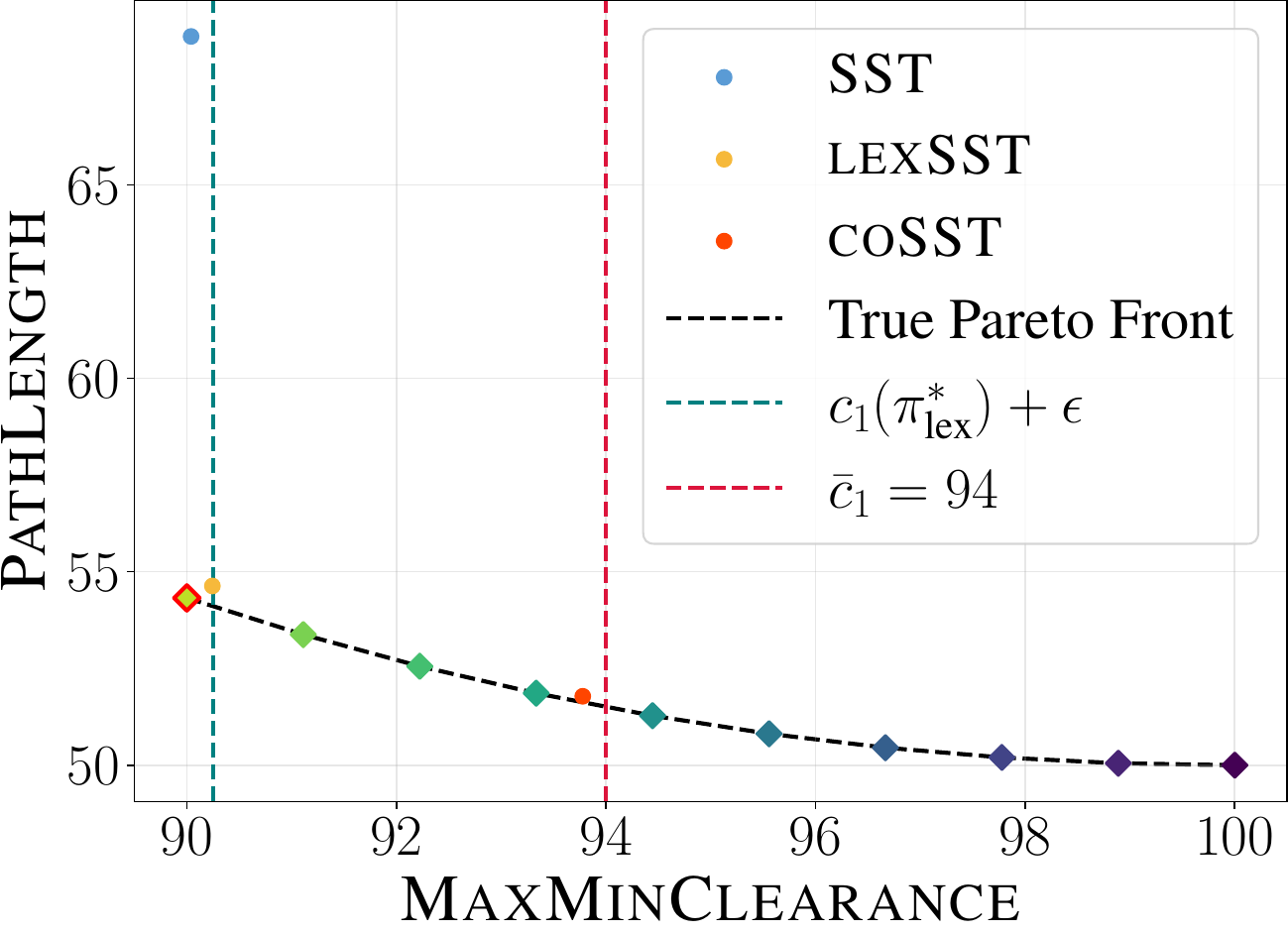}
        \caption{Solutions in objective space.}
        \label{fig:Ex1b}
    \end{subfigure}
    \caption{
    Two-objective motion planning with obstacle clearance ($c_1$) and path length ($c_2$) objectives. (a) Workspace showing obstacles (light blue), goal region (green), theoretical Pareto-optimal trajectories (lines in purple-green color scale), and three planner solutions (thick lines). (b) The bi-objective space with color-coded solution costs and the true Pareto front. 
    % \SST (blue) fails to consider $c_2$, highlighting the need for multi-objective reasoning.
    % Two-objective motion planning with obstacle clearance ($c_1$) and path length ($c_2$). (a) Workspace showing obstacles (light blue), the goal region (green), theoretical Pareto-optimal trajectories (blue--green colorscale), and the solutions returned by three planners (thick lines). (b) Corresponding bi-objective cost space, with the color-coded true Pareto front and the same three solutions plotted as points.
    }
    \label{fig:Ex1}
\end{figure}

Graph-based algorithms provide the earliest foundations for optimal path planning, with A$^*$~\cite{Hart1968} and its variants guaranteeing shortest paths over edge traversal costs. MOA$^*$~\cite{Stewart1991} and NAMOA$^*$~\cite{Mandow2005, Mandow2010} generalize this optimality to multi-objective problems, returning the full set of Pareto-optimal paths. More recent work has dramatically improved efficiency with BOA$^*$~\cite{Hernandez2023} for bi-objective problems, EMOA$^*$~\cite{Ren2025} for arbitrarily many objectives, A$^*$pex~\cite{Zhang2022} for improved pruning of the Pareto front, and multi-objective search under linear temporal logic tasks~\cite{Amorese:IROS:2023}. However, all of these methods require a finite graph representation, whereas real robotic platforms operate in continuous state spaces with complex dynamics.

For continuous systems, constraint-based trajectory optimization methods have been introduced, including CHOMP~\cite{Ratliff2009, Zucker2013}, TrajOpt~\cite{Schulman2014}, and STOMP~\cite{Kalakrishnan2011}. However, these methods are designed for a single cost function and do not natively support multiple objectives (beyond scalarization). Further, they typically struggle with non-convex problems, which are common in practice.

An alternative and more practical paradigm for planning in continuous domains is sampling-based algorithms~\cite{Kavraki1996, LaValle1998, Kuffner2000, Karaman2011}, which have been extended to kinodynamic systems via tree structures in which nodes (states) are expanded by sampling a control and propagating it through the dynamics. Notably, the Stable Sparse-RRT (\SST) planner~\cite{SST, Li2015} established that asymptotic near-optimality can be achieved by maintaining a sparse set of locally optimal nodes within witness neighborhoods, and subsequent work~\cite{Littlefield2018, Dobson2014} further improved efficiency through dominance-informed pruning and sparse roadmaps.  However, these planners remain limited to single-objective optimization.

To extend an existing single-cost planner to multiple objectives, the prevailing approach is to scalarize the costs into a single objective, then apply a standard cost-aware planner. Weighted-sum (WS) scalarization is the most common choice~\cite{Christianos2023, Wilde2020, Lu2020}, but it is structurally limited to recovering solutions on the convex hull of the Pareto front~\cite{branke2008multiobjective}, missing solutions in non-convex regions.
% Weighted-maximum (WM) scalarization~\cite{Moffaert2013, WeightedMax} addresses this limitation and can, in principle, reach any Pareto-optimal point, but the mapping from weights to the resulting trade-off remains opaque, making it impractical to target a specific trade-off, especially in kinodynamic settings.
Weighted-maximum (WM) scalarization~\cite{Moffaert2013, WeightedMax} addresses this limitation and can, in principle, reach any Pareto-optimal point. However, the mapping from weights to the resulting trade-off remains opaque, making it impractical to target a specific trade-off, especially in kinodynamic settings. Recent work~\cite{Botros2024} proposes principled weight-sampling strategies for Pareto-front approximation via scalarization, but each weight vector still requires a separate planning run. To our knowledge, no scalarization approach guarantees asymptotic optimality or completeness for multi-objective planning in continuous domains.

% ACT III: Proposed solution
% \subsection*{Contributions}

% In this work, we address this gap above by introducing a unified framework for multi-objective kinodynamic motion planning that is applicable to general dynamical systems and arbitrary smooth cost functions. 
% We build upon the \SST algorithm, which achieves asymptotic near-optimality for a single objective by maintaining a sparse motion tree whose nodes are locally optimal within small neighborhoods of the state space. Our key insight is the extension from a single representative per witness neighborhood to a \emph{representative set} of locally Pareto-optimal nodes, enabling the simultaneous maintenance and refinement of non-dominated subtrajectories within a single search tree. This structure underlies three proposed algorithms — \lexSST, \conSST, and \poSST — each tailored to a distinct class of multi-objective problems but sharing a common algorithmic foundation. Specifically, our contributions are as follows:

In this work, we address the gap above by introducing a unified framework for multi-objective sampling-based kinodynamic motion planning that is applicable to general dynamical systems and arbitrary smooth cost functions. We specifically consider three common problems in multi-objective optimization: 
% lexicographic ordering, constrained optimization, and Pareto-front computation. 
(i) lexicographic optimization, in which objectives are minimized according to a strict priority ordering, (ii) constrained optimization, in which a primary objective is minimized subject to bounds on the remaining costs, and (iii) Pareto front optimization, in which the goal is to approximate the full set of optimal trade-offs among competing objectives. 
We first show that cost-scalarization methods cannot solve these problems with correctness and completeness guarantees. 

Our key insight is that to approach Pareto optimality, a \emph{representative set} of locally Pareto-optimal nodes must be maintained in the motion tree. This enables the simultaneous tracking and refinement of non-dominated subtrajectories within a single search tree. We then build upon \SST and, by incorporating this idea, propose three algorithms: \lexSST, \conSST, and \poSST, each tailored to one of the aforementioned problems. We prove that these algorithms are probabilistically $\delta$-robustly complete and asymptotically near-optimal. We illustrate their power through eight case studies, comparing them against scalarization-based baselines.

In short, our contributions are sixfold:
\begin{itemize}
    \item We prove that no scalar utility function can correctly represent lexicographic dominance over continuous cost spaces, establishing a fundamental limitation of scalarization-based approaches.
    \item We introduce the notion of $\epsilon$-equivalence, which relaxes the strict equality required by lexicographic dominance to a user-defined tolerance, enabling objective refinement in continuous settings.
    \item We propose \lexSST, the first probabilistically $\delta$-robustly complete and asymptotically near-optimal motion planner for lexicographic minimization of continuous dynamic systems with bi-objective costs.
    \item We propose \conSST, the first probabilistically $\delta$-robustly complete and asymptotically near-optimal motion planner for constrained optimization of continuous dynamic systems, resolving the structural incompleteness of single-representative planners for this problem class.
    \item We propose \poSST, the first probabilistically $\delta$-robustly Pareto-complete and asymptotically near-Pareto-optimal motion planner for approximating the full Pareto front of kinodynamic systems with multiple objectives.
    \item We validate these theoretical properties through extensive empirical evaluations, demonstrating the effectiveness and computational efficiency of our methods against scalarization-based baselines.
    % across diverse environments and dynamic models.
\end{itemize}

% In short, our contributions are as follows:
% \begin{itemize}
%     \item We prove that no scalar utility function can represent lexicographic dominance over continuous cost spaces, a fundamental impossibility result for scalarization, and introduce $\epsilon$-equivalence, which relaxes lexicographic dominance to a user-defined tolerance, enabling objective refinement in continuous settings.
%     \item We propose three probabilistically $\delta$-robust complete and asymptotically near-optimal planners: \lexSST, the first for lexicographic minimization with bi-objective costs; \conSST, the first for constrained optimization, resolving the structural incompleteness of single-representative planners; and \poSST, the first to approximate the full Pareto front for an arbitrary number of objectives. Formal guarantees and proofs accompany all three.
%     \item We validate these properties through extensive experiments, demonstrating effectiveness, robustness, and efficiency against scalarization-based baselines across diverse environments and dynamics.
% \end{itemize}

% \ml{take a look at the references in the weighted-maximum paper, and other multi-objective papers, and see which ones we should discuss here.}

\section{Problem Formulation}
\label{sec:prob}

% \subsection{Problem Formulation}
We consider a robot with motion dynamics given by:
\begin{equation}
\label{eq:dynamics}
    \dot x(t)=f(x(t),u(t)),
\end{equation}
where $x \in \X \subseteq \mathbb{R}^d$ is the state, $u \in \U \subset \mathbb{R}^m$ is the control input, and $f: \X \times \U \to \mathbb{R}^d$ is the vector field.
% where $x \in \X \subseteq \mathbb{R}^n$ and $u\in \U \subset \mathbb{R}^m$ are the sate and control spaces respectively. 
We assume that the dynamics satisfy standard smoothness and regularity conditions. Specifically, the second derivative of $x$ is uniformly bounded, i.e., $\|\ddot{x}\| \leq M$ for some constant $M \in \mathbb{R}_{\geq 0}$, and $f$ is Lipschitz continuous in both $x$ and $u$, i.e., 
for all $x,x' \in \X$ and $u,u' \in \U$,
there exist $K_x,K_u \geq 0$ such that $$\|f(x,u)-f(x',u')\| \leq K_x \|x-x'\| + K_u \|u - u'\|.$$

% The robot evolves in a bounded workspace with static obstacles and has constraints on state variables, e.g., a velocity limit.  We refer to these constraints collectively as the obstacle region $\X_O \subset \X$ in the state space. The collision-free portion of the state space is denoted as $\X_f = \X \setminus \X_O$. 

The robot operates in a bounded workspace containing static obstacles and is subject to state constraints, such as velocity limits. We collectively represent these constraints as the obstacle set $\X_O \subset \X$ in the state space. The collision-free portion of the state space is then defined as $\X_f = \X \setminus \X_O$.  In the classical kinodynamic motion planning problem, the aim is to find a robot trajectory from an initial state $x_0$ to a given goal region $\X_G \subseteq \X_f$ such that every state along the trajectory remains within $\X_f$.

% \ml{don't mention approach.  Just talk about smoothness and regularity properties.}
% Since the proposed approach relies on closed balls used to define neighborhoods in the state space $\X$, certain regularity conditions must be satisfied to ensure desirable theoretical properties. 
% This work considers state spaces that satisfy certain smoothness and regularity conditions. Specifically, $\X$ must be a subset of $n$-dimensional Euclidean spaces, where the $\mathbb{L}_2$ Euclidean norm $\| . \|$ can be defined. 

% We consider a problem in which an agent must navigate from an initial state $x_0$ to a goal region $\X_G \subset \X$, such that every state along its trajectory $\pi$ remains within the collision-free subset $\X_f$. The motion of the agent is described as a trajectory $\pi$:
% 
\begin{definition}[Trajectory]
    Given an initial state $x_0 \in \X$ and control signal $U: [0, T] \rightarrow \U$, a robot trajectory $\pi: [0, T] \rightarrow \X$ is a continuous function such that $\pi(t) = x_0 + \int_{0}^T f(\pi(\tau),U(\tau))d\tau$.
    A trajectory $\pi$ is \emph{valid} if it remains entirely within the collision-free set, i.e.,
    $\pi(t) \in \X_f$ for all $t \in [0, T].$  The set of all valid trajectories is denoted by $\Pi$.
    % 
    % 
    % A trajectory $\pi: [0, T] \rightarrow \X$ is a continuous function over the time interval $[0, T]$, representing the agent's state evolution under the control signal $U: [0, T] \rightarrow \U$, as determined by forward integration of \eqref{eq:dynamics}. A trajectory $\pi$ is \emph{valid} if it remains entirely within the collision-free set:
    % \[\pi(t) \in \X_f \quad \forall t \in [0, T].\]
\end{definition}
\begin{definition}[Motion Plan]
    A valid trajectory $\pi \in \Pi$ is called a \emph{motion plan} if it begins at $x_0$ and reaches a given goal region $\X_G$, i.e., $\pi(t)\in \X_G$ for some $t \in [0, T]$.  The set of all motion plans is denoted by $\Pi_{sol}$.
\end{definition}
In this work, we consider settings in which the robot is subject to multiple motion-cost criteria. That is, each trajectory $\pi$ is evaluated using multiple cost functions, each capturing a different aspect of the trajectory's quality.
\begin{definition}[Cost Function]  
A cost function $ c: \Pi \rightarrow \mathbb{R}_{\geq 0} $ maps a trajectory $ \pi $ to a non-negative scalar cost $c(\pi) \in \mathbb{R}_{\geq 0}$.
\end{definition}
% 
% \ml{move this to the prelims when motion tree is introduced}
% \noindent We denote a trajectory connecting state $x$ to $x'$ as $\overline{x \rightarrow x'}$. For brevity, when a trajectory originates from $x_0$, we apply $c$ directly to the endpoint, that is: $c(x') := c(\overline{x_0 \rightarrow x'})$.

As in \cite{SST}, we assume each cost function is smooth, monotonic, and non-degenerate as formalized below.

% \begin{assumption}
% \label{assumption:CostFunciton}
%     The cost function $c(\pi)$ is:
%     \begin{itemize}
%     \item \emph{Lipschitz continuous} for trajectories with the same initial state, i.e.,
%     $\exists K_c > 0$ s.t.  $\forall \pi_1, \pi_2 \in \Pi \text{ with } \pi_1(0) = \pi_2(0),$
%     $|c(\pi_1) - c(\pi_2)| \leq K_c \sup_{t \in [0, T]} \|\pi_1(t) - \pi_2 (t)\|. $
%     \item \emph{Additive} under concatenation, i.e., for a trajectory $\pi_3 = \pi_1 . \pi_2$ that is a concatenation of two trajectories $\pi_1$ and $\pi_3$, then $c(\pi_3) = c(\pi_1) + c(\pi_2)$.
%     % \ml{what's the following for ?}
%     % \[c(\pi_1) \leq c(\pi_1 . \pi_2).\]
%     \item \emph{Non-degenerate}, i.e.,  
%     % It holds that
%     there exists $M_c > 0$ such that, for all $t_2>t_1$,
%     $(t_2 - t_1) \leq M_c \ |c(\pi_2) - c(\pi_1)| \geq $ 
%     \end{itemize}
% \end{assumption}

\begin{assumption}[\!\!{\cite{SST}}]
    \label{assumption:CostFunction}
    The cost function $c$ is \emph{Lipschitz continuous}, i.e, there exists constant $K_c > 0$ such that
    \[
    |c(\pi) - c(\pi')| \leq K_c \ \sup_{t \in [0,T]} \|\pi(t) - \pi'(t)\|
    \]
    for all $\pi,\pi'\in \Pi$ 
    with the same start state, i.e., $\pi(0) = \pi'(0)$. Furthermore, consider a trajectory $\pi \in \Pi$ with duration $T>0$, and 
    for $t \in [0, T]$, define $\pi^{t}$ and $\pi_{t}$ 
    % \yr{Alternate notation: $\pi_{<t}$ and $\pi_{\geq t}$} 
    as its \emph{prefix} up to time $t$ and \emph{suffix} from time $t$, respectively, so that their concatenation $\pi^{t} \cdot \pi_{t} = \pi$.  
    Then, it holds that, for all $t \in [0,T]$:
    \begin{align*}
        &\text{Additivity:} && c(\pi) = c(\pi^t) + c(\pi_t), \\
        &\text{Monotonicity:} && c(\pi^t) \leq c(\pi),\\
        &\text{Non-degeneracy:} && \exists K'_c > 0, \;\; T - t \leq K'_c \, |c(\pi) - c(\pi^t)|.
    \end{align*}
\end{assumption}

Given $N \in \mathbb{N}_{\geq 2}$ scalar cost functions ${c_1, \ldots, c_N}$ that satisfy Assumption~\ref{assumption:CostFunction}, we aggregate them into a cost vector
\[C(\pi)=(c_1(\pi), \dots, c_N(\pi)) \in \mathbb{R}^N_{\geq 0}.\]
We are interested in high-quality motion plans that account for all pertinent cost metrics.
This naturally motivates a multi-objective framework, in which we specifically focus on three problem classes: \emph{lexicographic-optimal}, \emph{constrained-optimal}, and \emph{Pareto-optimal} planning.

\subsection{Lexicographic-Optimal Motion Planning}

% The ordering of a cost vector $C$ indicates the priority of the constituent cost components as determined by the user. 
% In other words, we prioritize $c_1$ over $c_2$, and then $c_2$ over $c_3$, etc.
% To formalize this,
% we introduce the notion of lexicographic dominance.
Here, we assume that the ordering of a cost vector $C$ reflects the user-defined priorities among its components. Specifically, $c_1$ is prioritized over $c_2$, $c_2$ over $c_3$, and so on. We formalize this prioritization using the notion of lexicographic dominance.

% we introduce the notion of \textit{lexicographic dominance}, denoted by the operator $\lexDom$, as:
% \begin{align}
% \label{eq:lex dominance}
%     \pi \lexDom \pi' \quad &\text{if} \quad \exists \, i \in \{1, \dots, N\} \nonumber \\
%     &\text{s.t.} \quad c_i(\pi) < c_i(\pi') \nonumber \\
%     &\text{and} \quad c_j(\pi) = c_j(\pi') \quad \forall j < i 
% \end{align}
% 

\begin{definition}[Lexicographic dominance]
    \label{def: lexic}
    Given valid trajectories $\pi,\pi' \in \Pi$ with associated cost vectors $C(\pi),C(\pi') \in \mathbb{R}^N_{\geq 0}$, then $\pi$ \emph{lexicographically dominates} $\pi'$, denoted $\pi \lexDom \pi'$, if there exists index $i \in \{1, \dots, N\}$ such that $c_i(\pi) < c_i(\pi')$ and for all $j < i$, $c_j(\pi) = c_j(\pi')$.  
\end{definition}

\begin{definition}[Lexicographic optimal plan]
    A motion plan $\lexMin \in \sol$ is \emph{lexicographically optimal} if it is \emph{not} lexicographically dominated by any other motion plan, i.e., $\nexists \pi' \in \sol$ such that $\pi' \lexDom \pi$.
\end{definition}

Intuitively, $\lexMin$ is a motion plan that minimizes $c_1$ and, among those that do, minimizes $c_2$, and so on. In Fig.~\ref{fig:Ex1a}, $\lexMin$—which maximizes clearance ($c_1$), then path length ($c_2$)—is shown in light green, with its cost vector $C(\lexMin)$ marked as a red-outlined diamond in Fig.~\ref{fig:Ex1b}. Note that the blue trajectory (\SST) minimizes $c_1$ but fails to refine $c_2$; as a result, it is lexicographically dominated by $\lexMin$ (i.e., $\lexMin \lexDom \pi_\text{\SST}$).
% and is not lexicographically optimal.

This motivates our first problem formulation.
\begin{problem}[Lexicographic-optimal motion planning]
    \label{prob:lexicoMP}
    Consider a robot with dynamics described by \eqref{eq:dynamics}, obstacle set $\X_O \subset \X$, an initial state $x_0 \in \X_f = \X \setminus \X_O$, and a goal region $\X_G \subseteq \X_f$. Given a prioritized vector of cost functions $C = (c_1, \ldots, c_N)$, each satisfying Assumption~\ref{assumption:CostFunction}, compute a lexicographic-optimal motion plan $\lexMin \in \sol$.
\end{problem}

\subsection{Constrained-Optimal Motion Planning}
In the second setting, we consider a constrained optimization problem.  That is, we assume user-defined upper bounds on $N - 1$ costs, and our goal is to minimize the last one. 

\begin{problem}[Constrained-Optimal Motion Planning]
    \label{prob:conOptMP}
    % Consider the setting described in Problem~\ref{prob:lexicoMP}. Given a constraint vector $\vec{c} \in \mathbb{R}^N$ where element $i$ is unconstrained (i.e., $\vec{c}_i = \infty$), find a motion plan $\pi^* = \arg\min_{\pi \in \Pi} c_i(\pi)$ such that $c_j(\pi^*) \leq \vec{c}_j \quad \forall j$.
    Consider the setting described in Problem~\ref{prob:lexicoMP}. Given $N$ cost functions $c_1, \dots, c_N$, each satisfying Assumption~\ref{assumption:CostFunction}, and $N-1$ cost upper-bounds $\bar{c}_1, \dots, \bar{c}_{N-1} \in \mathbb{R}^{N-1}_{\geq 0}$, compute an optimal motion plan $\conMin = \arg\min_{\pi \in \sol} c_N(\pi)$ subject to $c_i(\pi^*) \leq \bar{c}_i$ for all $i \in \{1,\ldots,N-1\}$.
\end{problem}

In the bi-objective setting shown in Fig.~\ref{fig:Ex1}, an upper bound $\bar{c}_1 = 94$ is imposed. The solution to Problem~\ref{prob:conOptMP} is then the trajectory whose cost lies at the intersection of the constraint boundary $\bar{c}_1 = 94$ (red) and the true Pareto front (black).

% In the bi-objective setting shown in Figure~\ref{fig:Ex1}, an upper bound of $\bar{c}_1 = 94$ is applied. Then, the solution to Problem~\ref{prob:conOptMP} would return a trajectory with a cost vector that lies at the intersection of the line at $\bar{c}_1 = 94$ (red) and the true Pareto front (black).

% This formulation optimizes along a single cost dimension, as the remaining ones need only to satisfy their respective constraints. 
% We show this problem is computationally difficult with $N \geq 3$.
% 
% However, when two or more objectives are unconstrained, the notion of optimality shifts to a set of optimal trade-offs rather than a single solution. In general, a multi-objective planning problem admits up to $2^N$ possible permutations of constrained elements. Among these, the most comprehensive formulation arises when no constraints are applied, corresponding to the Pareto-optimal case.

\subsection{Pareto-Optimal Motion Planning}
% We now consider the broader case of finding motion plans with Pareto optimal (optimal trade-off) costs.  That is, for any index $i \in \{1,\ldots,N\}$, cost $c_i$ cannot be made more optimal without making cost $c_j$, $j\neq i$, suboptimal.  
Lastly, we consider the setting in which no prioritization or constraints are imposed. Instead, the goal is to find a \emph{set} of motion plans that simultaneously optimize all objectives. Since costs are often competing — e.g., minimizing path length may reduce obstacle clearance — no single trajectory can, in general, optimize every objective. The goal, then, is to characterize the trade-offs among objectives by identifying the set of Pareto-optimal solutions. We formalize this using the notion of dominance.

% A plan is Pareto-optimal if, for any index $i \in \{1, \ldots, N\}$, improving cost $c_i$ necessarily leads to a degradation in at least one other cost $c_j$, where $j \neq i$. Again, 

% Given cost vectors $C,C' \in \mathbb{R}^N_{\geq 0}$, vector $C$ dominates $C'$ ($C \dom C'$) iff 
% 
\begin{definition}[Trajectory dominance]
    A valid trajectory $\pi \in \Pi$ is said to dominate another trajectory $\pi' \in \Pi$, denoted $\pi \dom \pi'$, iff, for every $i \in \{1, \ldots, N\}$, it holds that $c_i(\pi) < c_i(\pi')$.
    % A valid trajectory $\pi \in \Pi$ dominates another trajectory $\pi' \in \Pi$, denoted $\pi \dom \pi'$,
    % iff, for ever $i \in \{1,\ldots,N\}$, cost $c_i(\pi) < c_i(\pi')$.
    % , i.e.,
    % its cost vector dominates that of $\pi'$: 
    % \begin{align*}
    % \label{eq:pareto dominance}
       % \pi \dom \pi' \iff C(\pi) \dom C(\pi') \iff &c_i(\pi) < c_i(\pi') \\ &\forall i \in [1,N]
    % \end{align*}
\end{definition}

We extend the notion of dominance to cost vectors by:  $C(\pi) \dom C(\pi') \iff \pi \dom \pi'$. 

% A trajectory $\pi^* \Pi$ is called \textit{Pareto optimal} if it is not dominated by another trajectory, i.e., $\nexists \pi' \Pi$ s.t. $\pi' \dom \pi^*$.  
% The set of all Pareto 
% For a given motion planning problem, the Pareto front $\sol^*$ is defined as the set of all solutions with non-dominated costs.

% \begin{definition} (Pareto Front)
% The Pareto front $\sol^* \subseteq \sol$ is the set of motion plans such that no element in $ \sol^* $ is dominated by another trajectory in $ \sol $:
% \[ \sol^* = \{ \pi \in \sol \mid \nexists \, \pi' \in \sol \text{ s.t. } \pi' \dom \pi \}. \]
% \end{definition}

\begin{definition}[Pareto optimal \& Pareto front]
    A motion plan $\pi^* \in \sol$ is called \emph{Pareto optimal} if it is not dominated by any other motion plan, i.e., $\nexists \pi' \in \sol$ s.t. $\pi' \dom \pi^*$.  
    The set of all Pareto optimal motion plans is denoted by $\sol^* \subseteq \sol$. The set of cost vectors corresponding to all Pareto-optimal solutions $\C^*$ is called the \emph{Pareto front},
    % We denote the corresponding cost vector with $C(\pi^*)$, and the set of all Pareto-optimal cost vectors is called the \emph{Pareto front}.
    i.e., \[\C^* = \{C(\pi^*) \in \R^N_{\geq 0} \mid \pi^* \in \sol^*\}.\]
\end{definition}

% Each trajectory maps to a point in the objective space by evaluating it with respect to each objective, $C(\pi)$. The Pareto front can then be represent by a set of $N$-dimensional points:
% \[\C^* = \{C(\pi) \in \R^N \mid \pi \in \sol^*\}\]

% \ml{make sure the colors are correct.}
In Fig.~\ref{fig:Ex1a}, samples of Pareto-optimal trajectories are shown using a purple-green color scale. Their corresponding cost vectors, $C(\pi^*) \in \C^*$, are shown as diamonds in Fig.~\ref{fig:Ex1b}.

% Pareto front $C^*$ captures all possible optimal trade-offs of costs.  For a designer, it is powerful to know what optimal choices are possible and which plans achieve them.  In the second problem, we aim to obtain that exactly.

A Pareto front captures all possible optimal trade-offs among the cost objectives. For a designer, it is valuable to understand the range of optimal choices and the motion plans that realize them. We formalize the problem as follows. 
% We formalize the problem to compute the Pareto front by identifying the corresponding set of Pareto-optimal motion plans.

\begin{problem}[Pareto-Optimal Motion Planning]
    \label{prob:paretoOptMP}
    % Given dynamics as in \eqref{eq:dynamics}, state space $\X$ with obstacles $\X_O$ and goal $\X_G$, and an initial state $x_0 \in \X_f$, and $N$ cost objective, find the Pareto front of motion plans. 
    % \begin{align}
    %     \sol^* = &\{\pi \in \sol \mid \nexists \, \pi' \in \sol \text{ s.t } \pi' \dom \pi \}.
    % \end{align}    
    Consider the setting in Problem~\ref{prob:lexicoMP}.  Compute the Pareto front $\C^*$ and the corresponding set of Pareto-optimal motion plans $\sol^*$.
\end{problem}

% \ml{Add a paragraph on the overview of the approach.  Say that computing Lexicographic-optimal and Pareto-optimal motion plans is hard.  Instead, you build on existing near-optimal motion planners to address Problems 1 \& 2. Also, say that it is impossible to compute the entire Pareto front in finite time since it is continuous and requires an infinite number of points.  Instead, we compute a sparse version of it.}

% Multi-objective optimization Problems 1 and 2 are hard.  In fact, finding optimal solutions to kinodynamic motion problems is challenging even in the single-objective case, as it requires exploring both the state space for feasible trajectories and the objective space for cost improvement. In principle, an infinitesimal change to a motion plan could yield a better solution, so guaranteeing true optimality would require enumerating all feasible trajectories—hence, intractable.

% State-of-the-art planners (\cite{SST}, \cite{SPARS2}) address this by introducing sparsity into the search, retaining only trajectories that differ sufficiently in cost or state. This effectively transforms the problem into finding near-optimal solutions, where suboptimality is bounded by a user-defined sparsity metric.

\subsubsection*{Approach Overview}

% Problems~\ref{prob:lexicoMP} and \ref{prob:paretoOptMP} are multi-objective optimization problems subject to (nonlinear) kinodynamic constraints, which are challenging to solve.  Even in the single-objective case, kinodynamic motion planning is challenging because it requires exploring both the state space for feasible trajectories and the objective space for cost improvements. Computing a truly optimal motion plan is generally intractable since even small changes to a plan could yield better solutions. State-of-the-art planners (e.g., \cite{SST, SPARS2}) address this by introducing sparsity into the search, retaining only trajectories that differ sufficiently in cost or state, and using sampling and forward propagation to deal with kinodynamic constraints. This effectively transforms the problem into one of finding near-optimal solutions, where suboptimality is bounded by a user-defined sparsity metric, yielding asymptotically near-optimal planners.

Problems~\ref{prob:lexicoMP}-\ref{prob:paretoOptMP} are multi-objective optimization problems subject to nonlinear kinodynamic constraints; hence, they are nonconvex and difficult to solve. Even in the single-objective setting, kinodynamic motion planning is challenging as it requires searching both the state space for feasible trajectories and the objective space for cost improvements. In particular, computing a truly optimal motion plan is intractable, since even small perturbations to a candidate trajectory can yield strict improvements. 

State-of-the-art planners (e.g., \cite{SST, SPARS2}) mitigate this by introducing sparsity into the search, retaining only trajectories that differ sufficiently in cost or state, while relying on sampling and forward propagation to handle kinodynamic constraints. This relaxes the problem to finding near-optimal solutions with suboptimality bounded by a user-defined sparsity metric.
% , yielding asymptotically near-optimal algorithms.
% 
% We extend this idea to the multi-objective cases in Problems \ref{prob:lexicoMP} and \ref{prob:paretoOptMP}, ensuring near-optimality for all objectives and approximating the Pareto front with arbitrarily tight bounds to capture any desired trade-off.
% 
% Our approach extends this idea to the multi-objective setting, 

We extend these ideas to the multi-objective cases for Problems \ref{prob:lexicoMP}-\ref{prob:paretoOptMP},
by guaranteeing that near-optimal solutions are found with respect to the Pareto-point(s) of interest. For Problem \ref{prob:paretoOptMP}, we approximate the entire Pareto front with arbitrarily tight bounds, enabling accurate representation of any desired trade-off among objectives.

\section{Preliminaries}
\label{sec:prelim}

We begin by reviewing the Stable Sparse-RRT (SST)~\cite{SST} algorithm, which forms the foundation for our proposed algorithms.
% extensions to multi-objective planning. 
% We then discuss the limitations of framing a multi-objective optimization problem as a single-objective problem via cost scalarization, as is commonly considered in the literature.  
We then highlight the fundamental limitations of reducing multi-objective optimization to a single-objective problem through cost scalarization, as commonly adopted in the literature.

\subsection{Stable Sparse RRT (SST)}
\label{subsec:SST}

SST~\cite{SST} is a sampling-based kinodynamic planner that achieves asymptotic near-optimality for a single objective (cost) by growing a sparse motion tree in the state space $\X$ with cost-informed selection and pruning. Like RRT-based planners, it iteratively selects a node, propagates it forward under randomized controls, and adds a new state if feasible. To guide the search, SST samples a random state in $\X$ and selects the best-cost node (state) within a radius of $\delta_\text{BN} > 0$ for extension. 
% In addition, \textsc{SST} employs \textit{witness} nodes, i.e., each a neighborhood defined by a ball of radius $\delta_s > 0$ centered at $x \in \X$ (denoted $\mathcal{B}_{\delta_s}(x) = \{x' \in \X \mid \|x'-x\| \leq \delta_s \}$) is represented by the best cost node in the ball.
In addition, \textsc{SST} employs \textit{witness} nodes: each defining a neighborhood (hyper-ball) of radius $\delta_s > 0$ centered at $x \in \X$ (denoted by $\mathcal{B}_{\delta_s}(x) = \{x' \in \X \mid \|x' - x\| \leq \delta_s\}$), and represented by the lowest-cost node within that ball.
A new state becomes a \textit{representative} if it improves upon the cost of an existing representative, or if it lies outside all existing neighborhoods and creates a new one. This process ensures sparsity while monotonically improving trajectories toward low-cost solutions.

% In order to improve convergence, SST maintains a set of neighborhoods using \emph{witness} nodes $S \in \mathbb{S}$, defined as a ball $\mathcal{B}_{\delta_s}(x) \in \X$ of radius $\delta_s$ around a center state $S.x$. The best-cost node within the neighborhood of a witness is called its \emph{representative} ($S.\text{rep}$). Only such nodes are eligible for extension, finite tree spanning the state space.

% \ml{make this sentence simple and clear.}
% As is standard in the literature, SST solves a slight modification to the classical motion planning problem that finds a $\delta$-similar motion plan to a solution with a dynamic clearance of $\delta$ \cite[Lemma~6]{SST}
% \ml{what do you mean by this?}
% .

Due to the sparsity of the search tree, \textsc{SST} can only probabilistically guarantee the generation of a motion plan that is $\delta$-similar to an optimal solution $\pi^*$. 

\begin{definition}[$\delta$-Similar trajectories] 
    \label{def:delta similar}
    Two trajectories $\pi, \pi'  \in \Pi$ with time durations $T_\pi$ and $T_{\pi'}$, respectively, are \emph{$\delta$-similar} if, for a continuous, non-decreasing scaling function $\sigma:[0, T_\pi] \rightarrow [0, T_{\pi'}]$, it is true that $\pi'(\sigma(t)) \in \mathcal{B}_\delta(\pi(t))$.
\end{definition}

The \emph{obstacle clearance} of a valid trajectory $\pi \in \Pi$
is the minimum distance from $\pi$ to the obstacle set $X_O$. The \emph{dynamic clearance} of $\pi$ is the maximum distance $\delta_d$ that the start and end points of $\pi$ can be displaced such that a new, $\delta_d$-similar trajectory is feasible according to the dynamics in \eqref{eq:dynamics} 
% $\delta_d$ 
% from every point along $\pi$ that can be reached by some admissible controls while remaining in $X_\free$ 
(see Definition~4 and Lemma~6 in \cite{SST} for details).  
A trajectory is called \textit{$\delta$-robust} if both its obstacle and dynamic clearances exceed $\delta$.

% Effectively, this requires that a solution exist with at least $\delta$ clearance from any obstacles along the trajectory, defined as $\delta$-robust. 

% The near-optimality of SST directly extends from this result. Given that SST can find a  $\delta$-similar trajectory to the optimal solution. 

% Additionally, the theoretical analysis of SST establishes it is provably $\delta$-robustly complete and asymptotically near-optimal.

% \begin{definition}
%     ($\delta$-Robust Feasible Motion Planning) Given a dynamical system, the collision free subset, an initial state $x_0$, a goal region $\mathbb{X_G}$, and that a $\delta$-robust trajectory connecting $x_0$ and an $x_g \in \mathbb{X_G}$ exists, find a solution $\pi$ for which $\pi(0)=x_0$ and $\pi(t_\pi) \in \mathbb{X_G}$.
% \end{definition}

% For brevity, we will only provide the main results of the SST proofs here, and encourage the reader to consult \cite{SST} for a more detailed treatment.
    
Consequently, SST ensures the approximation of $\delta$-robust solutions and is therefore probabilistically $\delta$-robustly complete and asymptotically $\delta$-robustly near-optimal, as defined below.

% \begin{definition}[$\delta$-Robust Feasible Motion Planning] 
%     \label{def:deltaRobust}
%     Consider a robot with dynamics described by \eqref{eq:dynamics}, an obstacle region $\X_O \in \X$, an initial state $x_0 \in \X_f$, and a goal region $\X_G \subseteq \X$. Given that a $\delta$-robust trajectory connecting $x_0$ and $\X_G$ exists, find a valid motion plan.
% \end{definition}

\begin{definition}[Probabilistic $\delta$-Robust Completeness]
    \label{def:delta-robust-completeness}
    Let $\Pi_n^\textsc{alg}$ denote the set of trajectories discovered by an algorithm \textsc{alg} at iteration $n$. Algorithm \textsc{alg} is probabilistically $\delta$-robustly complete if, for every motion planning problem where there exists at least one $\delta$-robust solution trajectory, the following holds for all independent runs:
    \begin{equation*}
        \liminf_{n \to \infty} \;\;\; \mathbb{P}\left( \exists \pi \in \Pi_n^{\textsc{alg}} \text{ s.t. } \pi \in \sol \right) = 1.
    \end{equation*}
\end{definition}

\begin{definition}[Asymptotic $\delta$-robust Near-Optimality]
    \label{def:asymp_delta_near_opt}
    Consider a motion planning problem with a single cost and assume there exists at least one $\delta$-robust motion plan. Denote by $c_\delta^*$ the minimum achievable cost over all $\delta$-robust solution trajectories, and let $Y_n^{\textsc{alg}}$ denote a random variable representing the minimum cost among all trajectories returned by algorithm $\textsc{alg}$ after iteration $n$. The algorithm $\textsc{alg}$ is \emph{asymptotically $\delta$-robustly near-optimal} if for all independent runs:
    \begin{align*}
        \mathbb{P}\left(\limsup_{n \to \infty} \;\; Y_n^{\textsc{alg}} \leq h(c_\delta^*, \delta)\right) = 1,
    \end{align*}
    where $h: \mathbb{R}_{\geq 0} \times \mathbb{R}_{\geq 0} \rightarrow \mathbb{R}_{\geq 0}$  is a function of the optimum cost $c_\delta^*$ and the $\delta$ clearance such that $h(c_\delta^*, \delta) \geq c_\delta^*$.
\end{definition}

% Then, given that a $\delta$-robust solution trajectory $\pi$ exists, the analysis of SST shows that a $\delta$-similar trajectory can be found, and is thus probabilistically $\delta$-robust complete. Further, it can find a $\delta$-similar trajectory to the optimal solution $\pi^*$ as the iterations $n\rightarrow \infty$. 

% Since this solution can only deviate from the optimal path by $\delta$, and the cost function is assumed to be Lipschitz continuous (Assumption~\ref{assumption:CostFunciton}), a bound on the sub-optimality can be defined.
% 
% \begin{equation*}
%     C(\pi) \leq (1 +\frac{K_c\cdot \delta}{C_\Delta})\cdot C(\pi^*)  
% \end{equation*}
% 

The probabilistic completeness and near-optimality analyses of our algorithms build on these properties of \SST, extending them to the multi-cost setting. In Section~\ref{sec:analysis}, we show that our approach
maintains the ability to find $\delta$-similar trajectories to any $\delta$-robust solution of interest, which, most generally, includes 
all non-dominated (Pareto-optimal) solutions.

\subsection{Scalarized Cost Function}
\label{subsec:scalarization}

A common approach to multi-objective optimization is to aggregate multiple costs into a single scalar objective, thereby enabling the use of standard cost-aware planners (e.g., SST). In this subsection, we briefly review two common scalarization methods, namely, \emph{weighted sum} (WS) and \emph{weighted maximum} (WM), and explain why they fail to address Problems~\ref{prob:conOptMP} and~\ref{prob:paretoOptMP}. Further, in Section~\ref{subsec:lexPref}, we show that no scalarization method can recover the lexicographic minimum in continuous domains, making them unsuitable for Problem~\ref{prob:lexicoMP}.

\subsubsection{Weighted-Sum (WS) Scalarization}

The most common scalarization is the weighted sum, where the quality of a trajectory $\pi$ is described by a scalar cost 
\begin{align}
    \label{eq: WS cost}
    c_\text{ws}(\pi) = \sum_{i=1}^N w_i c_i(\pi), \qquad w_i \in \mathbb{R}_{\geq 0},
\end{align}    
where weights $w_i$ encode the relative importance of objectives.

Minimizing $c_\text{ws}$ produces a single Pareto-optimal trade-off, $\pi^* = \arg\min_{\pi \in \sol} c_\text{ws}(\pi)$. However, because this scalarization is a linear combination of objectives, it can only recover Pareto points lying on the convex hull of the Pareto front. Consequently, WS minimization cannot capture non-convex Pareto points~\cite {branke2008multiobjective} and is thus incomplete for Problems~\ref{prob:conOptMP} and~\ref{prob:paretoOptMP}.

\subsubsection{Weighted-Maximum (WM) Scalarization}

An alternative approach is the WM scalarization~\cite{WeightedMax}, which substitutes the summation of WS with a maximization operator, 
\begin{align}
    \label{eq: WM cost}
    c_\text{wm}(\pi) = \max_{i \in \{1,\dots,N\}} w_i c_i(\pi).
\end{align}

While WM avoids the convexity limitations of weighted sums, it still yields only a single Pareto point for each weight selection. For Problem~\ref{prob:conOptMP}, although suitable weights may exist to target a particular Pareto point (i.e., $\conMin$), determining them a priori is generally not possible. For Problem~\ref{prob:paretoOptMP}, computing the full Pareto set would require repeatedly reinitializing a planner with different weights and then solving multiple independent planning problems. 
In contrast, our approach maintains and refines non-dominated trajectories within a single search tree, enabling simultaneous approximation of the entire Pareto set.

\section{Lexicographic-Optimal Motion Planning}
\label{sec:lexSST}
% 
% 
% In this section, we present our approach to Problem~\ref{prob:lexicoMP}, lexicographic-optimal motion planning. We first demonstrate that such an ordering relation cannot be captured by any cost scalarization, and then introduce our approach to approximate the solution with bounded error in bi-objective settings. We then extend this method in Section~\ref{sec:poSST} to address any $N$-dimensional cost problems.
In this section, we present our approach to Problem~\ref{prob:lexicoMP}. We first show that a lexicographic ordering relation cannot be captured by any cost scalarization, then introduce a method to approximate the lexicographic-optimal solution with bounded error in bi-objective settings. In Section~\ref{sec:poSST}, we extend this approach to address $N$-dimensional cost spaces.
% and Problem~\ref{prob:paretoOptMP}
% —which targets the full set of Pareto-optimal solutions—
% by allowing users to select their preferred trade-off.

\subsection{Limitations of Extending Lexicographic Ordering to Continuous Domains}
\label{subsec:lexPref}

% Based on answer from https://economics.stackexchange.com/questions/6889/lexicographic-preference-relation-cannot-be-represented-by-a-utility-function.

When costs are limited to \emph{integers}, i.e., vector $C(\pi) \in \mathbb{N}^N$, lexicographic ordering can be neatly captured by a utility function that assigns a unique scalar value to each cost vector according to a fixed priority among dimensions. Specifically, for a set of cost vectors $\mathcal{C} = \{C_1, C_2, \dots, C_N\}$ with each $C_i \in \mathbb{N}^N$, a lexicographic dominance relation $\succ_{\text{lex}}$ can be encoded using a utility function $u : \mathbb{N}^N \to \mathbb{R}$ that respects the ordering, i.e., $C \succ_{\text{lex}} C' \iff u(C) > u(C')$. 

A simple construction can exploit the discreteness of the domain (i.e., $\mathbb{N}$) of each cost component by assigning strictly decreasing scaling coefficients, so that any unit improvement in a higher-priority dimension outweighs the maximum possible variation across all lower-priority dimensions. Thus, the first component determines the primary ordering, the second breaks ties, and so on. For example, for a weight $M \geq \max_{\pi \in \sol} \| C(\pi) \|_\infty$, the utility function
\begin{equation}
% u(C(\pi)) = (M+1)^{N-1} c_1(\pi)  + (M+1)^{N-2} c_2(\pi)  + \cdots + c_N(\pi).
u(C(\pi)) = \sum_{i = 1}^N (M+1)^{N-i} c_i(\pi).
\end{equation}
%
% where $M \in \mathbb{N}_{\geq 0} $ bounds the largest component value.
ensures lexicographic ordering.
However, when costs are defined in \emph{continuous domains}, i.e., $ C(\pi) \in \mathbb{R}^N$, which is the case in this work, lexicographic ordering $\succ_{\text{lex}}$ can \emph{not} be represented by any utility function  $u: \mathbb{R}^N \to \mathbb{R}$. The following theorem formalizes this limitation:

\begin{theorem}
\label{thrm:lexOrder}
% Consider two cost vectors $C, C' \in \mathbb{R}^N_{\geq0}$ and utility function $u : \mathbb{R}^N_{\geq0} \rightarrow \mathbb{R}$, then \emph{there does not exist any function} that represents $\succ_{\text{lex}}$ such that $C \succ_{\text{lex}} C' \iff u(C) > u(C')$ for all $C, C' \in \mathbb{R}^N_{\geq0}$.
There does \emph{not} exist a utility function $u : \mathbb{R}^N \rightarrow \mathbb{R}$ that represents the lexicographic dominance $\succ_{\text{lex}}$, i.e., $\nexists u$ s.t., for every $C,C' \in \reals^{N}_{\geq 0}$, \ $C \succ_{\text{lex}} C' \iff u(C) > u(C')$.
% for all $C, C' \in \mathbb{R}^N_{\geq0}$.
\end{theorem}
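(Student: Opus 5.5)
We follow the classical Debreu argument that lexicographic preferences on $\reals^2$ admit no real-valued representation, and reduce the general $N$ case to it. Note first that in the statement $\succ_{\text{lex}}$ points the ``wrong'' way relative to minimization: $C \succ_{\text{lex}} C'$ means $C$ is lexicographically smaller. The direction is irrelevant to the argument, since $u$ works if and only if $-u$ works for the reversed order. We therefore only use that $u$ is strictly monotone with respect to a strict total order on $\reals^N_{\geq 0}$ that is lexicographic in the first two coordinates.

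\textbf{Step 1 (reduction to two coordinates).} Suppose such a $u$ exists. Fix the last $N-2$ coordinates at $0$ and restrict attention to vectors $(a,b,0,\dots,0)$ with $a \in [0,1]$ and $b \in \{0,1\}$. On these vectors, Definition~\ref{def: lexic} says that $(a,b,0,\ldots)$ dominates $(a',b',0,\ldots)$ if and only if $a<a'$, or $a=a'$ and $b<b'$. Hence for each $a$,
\[
u(a,0,0,\ldots,0) > u(a,1,0,\ldots,0),
\]
and the relation $u(C)>u(C')$ reproduces this order exactly on these vectors.

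\textbf{Step 2 (disjoint intervals).} For each $a\in[0,1]$ define the nondegenerate open interval
\[
I_a=\bigl(u(a,1,0,\ldots,0),\,u(a,0,0,\ldots,0)\bigr).
\]
For $a<a'$, the vector $(a,1,\ldots)$ lexicographically dominates $(a',0,\ldots)$, since the first coordinate is strictly smaller. Therefore
\[
u(a,1,\ldots) > u(a',0,\ldots),
\]
so every point of $I_a$ lies strictly above every point of $I_{a'}$. The intervals $\{I_a\}_{a\in[0,1]}$ are thus pairwise disjoint. Only the forward implication $C\succ_{\text{lex}}C' \Rightarrow u(C)>u(C')$ is needed for this step.

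\textbf{Step 3 (countability contradiction).} Each $I_a$ contains a rational number $q(a)$. By disjointness, $a\mapsto q(a)$ is injective from $[0,1]$ into $\mathbb{Q}$. This is impossible, because $[0,1]$ is uncountable and $\mathbb{Q}$ is countable. Hence no such $u$ exists. For $N=1$ the theorem would be false, but the paper assumes $N\geq 2$, which is exactly what this argument requires.

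The only delicate points are bookkeeping rather than real obstacles. The first is getting the direction of dominance right in Step 2, so that the intervals are ordered consistently and hence disjoint. The second is checking that the restriction to vectors with trailing zero coordinates stays inside $\reals^N_{\geq 0}$ and preserves lexicographic comparisons; it does, because ties in the first two coordinates never occur between distinct chosen vectors.
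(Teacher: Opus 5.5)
Your proof is correct and follows the paper's argument: for each value of the first coordinate you build a nondegenerate interval from the two $u$-values at second coordinate $0$ and $1$, show these intervals are pairwise disjoint, and inject an uncountable set into $\mathbb{Q}$. Your explicit reduction to $N=2$, which fixes the trailing coordinates at $0$, justifies the $N>2$ case more cleanly than the paper's brief remark that the argument ``generalizes naturally.''
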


The proof is provided in Appendix~\ref{proof:lexOrder}. Intuitively, it relies on the fact that a real-valued utility function has insufficient ``bandwidth" to encode any more than one continuous cost component. To faithfully capture lexicographic ordering, the utility must dedicate every element of its range to resolving differences in the first cost (i.e., $c_1$); then, no values would be available to encode secondary considerations in $c_2$, and so on. 

\begin{corollary}
    % There does not exist a selection of weights $W = \{w_i\}_{i=0}^N$, such that optimization problem $\pi^* = \arg\min_{\pi \in \sol} u^W(\pi)$ is guaranteed to return the lexicographic optimal solution, where $u^W := \{c^W_\text{ws}, c^W_\text{wm}, \dots\}$.
    There do not exist weights $w_1,\ldots,w_N \in \reals_{\geq 0}$ 
    such that the minimization of
    the weighted-sum cost $c_\text{ws}$ in~\eqref{eq: WS cost} or the weighted-max cost $c_\text{wm}$ in~\eqref{eq: WM cost} 
    guarantees the lexicographically optimal solution $\lexMin$.
\end{corollary}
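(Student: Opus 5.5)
The plan is to argue directly by counterexample rather than through Theorem~\ref{thrm:lexOrder}. Theorem~\ref{thrm:lexOrder} only says that no $u$ reproduces $\lexDom$ on all of $\reals^N_{\geq 0}$. A minimization guarantee concerns only which element of a given finite set of achievable cost vectors is selected, so the theorem does not immediately imply the corollary. However, the intuition behind it (a scalar trades a finite amount of $c_1$ against a finite amount of $c_2$) gives the construction. It suffices to treat $N=2$: for $N>2$ I make $c_3,\dots,c_N$ equal on all plans of the instance, which is allowed since a lexicographic tie there is harmless. Fix arbitrary weights $w_1,w_2\geq 0$. I will exhibit an instance whose only motion plans, up to equivalence, are two plans $\pi$ and $\pi'$ with $\pi \lexDom \pi'$, yet $\pi'$ is a minimizer of $c_\text{ws}$ (resp.\ $c_\text{wm}$). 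Then minimization does not guarantee $\lexMin$.

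\emph{Step 1 (cost vectors for WS).} If $w_2>0$, take $C(\pi)=(a,b)$ and $C(\pi')=(a+\eta,\,0^+)$ with $\eta>0$ and $b$ large enough that $w_1\eta < w_2 b$. Then $\pi\lexDom\pi'$ since $c_1(\pi)<c_1(\pi')$, but $c_\text{ws}(\pi')<c_\text{ws}(\pi)$. If $w_2=0$, take $C(\pi)=(a,0^+)$ and $C(\pi')=(a,b)$. The two plans tie in $c_\text{ws}$, so $\pi'$ is an admissible $\arg\min$ although $\pi\lexDom\pi'$; the minimization therefore does not guarantee $\lexMin$. (I use $0^+$ rather than $0$ because non-degeneracy in Assumption~\ref{assumption:CostFunction} forbids exactly zero cost on a plan of positive duration; small positive values work identically.)

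\emph{Step 2 (cost vectors for WM).} If $w_1,w_2>0$, take $C(\pi)=(a,b)$ and $C(\pi')=(a+\eta,\beta)$ with $w_2 b > \max\{w_1(a+\eta),\,w_2\beta\}$. Then $c_\text{wm}(\pi)=w_2b>c_\text{wm}(\pi')$, while $\pi\lexDom\pi'$. If $w_2=0$, use the same tie construction as for WS. If $w_1=0<w_2$, take $C(\pi)=(a,b)$ and $C(\pi')=(a+\eta,\beta)$ with $\beta<b$, so that $\pi'$ strictly wins the weighted maximum.

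\emph{Step 3 (realizing the vectors by a planning instance).} This is the step I expect to be the main obstacle, because the cost vectors must come from actual trajectories and from cost functions satisfying Assumption~\ref{assumption:CostFunction}. My plan is to use simple integrator dynamics and an obstacle that splits $\X_f$ into two disjoint corridors from $x_0$ to $\X_G$. Each cost will be an integral $c_i(\pi)=\int_0^T g_i(\pi(t))\,dt$ with Lipschitz densities $g_i>0$ bounded away from zero. Such costs are additive, monotone, non-degenerate and Lipschitz. I then choose $g_1,g_2$ piecewise constant (smoothed) on each corridor so that the cheapest traversal of corridor one has cost approximately $(a,b)$ and that of corridor two approximately $(a+\eta,\beta)$. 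Because all strict inequalities in Steps~1--2 have slack, small perturbations do not affect the argument. In the tie cases, I make $g_1$ identical on both corridors so that $c_1$ ties exactly.

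The delicate point in Step~3 is that the minimizers within each corridor must remain ordered as required, since each corridor contains a continuum of plans. I would handle this by making the densities uniform across each corridor, so that within a corridor both costs are proportional to the time spent. The best plan in a corridor then minimizes both costs simultaneously, which reduces the comparison to the two corridor-optimal plans and completes the argument.
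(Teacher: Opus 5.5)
Your argument is correct, but it takes a different route from the paper. The paper gives no separate proof. The corollary is presented as an immediate consequence of Theorem~\ref{thrm:lexOrder}, applied to the particular utilities $u=-c_\text{ws}$ and $u=-c_\text{wm}$.

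That route is more general: it rules out every scalar utility at once, not just WS and WM. It is also non-constructive (a cardinality argument), and it leaves two steps implicit. The first is the passage from ``$u$ does not represent $\lexDom$ on $\reals^N_{\geq 0}$'' to ``some planning instance has a lexicographically dominated minimizer.'' The second is the realizability of the offending cost vectors by costs satisfying Assumption~\ref{assumption:CostFunction}.

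Your route trades generality for concreteness. For each fixed weight vector you exhibit the bad pair explicitly: a finite exchange rate between $c_1$ and $c_2$, or a tie when $w_2=0$. You also handle zero weights and the non-degeneracy obstruction to zero costs (your $0^+$), and your Step~3 supplies the realizability construction the paper omits.

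One correction to your framing: Theorem~\ref{thrm:lexOrder} does yield the vector-level counterexample. Since $\lexDom$ is total on distinct vectors, any failure of the equivalence for $u=-c_\text{ws}$ produces some $C\lexDom C'$ with $c_\text{ws}(C)\geq c_\text{ws}(C')$. So the only genuinely missing ingredient in the paper's deduction is exactly your Step~3.

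It is worth stating explicitly what your $w_2=0$ case already reflects. For weights $(w_1,0,\dots,0)$ with $w_1>0$, every minimizer of $c_\text{ws}$ or $c_\text{wm}$ minimizes $c_1$, and $\lexMin$ is among them, so the failure can only be a tie. Hence the corollary, by either route, holds only under the reading that every minimizer (equivalently, the one returned by arbitrary tie-breaking) must be lexicographically optimal.

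Finally, for $N>2$ you should make $c_3,\dots,c_N$ equal on the two corridor-optimal plans rather than on all plans. Non-degeneracy plus additivity forbids a cost that is constant over all motion plans, since extending a plan strictly increases it. In the WM case, equal higher-index terms can turn your strict inequality into a tie, which still suffices under that reading.
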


This result implies that Problem~\ref{prob:lexicoMP} cannot be solved by any scalarization method (e.g., WS and WM). Additionally, algorithmic approaches that rely on scalar cost metrics (e.g., \textsc{SST}) or optimization methods that use smooth or continuous utility functions cannot properly model lexicographic objectives. Our proposed solution, detailed below, retains information about the entire cost vector and can therefore explicitly handle the desired preference structure.

% \subsection{$\epsilon$-Equivalence for Lexicographic Minimization in $\R^N$}
\subsection{\texorpdfstring{$\epsilon$}{epsilon}-Equivalence for Lexicographic Minimization in \texorpdfstring{$\R^N$}{R\string^N}}
\label{subsec:e-eqivalence}
Before presenting our approach, we first argue that, given the nature of sampling-based methods, a strict notion of lexicographic dominance is unsuitable; instead, we introduce a notion of $\epsilon$-equivalence, formalized below. 

When sampling from a continuous control space and integrating the dynamics in \eqref{eq:dynamics}, the likelihood of obtaining two trajectories $\pi$ and $\pi'$ that yield identical costs $c_i(\pi) = c_i(\pi')$ is vanishingly small. Consequently, lexicographic dominance effectively reduces to a scalar comparison on the highest-priority cost $c_1$, precluding meaningful consideration of lower-priority costs.

To overcome this, we introduce the notion of $\epsilon$-equivalence, wherein cost components that differ by less than an $\epsilon > 0$ tolerance are treated as ``equivalent''. 
% This allows the planner to look beyond the primary cost $c_1$ when appropriate and incorporate trade-offs among lower-priority objectives. 
%
Specifically, let $\Pi^{\epsilon}_0 \subseteq \sol$ be a set of non-dominated trajectories, and let the minimum primary cost in this set be denoted by $\hat c^*_1 = \min_{\pi\in \Pi^{\epsilon}_0} c_1(\pi)$.
% such that 
% $\bar c^*_1 = \min_{\pi\in \Pi}c_1(\pi)$ 
We first construct a set of $\epsilon$-equivalent trajectories in $c_1$ as 
$$\Pi^\epsilon_1 = \{\pi \in \Pi^{\epsilon}_0 \mid c_1(\pi) \leq \hat c^*_1 + \epsilon_1 \}.$$ 
Then, we can approximate $\lexMin$ by finding the minimum secondary cost within this set, i.e., $\lexApprox = \min_{\pi\in \Pi^\epsilon_1} c_2(\pi)$. 

This procedure can be recursively applied for $N>2$ objectives such that $\Pi^\epsilon_i = \{\pi \in \Pi^\epsilon_{i-1} \mid c_i(\pi) \leq \hat c^*_i + \epsilon_i \}$, and the lexicographic $\epsilon$-optimal trajectory computed by $$\lexApprox = \arg\min_{\pi\in \Pi^\epsilon_{N-1}} c_N(\pi).$$ 
We refer to $\Pi^{\epsilon}_{N-1}$ as the \textit{$\epsilon$-minimal set}.

We now examine the properties of using $\epsilon$-equivalence in bi-objective settings and its limitations for systems with $N>2$ objectives.

\subsubsection{Case of $N = 2$} 
\label{subsec:eps-equivalence-2}
\begin{wrapfigure}{r}{0.225\textwidth}
    \centering
    \vspace{-10pt}
    \includegraphics[width=0.20\textwidth]{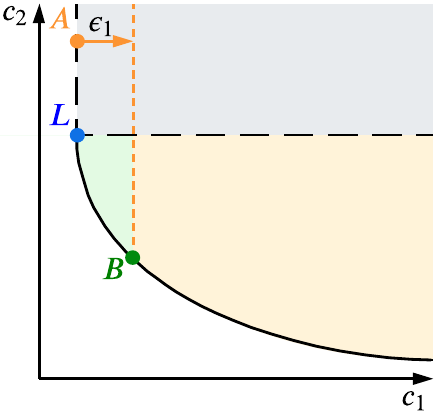}
    \caption{Approximating $\lexMin$ using $\epsilon$-equivalence when $N=2$. 
    % Minimizing $c_2$ returns a point in the green region if $\pi_L$ is found.
    }
    \label{fig:2d_lex}
    \vspace{-10pt}
\end{wrapfigure}

Consider a 2D Pareto front as illustrated in Fig.~\ref{fig:2d_lex}. Let point $A$ represent the cost of a solution trajectory $\pi_A$ whose first cost component $c_1(\pi_A)$ achieves the global minimum $c_1^*$, but whose secondary cost $c_2(\pi_A)$ is significantly suboptimal. 
Let point $L$ denote the cost of the lexicographic optimum $\pi_L = \lexMin$.

Assume a motion planner finds solutions $\hat \pi_A$ and $\hat \pi_L$ that approximate $\pi_A$ and $\pi_L$ arbitrarily closely. Then, $\hat \pi_L$ may not strictly dominate $\hat \pi_A$, but rather satisfy $c_1(\hat \pi_L) \in [c_1^*, c_1^* + \epsilon_1]$. Consequently, $\hat \pi_L$ is guaranteed to be included in the (first-stage) $\epsilon$-equivalent set $\Pi^\epsilon_1$, and minimizing $c_2$ over $\Pi^\epsilon_1$ will ensure a trajectory $\lexApprox$ that lies in the green region (Fig.~\ref{fig:2d_lex}).
% \[
% c_1(\lexApprox) \leq c_1^* + \epsilon_1, \quad c_2(\lexApprox) \leq c_2(\pi_{\text{lex}}).
% \]

Therefore, for a algorithm that can approximate a solution trajectory $\pi \in \sol$ with bounded (cost) error $$\vec e_\textsc{alg} = (|c_1(\pi) - c_1(\hat \pi)|, \ |c_2(\pi) - c_2(\hat \pi)|) \in \reals^{2}_{\geq 0},$$ this approach yields a solution with error $\vec e_\lex = \vec e_\textsc{alg} + (\epsilon_1, 0)$ from the true lexicographic optimal. Clearly,  as $\epsilon_1 \to 0$, $\vec e_\lex \to \vec e_\textsc{alg}$ monotonically.
% 
%  \begin{figure}[tpb]
%     \centering
%     \includegraphics[width=0.2\textwidth]{images/2DParetoFront.pdf} 
%     \caption{Approximating $\lexMin$ using $\epsilon$-equivalence when $N=2$. Minimizing $c_2$ returns a point in the green region if $L$ is found.}
%     \label{fig:2d_lex}
% \end{figure}

% 
\subsubsection{Case of $N > 2$}
\label{subsec:eps-equivalence-3}
Now consider a 3D Pareto front, as shown in Fig.~\ref{fig:3DExample}, where each point on the surface corresponds to a cost vector of a Pareto-optimal trajectory. Using the same procedure, we construct $\Pi^\epsilon_1$ by including all solutions with $c_1 < c_1^* + \epsilon_1$, visualized as a half-space bounded by the orange plane in Fig.~\ref{fig:3DExample half space}. Within this set, we compute $\hat{c}_2^*$, the best observed secondary cost (located at point $B$), and form $\Pi^\epsilon_2$ by retaining only those elements with $c_2 < \hat{c}_2^* + \epsilon_2$, represented by the green plane. 
% \ml{caption says green?}

\begin{figure}[t]
    \centering
    % \begin{subfigure}{0.32\linewidth}
    %         \caption{}
    % \end{subfigure}
    % \begin{subfigure}{0.32\linewidth}
    %         \includegraphics[width=1\textwidth]{images/3DPlane1.png}
    %         \caption{}
    % \end{subfigure}
    \begin{subfigure}{0.36\linewidth}
            \includegraphics[width=1\textwidth]{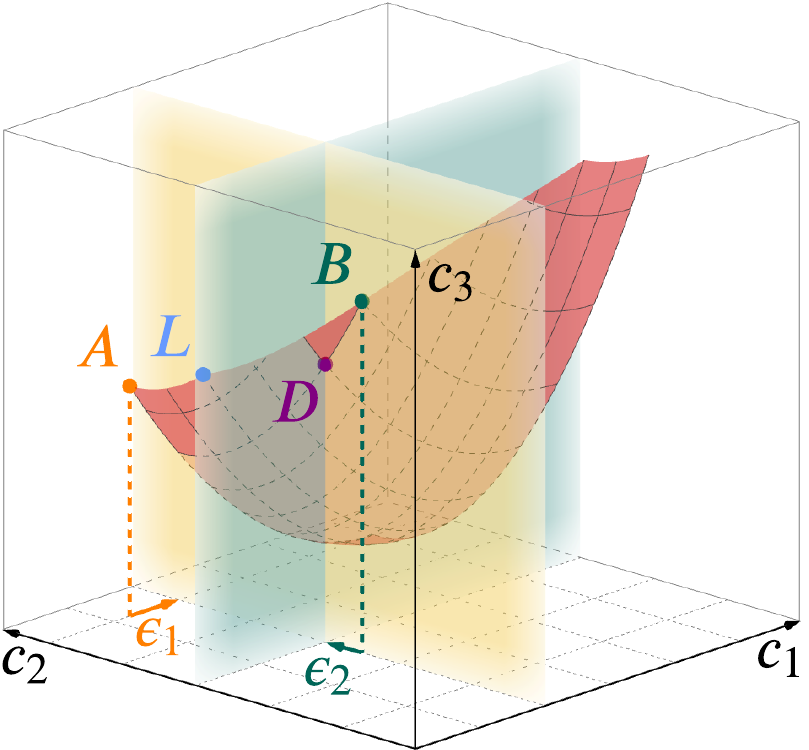}
            \caption{}
            \label{fig:3DExample half space}
    \end{subfigure}
    % \begin{subfigure}{0.32\linewidth}
    %         \includegraphics[width=\textwidth]{images/2DPlane1.png}    
    %         \caption{}
    % \end{subfigure}
    \begin{subfigure}{0.305\linewidth}
            \includegraphics[width=1\textwidth]{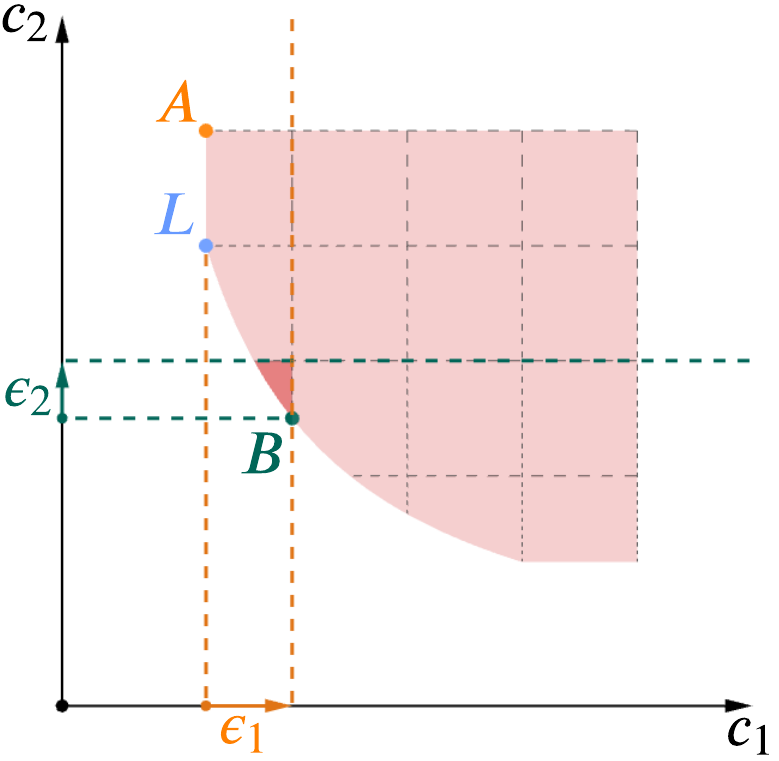}
            \caption{}
    \end{subfigure}
    \begin{subfigure}{0.305\linewidth}
            \includegraphics[width=1\textwidth]{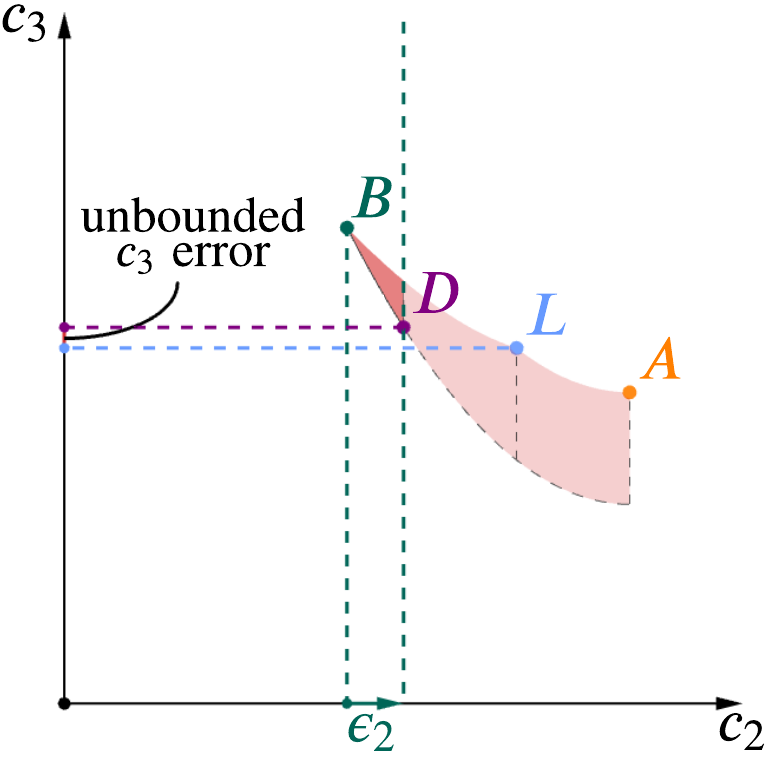}
            \caption{}
    \end{subfigure}
    \caption{Example in which $\epsilon$-equivalence fails to capture a bound on the true lexicographic optimum (point $L$). (a) Pareto front for three cost objectives. Half-space partitions representing filtering at $\Pi^\epsilon_1$ (orange) and $\Pi^\epsilon_2$ (green). (b) Projection onto the $c_1$-$c_2$ plane. (c) Projection onto the $c_2$-$c_3$ plane.} 
    % The procedure in Algorithm~\ref{alg:PruneLexSet} returns the point $D$, the minimization of the set of solutions bounded by the two half spaces $c_1 \leq c_1^* + \epsilon_1 \wedge c_2 \leq \bar{c}_2^* + \epsilon_2$.
    % \yr{Increase font size}\ml{Yes}
    \label{fig:3DExample}
\end{figure}

Crucially, because $\hat{c}_2^*$ may be less than  $c_2(\lexMin)$ (i.e., point $c_2(B) < c_2(L)$), the second filtering step may discard the true lexicographic optimal point altogether (Fig.~\ref{fig:3DExample}b). Consequently, the final set used to minimize $c_3$ may not include $\lexMin$, and the resulting solution $\lexApprox$ may have an arbitrarily poor $c_3$ cost (point D, Fig.~\ref{fig:3DExample}c). That is, $|c_3(\lexApprox) - c_3(\lexMin)| \in [0, \infty).$
% 
% \[
%  | c_3(\lexApprox) - c_3(\lexMin)| \in [0, \infty).
% \]
% 

The source of this failure lies in the compounded filtering: once $\lexMin$ is excluded at any intermediate step $i$, it can no longer influence the minimization of subsequent cost components $c_{i+1}, \dots, c_N$. 
% Since no guarantee exists that $c_2(\pi_{\text{lex}}) \leq \bar{c}_2^* + \epsilon_2$, even arbitrarily small values of $\epsilon_1$ and $\epsilon_2$ cannot prevent this exclusion, especially if the Pareto front has steep geometry. 
Thus, while $\epsilon$-equivalence offers a practical heuristic, it fails to guarantee bounded approximation error for problems with more than two objectives. Nevertheless, our approach for approximating the full Pareto set $\sol^*$ (detailed in Section~\ref{sec:poSST}) can recover a near-optimal solution with respect to the true lexicographic optimal, since, by definition, it is a Pareto-optimal solution (i.e., $\lexMin \in \sol^*$).

% The recursive nature of $\epsilon$-minimal filtering ensures bounded error only when a single threshold is applied—i.e., in two-objective settings. For higher-dimensional cost vectors, the guarantee of recovering the true lexicographic minimum fails unless all intermediate filters include $\pi_{\text{lex}}$, which cannot be ensured by any choice of $\vec\epsilon \neq \mathbf{0}$. Thus, while $\epsilon$-equivalence offers a practical heuristic, it fails to guarantee bounded approximation error for problems with more than two objectives.

\subsection{\lexSST}
\label{subsec:lexSST}

Having clarified these nuances of lexicographic optimization in continuous cost spaces, we now introduce our approach to Problem~\ref{prob:lexicoMP}, lexicographic \SST (\lexSST).

\lexSST is an adaptation of \SST, in that the algorithm iteratively grows a search tree $G = (V, E)$ and uses a set of witness nodes $S \subseteq V$ to define local neighborhoods $\mathcal{B}_{\delta_s}(s)$ about every $s \in S$ (refer to Section~\ref{subsec:SST}). However, instead of each witness maintaining a single representative (lowest cost) node, \lexSST maintains a \textit{representative set} $R_s$ of viable candidates in $\mathcal{B}_{\delta_s}(s)$. 
Our main contribution is the extension from a single representative in the single-cost case to a representative set in the multi-objective setting. 
All our proposed algorithms — \lexSST, \conSST, and \poSST — leverage this structure to solve their respective problems and differ only in their definitions of $R_s$.

In each case, the representative set begins by considering non-dominated nodes at witness $s$. We define the \textit{local Pareto set of nodes} within neighborhood $\mathcal{B}_{\delta_s}(s)$ as,
\[\ballNodes := \left\{ x \in V \cap \mathcal{B}_{\delta_s}(s) \;\middle|\; \nexists \, x' \in V \cap \mathcal{B}_{\delta_s}(s) \text{ s.t. } x' \succ x \right\}.\]

For \lexSST, we define the representative set as $\repLex := L^{\epsilon}(\ballNodes)$,
where $L^{\epsilon}(\cdot)$ returns the $\epsilon$-minimal set of nodes according to the procedure described in Section~\ref{subsec:e-eqivalence}.

\begin{algorithm}[t]
    \caption{\lexSST}
    \label{alg:lexSST}
    \KwIn{$\X$, $\U$, $x_0$, $\Sigma$, $T_{prop}$, $\N$, $\delta _{BN}$, $\delta _s$, $\vec\epsilon$}
    \KwOut{$\lexApprox$}
    $\Sols\leftarrow \emptyset$ \;
    $G = \{V\leftarrow \{x_0\}, E\leftarrow\emptyset\}$\;
    $s_0 \leftarrow x_0, s_0.R = \{x_0\}$, $S \gets s_0$\;
    \For{$\N$ iterations}
    {
        $x_{selected}\leftarrow\textsc{ParetoSelect}(\X, V,\dps)$\;
        $x_{new}\leftarrow\textsc{MonteCarloProp}(x_{selected}, \U, T_{prop})$\;
        \If{$\textsc{CollisionFree}(\overline{x_{selected}\rightarrow x_{new}})$}
        {
            $s \leftarrow \textsc{NearestWitness}(x_{new}, \wits, \delta_s)$ \;
            \If{$s \neq \emptyset$}
            {
                $s.R \leftarrow \textsc{PruneLexSet}(s.R \cup \{x_{new}\})$\;
                $s.R \leftarrow \textsc{PruneDominated}(s.R)$\;
                \If{$x_{new} \in s.R$}{
                    $\textsc{AddNewNode}(x_{new}, x_{selected}, G,\Sols)$
                }
            }
            \Else{
                $s_{new} \leftarrow x_{new}, s_{new}.R = \{x_{new}\}$\;
                $S \leftarrow S \cup \{s_{new}\}$\;
                $\textsc{AddNewNode}(x_{new}, G ,\Sols)$
            }
        }    
    }
    $\lexApprox \leftarrow \arg\min_{\pi \in \Sols} c_N(\pi)$\;
    \Return $\lexApprox$\;
\end{algorithm}

We detail \lexSST in Algorithm~\ref{alg:lexSST}, where $\overline{x \rightarrow x'}$ denotes the tree branch (trajectory) that connects node (state) $x$ to $x'$.
% , and 
% . For brevity, when a trajectory originates from $x_0$, we apply $c$ directly to the endpoint, that is: $c(x') := c(\overline{x_0 \rightarrow x'})$.
The algorithm begins by initializing the vertex $V$, edge $E$, witness $S$, and solution $\Sols$ sets. At each iteration, a node is selected for extension using the \textsc{ParetoSelect} routine (Algorithm~\ref{alg:ParetoSelect}), which uniformly samples a non-dominated node within a ball of radius $\dps$ centered at a randomly selected state $x_{rand}$ returned by \textsc{RandomSample}.

\begin{algorithm}[t]
    \caption{$\textsc{ParetoSelect}$}
    \label{alg:ParetoSelect}
    \KwIn{$\X$, $V$, $\dps$}
    $x_{rand}\leftarrow \textsc{RandomSample}(\X)$ \;
    $X_{near} \leftarrow \textsc{Near}(x_{rand},V,\dps)$\;
    $X_{pareto}\leftarrow \textsc{PruneDominated}(X_{near})$ \;
    $x_{selected}\leftarrow \textsc{RandomSample}(X_{pareto})$ \;
    \Return $x_{selected}$
\end{algorithm}

\begin{algorithm}[t]
    \caption{$\textsc{MonteCarloProp}$}
    \label{alg:MCProp}
    \KwIn{$x_{selected}$, $\U$, $T_{prop}$}
    $t_{prop} \leftarrow \textsc{Sample}([0, T_{prop}]); \ u \leftarrow \textsc{Sample}(\U)$\;
    \Return $x_{new}\leftarrow x_{selected} + \int_0^{t_{prop}} f(x(t), u) dt$ \;
\end{algorithm}

Once a node is selected, it is extended using \textsc{MonteCarloProp} (Algorithm~\ref{alg:MCProp}), which samples a random control and duration to propagate $x_\text{selected}$ according to \eqref{eq:dynamics}, yielding a new candidate state $x_\text{new}$. The nearest witness $s_\text{near}$ is then identified via \textsc{NearestWitness}: if $s_\text{near}$ lies farther than $\delta_s$ from $x_\text{new}$, a new witness is created and associated with $x_\text{new}$; otherwise, $x_\text{new}$ is evaluated against the current representative set of $s_\text{near}$.
% \ml{Alg 4 is too simple to have an explicit env for it.  It should be removed and just explained what it does here.} \yr{removed}

% \begin{algorithm}[htbp]
%     \caption{$\textsc{NearestWitness}$}
%     \label{alg:NearestWitness}
%     \KwIn{$x_{new}$, $\wits$, $\delta_s$}
%     $\wit_{near}\leftarrow \textsc{Nearest}(x_{new},\wits)$ \;
    
%     \If{$|\wit.x - \wit_{near}.x| < \delta_s$}
%     {
%         \Return $\wit$
%     }
%     \Else{
%         \Return $\emptyset$
%     }
% \end{algorithm}

If $x_\text{new}$ is not pruned by either \textsc{PruneLexSet} or \textsc{PruneDominated} (Algorithms~\ref{alg:PruneLexSet} and~\ref{alg:PruneDominated}, respectively), then it is added to $R^\text{lex}_{s_\text{near}}$ and the motion tree. As well, the new node may result in the pruning of existing nodes. Nodes entering the goal region are added to the solution set $\Sols$ (Algorithm~\ref{alg:AddNewNode}).

\begin{algorithm}[tb]
    \caption{$\textsc{PruneLexSet}$}
    \label{alg:PruneLexSet}
    \KwIn{$R$,  $\vec{\epsilon}$}
    \For{$i = 1$ \KwTo $N-1$} {
        $\epsMin \leftarrow \{C(x) \ \forall x \in R\}$ \;
        $\hat{c}_i^* \leftarrow \min(c_i \in \epsMin)$\;
        \ForEach{$x \in R$} {
            \If{$c_i(x) > \hat{c}_i^* + \epsilon_i$}{
                Remove $x$ from $R$ and $V$\;
            }
        }
    }
    \Return $R$
\end{algorithm}

\begin{algorithm}[tbp]
    \caption{$\textsc{PruneDominated}$}
    \label{alg:PruneDominated}
    \KwIn{$R$, $\vec\varepsilon$}
    $\ballNodes \leftarrow \emptyset; \quad R^{\varepsilon} \gets \emptyset$ \;
    \ForEach{$x \in R$} {
        \If{not $\exists \, x' \in R$ such that $C(x') \succ C(x)$}{
            Add $x$ to $\ballNodes$\;
        }
    }
    \ForEach{$x \in \ballNodes$} {
        \If{not $\exists \, x' \in \ballNodes$ s.t. $\vert c_i(x') - c_i(x)\vert < \varepsilon_i \ \forall i$}{
            Add $x$ to $R^{\varepsilon}$\;
        }
    }
    $G.V \gets G.V \setminus (R \setminus R^{\varepsilon})$\;
    \Return $R^{\varepsilon}$
\end{algorithm}

\begin{algorithm}[tbp]
    \caption{$\textsc{AddNewNode}$}
    \label{alg:AddNewNode}
    \KwIn{$x_{new}$, $x_{selected}$, $G$, $\Sols$}
    $G.V \leftarrow G.V \cup \{x_{new}\}$\;
    $G.E\leftarrow G.E\cup\{x_{selected}, x_{new}\}$\;
    \If{$x_{new} \in \X_G$}
    {
        $\Sols \leftarrow \textsc{PruneLexSet}(\Sols \cup \{x_{new}\})$\;
        $\Sols \leftarrow \textsc{PruneDominated}(\Sols)$\;
    }
\end{algorithm}
% The main contribution in \lexSST is maintaining an $\epsilon$-minimal set $\repSet$ for each neighborhood, rather than a single ``best" representative node as in \textsc{SST}.
% % The set $\repSet$ effectively represents the last filtered set $\Pi^\epsilon_N$ as described in Subsection~\ref{subsec:e-eqivalence}. 
% Accordingly, each witness is redefined as:
% % 
% \begin{equation*}
%     \wit = (s, R_s),
% \end{equation*}
% % 
% where $s \in \X$ is the center of the neighborhood and $R \subset \X$ is the representative set. For \textsc{lexSST}, $R = \repSet$ and contains the endpoints of all sub-trajectories in the current motion tree $G$ entering the ball $\mathcal{B}_{\delta_s}(s)$ that satisfy:
% % 
% \[
% \repSet = \{x \in G \mid x\in \mathcal{B}_{\delta_s}(S.s) \wedge \overline{x_0\rightarrow x} \in \Pi^\epsilon_{N-1} \}.
% \]
% % 
% \ml{what does $\overline{x_0\rightarrow x}$ mean? The trajectory from $x_0$ to $x$?  If so, use $\overline{x_0x}$ instead?}
% where $\Pi^\epsilon_{N-1}$ is the filtered set all the trajectories ending in $\mathcal{B}_{\delta_s}(s)$ according to the procedure described in Subsection~\ref{subsec:e-eqivalence}.

Since the number of non-dominated nodes, and thus active representatives, can grow without bound, we introduce a cost-space sparsity metric $\vec{\varepsilon}$. This metric ensures that each set $\repSet$ retains only nodes that are sufficiently different in cost, i.e.,
\[\repSet^\varepsilon = \{x \in \repSet \mid \forall x' \in \repSet, \, \exists i \text{ s.t. } |c_i(x) - c_i(x')| > \varepsilon_i \}.\]
As the algorithm progresses, the state-space sparsity metric $\delta_s$ (inherited from \SST) ensures a finite number of neighborhoods $\ball(s)$ are reached, while $\vec{\varepsilon}$ ensures a finite number of representatives within each neighborhood. 
In Section~\ref{sec:analysis}, we show that $\vec{\varepsilon}$ directly affects the near-optimality of our proposed algorithms,
% via the sub-optimality bound in \eqref{eq:bounds}, 
and in Sec.~\ref{sec:eval}, we empirically demonstrate the trade-off between sparsity and solution quality by varying $\vec{\varepsilon}$.
% We study the effects of $\vec{\varepsilon}$ on near-optimality in Section~\ref{sec:analysis}.

The algorithm continues for a fixed time budget, incrementally growing the motion tree $G$ and maintaining a set of candidate solution trajectories $\Sols$. Upon termination, it returns the solution that minimizes the final cost component $\lexApprox = \arg\min_{\pi \in \Sols} c_N(\pi)$.

\section{Constrained-Optimal Motion Planning}
\label{sec:conSST}

Here, we introduce our proposed algorithm, \textit{constrained-optimal} SST (\conSST), for solving Problem~\ref{prob:conOptMP}, with the goal of returning the trajectory that minimizes cost $c_N$ subject to $N-1$ constraints.
Similar to \lexSST, we define a representative set $\repCon$ for each witness, but use a different pruning procedure to ensure efficient exploration while respecting the user-defined constraints $\bar{c}_i$.

Effectively, this replaces the shifting window of \lexSST, retaining nodes with $c_i < \hat{c}_i^* + \epsilon_i$, with a static window keeping nodes with $c_i < \bar{c}_i + \bar \epsilon_i$. The buffer term $\bar\epsilon_i$ is introduced to ensure that a $\delta$-similar trajectory to the true solution $\conMin$ is not pruned prematurely, as discussed in Section~\ref{subsec:analyzeConSST}. Formally, we define the representative set as
\[\repCon = \{x \in \ballNodes \mid c_i(x) \leq \bar{c}_i + \bar \epsilon_i \quad \forall i \in \{1,\ldots,N-1\} \}.\]
As with \lexSST, $\repCon$ only contains non-dominated nodes.

% \ml{combine these into one $P_C$ by integrating $P_C^-$ conditions with $P_C$? Is the notation $\textbf{C}$ defined? What's $s$ in $\mathcal{B}_{\delta_s}(s)$ of $P_C^-$?}

Intuitively, each witness neighborhood maintains a slice of its local Pareto front, defined by the intersection of half-spaces $c_i \leq \bar{c}_i + \bar \epsilon_i$ for cost dimensions $i \in \{1, \dots, N-1\}$. As the algorithm progresses, nodes are selected from $\repCon$ for extension, ensuring all viable subtrajectories are maintained. Since costs are monotonic, we can safely prune nodes that exceed this threshold and focus our search on more promising ones. Further, if an admissible cost-to-go heuristic $g: \X \times 2^{\X} \rightarrow \R_{\geq 0}$ is available, this pruning can be made more aggressive by retaining nodes that satisfy $c_i(x) \leq \bar{c}_i + \bar\epsilon_i - g(x, \X_G)$.

The algorithm for \textsc{coSST} follows that of Algorithm~\ref{alg:lexSST}, except that we replace \textsc{PruneLexSet} in Line 10 with \textsc{PruneCoSet} (Algorithm~\ref{alg:PruneConSet}). Upon termination, \conSST returns the solution that minimizes the unconstrained cost, i.e., $\conApprox = \arg\min_{\pi \in \Sols} c_N(\pi)$.
\begin{algorithm}[tb]
    \caption{$\textsc{PruneCoSet}$}
    \label{alg:PruneConSet}
    \KwIn{$R$,  $\{\bar{c}_i\}_{i=0}^{N-1}$}
    \For{$i = 1$ \KwTo $N-1$} {
        \ForEach{$x \in R$} {
            \If{$c_i(x) > \bar{c}_i + \bar\epsilon_i$}{
                Remove $x$ from $R$\;
            }
        }
    }
    \Return $R$
\end{algorithm}
\subsection{Limitations of SST for Constrained Optimization}
\label{subsec:SSTlimitaion}
The solution to Problem~\ref{prob:conOptMP} involves minimizing a single cost while satisfying the remaining $N-1$ constraints, yielding an optimal trajectory $\conMin$. Unlike lexicographic optimization, no $\epsilon$-equivalence is required for tie-breaking, which might suggest that vanilla \textsc{SST} is sufficient. However, we show that because \textsc{SST} maintains only a single representative per witness, it cannot account for downstream constraint satisfaction and is therefore \emph{incomplete} for Problem~\ref{prob:conOptMP}.

Consider the flytrap example in Fig.~\ref{fig:conSSTcase}, where the agent must navigate from the start (yellow star) to the goal (red region) via a narrow passage. The goal is to minimize path length while satisfying a constraint on the line integral over a cost function shown in greyscale, i.e.,
% . Formally:
% 
% \begin{align*}
%     \textsc{CostFunction}: \quad c_1(\pi) &= \int_\pi \mathcal{G}(x) \, dx, \\
%     \textsc{PathLength}: \quad c_2(\pi) &= \int_\pi dx.
% \end{align*}
%
$c_1(\pi) = \int_\pi \mathcal{N}(y) dy$, 
where $\mathcal{N}(y)$ is a Gaussian function
% \ml{what's the co-variance? Gaussian functions are defined by two parameters: mean and (co)-variance}
evaluated at state $y$, and 
$c_2(\pi) = \int_\pi dy$.

The narrow passage can accommodate only a few witnesses through which all candidate solutions must pass. 
Now, consider three subtrajectories $\pi_a, \pi_b, $ and $ \pi_c$ that arrive at witness $s$. 
Naively applying \SST to optimize for path length ($c_2$) while pruning $c_1 \leq \bar{c}_1$ would select $x_b$ as the representative of $s$, even if it nearly exceeds $\bar{c}_1$. Then, nodes $x_a$ and $x_c$ are pruned since they do not improve upon $c_2$. However, the minimum cost-to-go from $x_b$ may exceed the cost threshold (i.e., $c_1(\pi_B) > \bar{c}_1$), rendering SST incomplete for Problem~\ref{prob:conOptMP}. Nearby witnesses would exhibit similar behavior, as the shortest path to any witness in the passage would pass through the high-cost region. In Section~\ref{sec:cs4}, we demonstrate this limitation empirically.

% Notably, the narrow passage only allows for one witness neighborhood to be defined, thereby forcing all potential solution trajectories to pass through it. Now consider a constraint of $c_1 \leq \textbf{c}_1$ is applied to the objective \textsc{CostFunction}. Then, the desired trajectory $\pi^*_C$ would be a motion plan that minimizes \textsc{PathLength} while respecting the cost constraint. Now consider three subtrajectories, $\pi_a, \pi_b,$ and $ \pi_c$ that may be found at any iteration of the algorithm. Naively, we could use \textsc{SST} to optimize for \textsc{PathLength} while applying the constraint $c_1 \leq \textbf{c}_1$ to each witness neighborhood. Doing so, however, could admit a node $x_b$ that approaches but does not exceed $\textbf{c}_1$, consequently omitting nodes $x_a$ and $x_c$ from addition to the search tree. Then, the incurred cost of moving from $x_b$ to the goal may exceed the set cost threshold, leading the algorithm to fail to find a path. In short, if the node $x_b$ becomes the representative of witness $S$, then no path will exist to the goal that respects the constraints, rendering  \textsc{SST} incomplete for Problem~\ref{prob:conOptMP}. 

% 
\begin{figure}[tb]
    \centering
    \includegraphics[width=0.48\textwidth]{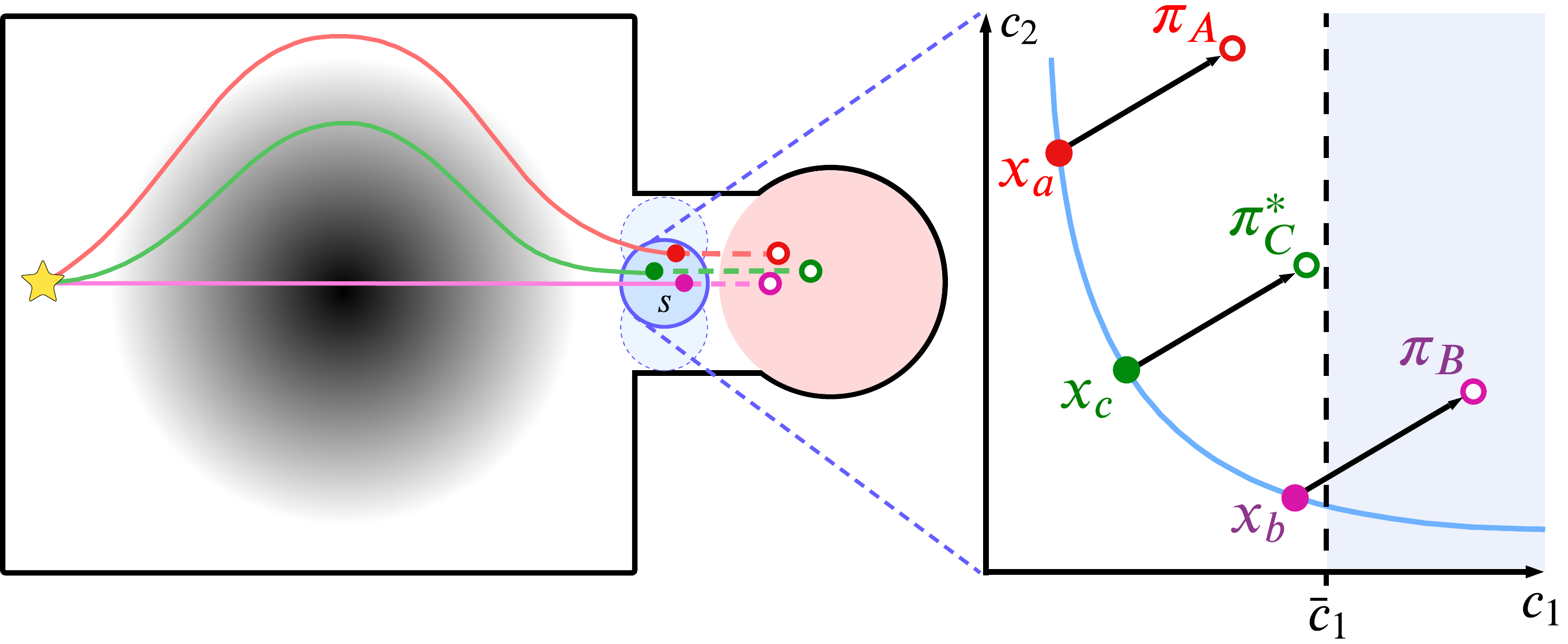} 
    \caption{Flytrap example showing the limitation of \textsc{SST} in minimizing \PL given a constraint on \CF.}
    \label{fig:conSSTcase}
\end{figure}

In contrast, \textsc{conSST} maintains all nodes that respect the cost constraints in each neighborhood. Therefore, $x_a, x_b,$ and $x_c$ are maintained in $\repCon$, allowing the algorithm to select and extend each node with positive probability until a near-optimal motion plan is found (i.e., trajectory $\pi^*_C$). In Section~\ref{sec:analysis}, we prove that \textsc{conSST} is asymptotically near-optimal.

\section{Pareto-Optimal Motion Planning}
\label{sec:poSST}

Lastly, we present Pareto-optimal \SST (\poSST), a kinodynamic sampling-based motion planner that approximates the entire Pareto front of solution trajectories with arbitrarily tight error bounds, thereby solving Problem~\ref{prob:paretoOptMP}. This algorithm closely mirrors the structure of \lexSST and \conSST but employs fewer pruning operations, thereby exploring a broader set of cost trade-offs.

% Unlike \lexSST, which restricts the active set in each neighborhood using the \textsc{PruneLexSet} routine, \poSST retains \emph{all non-dominated sub-trajectories} entering each witness neighborhood. This results in a local Pareto front per neighborhood, which improves monotonically over time as better solutions are discovered.

Specifically, \poSST retains Pareto-optimal candidates per neighborhood (every node when $\vec\varepsilon = 0$, otherwise a sparse subset). This can be interpreted as initializing \conSST with constraints $\bar c_i = \infty$ for all $i$, allowing the algorithm to explore the full extent of the Pareto front. Consequently, the representative set for \poSST is simply $\repPO := \ballNodes$, as no additional filters are applied. Upon termination, \poSST returns a set of trajectories $\Sols$ rather than a single solution.

% \ml{not sure if I understand notation $[\infty]$.}
% , or \conSST with $\mathbf{C} = \mathbf{\infty}$
% \ml{is $\mathbf{C}$ defined?}
% \ml{combine the two notations in one def?}
% \[
% S.P^\epsilon = \{x \in S.P \mid \nexists \, x' \in S.P \text{ s.t. } C(\overline{x_0 \rightarrow x'}) \prec C(\overline{x_0 \rightarrow x})\}.
% \]

Due to this broader search scope, \poSST converges more slowly than \lexSST and \conSST, especially in high-dimensional cost spaces. However, it guarantees that all trade-offs among objectives are represented in the output, making it suitable for applications where post-hoc preference elicitation is necessary.

% Algorithmically, all other components of \poSST remain unchanged from \lexSST. Node extension, witness association, and pruning via \textsc{PruneDominated} are executed identically. The difference lies solely in the replacement of $\lexSet$ with a locally maintained Pareto front $P$ at each witness, without applying any tolerance thresholds.

% We analyze the theoretical guarantees of $\lexSST$ and $\poSST$ in the next section. Empirical evaluations comparing their performance across a range of domains are provided in Section~\ref{sec:eval}.

\section{Analysis}
\label{sec:analysis}

Here, we analyze the theoretical properties of our proposed algorithms, focusing on probabilistic completeness and near-optimality guarantees. For clarity, we first prove these properties for \poSST, then show how the additional pruning procedures employed by \lexSST and \conSST preserve them within their respective problem classes.  All the proofs are provided in the appendix.

% \subsection{Overview}
% \label{subsec:overview}

We first extend the notions of probabilistic $\delta$-robust completeness (Def.~\ref{def:delta-robust-completeness}) and asymptotic $\delta$-robust near-optimality 
(Def.~\ref{def:asymp_delta_near_opt})
from~\cite{SST} to the multi-objective setting. When targeting a single solution, as in \lexSST and \conSST, Defs.~\ref{def:delta-robust-completeness}-\ref{def:asymp_delta_near_opt} apply with slight alterations. For \poSST, which must approximate the entire Pareto front, we introduce analogous definitions that generalize these requirements to hold simultaneously across all Pareto-optimal solutions. 

\begin{definition}[Probabilistic $\delta$-Robust Pareto-Completeness]
\label{def:pareto-completeness}
Let $\Sols$ denote the set of trajectories computed by algorithm \textsc{alg} at iteration $n$. Algorithm \textsc{alg} is probabilistically $\delta$-robustly Pareto-complete if, for every motion planning problem admitting a non-empty set of $\delta$-robust Pareto-optimal solutions $\poDSol$, the following holds for each independent run:
\begin{multline*}
    \liminf_{n \to \infty} \; \mathbb{P}\!\left(\forall\pi^*\in \poDSol, \; \exists \pi \in \Sols \right. \\
    \left. \text{ s.t. } \pi \text{ and } \pi^* \text{ are } \delta\text{-similar} \right) = 1.
\end{multline*}
\end{definition}

\noindent Given an algorithm with this property and the Lipschitz continuity of the cost functions (Assumption~\ref{assumption:CostFunction}), we can ensure that any Pareto-optimal trajectory in $\poDSol$ can be approximated, yielding the following notion of near-optimality.

\begin{definition}[Asymptotic $\delta$-Robust Near-Pareto-Optimality]
\label{def:near-pareto-optimality}
Consider a motion-planning problem that admits at least one $\delta$-robust motion plan. Let $\C^*_\delta = \{C(\pi^*) \mid \pi^* \in \poDSol\}$ denote the set of cost vectors of all $\delta$-robust Pareto-optimal trajectories, and let $\algRVs$ denote a random variable representing the set of cost vectors $\C = \{C(\pi) \mid \pi \in \Sols\}$ returned by algorithm \textsc{alg} after iteration $n$. Algorithm \textsc{alg} is \emph{asymptotically $\delta$-robustly near-Pareto-optimal} if for each independent run:
\begin{multline*}
    \mathbb{P}\big(\limsup_{n \to \infty} \; \forall C^* \in \C^*_\delta, \; \exists Y \in \algRVs \\
    \text{ s.t. } Y \preceq H(C^*, \witEps, \delta) \big) = 1,
\end{multline*}
where $H: \mathbb{R}^n_{\geq 0} \times \mathbb{R}^n_{\geq 0} \times \mathbb{R}_{\geq 0} \rightarrow \mathbb{R}^n_{\geq 0}$ bounds the sub-optimality of \textsc{alg} as a function of an optimum cost $C^*$ and the user-defined sparsity parameters. Note that $H(C^*, \witEps, \delta) \to C^*$ as $\delta \to 0$ and $\witEps \to 0$.
\end{definition}

% Subsection~\ref{subsec:analyzePoSST} proves that \poSST satisfies the definitions above. Subsection~\ref{subsec:analyzeConSST} establishes the completeness of \conSST under the stipulation that at least one $\delta$-robust solution satisfying the cost constraints exists, and its near-optimality with respect to $\conMin$. Finally, Subsection~\ref{subsection:analyzeLexSST} proves completeness and near-optimality of \lexSST with respect to $\lexMin$, restricted to the case of two cost objectives.

\subsection{Analysis of \poSST}
\label{subsec:analyzePoSST}

Here, we show that the pruning and selection procedures of $\poSST$ preserve the ability to generate a $\delta$-similar trajectory to any $\delta$-robust, Pareto-optimal solution $\pi^* \in \poDSol$, from which completeness and near-optimality follow.
We begin by recalling three results from~\cite{SST}. First, we relate the user-defined parameters $\dps$ and $\delta_s$ to the dynamic clearance $\delta$.

\begin{proposition}[\!{\cite[Prop.~13]{SST}}]
    \label{prop:deltas}
    The parameters $\dps$ and $\delta_s$ must satisfy the following relationship with $\delta$ for a $\delta$-robust feasible motion planning problem to be solved:
    $$\dps + 2 \delta_s < \delta.$$
\end{proposition}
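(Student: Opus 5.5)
The plan is to reproduce the covering-ball argument of \cite{SST} and identify the geometric requirement that makes the inductive step work; the inequality $\dps + 2\delta_s < \delta$ is exactly that requirement. Fix a $\delta$-robust solution $\pi^*$, with clearance $\delta$, and cover it by a finite sequence of balls $\mathcal{B}_\delta(x^*_j)$ centered at points $x^*_j = \pi^*(t_j)$ along the trajectory. Inside each ball, place a smaller concentric ball $\mathcal{B}_{\dps}(x^*_j)$. The induction hypothesis is that the tree contains a node whose trajectory is $\delta$-similar to $\pi^*$ up to time $t_j$ and whose endpoint lies in $\mathcal{B}_{\delta}(x^*_j)$. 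The step must show that, with positive probability, the tree gains a node in $\mathcal{B}_\delta(x^*_{j+1})$, and that this node, or a representative replacing it, stays inside that ball.

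First, consider selection. With positive probability, \textsc{RandomSample} lands in $\mathcal{B}_{\dps}(x^*_j)$, or in the slightly shrunk ball around it. Any node returned by \textsc{Near} then lies within $\dps$ of the sample. A representative that is guaranteed to exist near $x^*_j$ lies within $\delta_s$ of its witness, and that witness lies within $\delta_s$ of the region where the previous step's node landed. Chaining these with the triangle inequality, the selected node is within about $\dps + 2\delta_s$ of $x^*_j$. If this sum is smaller than $\delta$, the selected node remains inside the $\delta$-ball where the dynamic clearance applies. Dynamic clearance (Lemma 6 of \cite{SST}) then supplies a control and duration that propagate it to a state near $x^*_{j+1}$ along a $\delta$-similar segment. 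By Lipschitz continuity of $f$ in $u$ and $x$, a positive-measure set of controls sampled in \textsc{MonteCarloProp} achieves this.

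Second, consider pruning and replacement. If the new node is pruned, its witness $s$ is within $\delta_s$ of it, and some surviving representative is within $\delta_s$ of $s$. That representative is therefore within $2\delta_s$ of the intended endpoint, so it still lies in $\mathcal{B}_{\delta-\dps}(x^*_{j+1})$. This is precisely why the budget $\delta - 2\delta_s$ must exceed $\dps$. Conversely, if $\dps + 2\delta_s \ge \delta$, one can build a counterexample in which the chained displacements push every candidate outside the clearance ball. The robust trajectory can then not be tracked, which proves the stated necessity.

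The main obstacle is this necessity direction. Showing that the condition is sufficient is the standard SST induction. The statement's phrase ``must satisfy,'' however, is really a design constraint rather than a strict logical necessity. I would therefore present it as the condition under which the inductive step of \cite[Prop.~13]{SST} goes through, and cite that result rather than try to prove tightness in general.
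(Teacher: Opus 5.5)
The paper gives no proof of this proposition. It imports the inequality from \cite[Prop.~13]{SST} as a standing hypothesis of the analysis, whose only job is to make $\delta_c=\delta-\dps-2\delta_s$ positive. (Lemma~\ref{lem:pruneDom} misprints this as $\dps-2\delta_s$, but its conclusion $x'\in\mathcal{B}_{\delta-\dps}(x^*_i)$ requires $\delta_c+2\delta_s=\delta-\dps$.) Your closing move, reading ``must'' as a design condition and citing SST, is therefore exactly what the paper does. However, two parts of your sketch are wrong and should not be offered as justification.

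First, you claim that for $\dps+2\delta_s\ge\delta$ a counterexample ``proves the stated necessity.'' No construction backs this, and it contradicts your own last paragraph. A worst-case displacement does not show that the success probability stays bounded away from one. Losing track of one particular $\pi^*$ also does not mean the instance goes unsolved. The specific form of the inequality is not even forced by the covering-ball argument. The selection step only needs the surviving node $x'$ to satisfy $\|x'-x^*_j\|<\delta$, since then $\mathcal{B}_{\delta-\dps}(x^*_j)\cap\mathcal{B}_{\dps}(x')$ already has positive measure.

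Second, your triangle-inequality chain misplaces where the inequality enters. \textsc{Near} puts the selected node within $\dps$ of the sample, not of the representative, so sampling in $\mathcal{B}_{\dps}(x^*_j)$ gives only a $2\dps$ bound. Your pruning paragraph also never says how close the propagated node lands to $x^*_{j+1}$. Yet concluding $x'\in\mathcal{B}_{\delta-\dps}(x^*_{j+1})$ from $\|x'-x_{new}\|\le 2\delta_s$ requires $x_{new}\in\mathcal{B}_{\delta_c}(x^*_{j+1})$. The correct chain has three steps:
\begin{enumerate}
\item \textsc{MonteCarloProp} hits the ball $\mathcal{B}_{\delta_c}(x^*_{j+1})$, which has positive radius, with positive probability.
\item Pruning keeps a node in $\mathcal{B}_{\delta_c+2\delta_s}(x^*_{j+1})=\mathcal{B}_{\delta-\dps}(x^*_{j+1})$.
\item A sample in $\mathcal{B}_{\delta-\dps}(x^*_{j+1})\cap\mathcal{B}_{\dps}(x')$ forces every selectable node into $\mathcal{B}_\delta(x^*_{j+1})$.
\end{enumerate}
Random propagation hits a single point with probability zero, so the landing radius must be positive. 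The strict inequality in the proposition is precisely the condition $\delta_c>0$.
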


Next, we characterize $\delta$-similarity by defining a \textit{covering ball sequence}, adapted from \cite{SST}, over each $\pi^* \in \poDSol$.

\begin{definition}[Covering Balls]
\label{def:coveringBalls}
    Given a trajectory $\pi^*$, a dynamic clearance $\delta > 0$, and a cost increment $\Delta c_1 \in \R_{>0}$, the covering ball sequence $\mathbb{B}(\pi^*, \delta, \Delta c_1)$ is a set of $M+1$ hyper-balls
    \[
    \{\mathcal{B}_\delta (x^*_0), \mathcal{B}_\delta (x^*_1), \dots, \mathcal{B}_\delta (x^*_M)\},
    \]
    where each $x^*_i$ lies along $\pi^*$ and satisfies $ \Delta c_1 =  c_1(\overline{x^*_j \rightarrow x^*_{j+1}})$ for all $j \in  \{0, 1, \dots, M-1\}$. 
\end{definition}

\noindent Without loss of generality, we partition a reference trajectory $\pi^*$ according to its primary cost $c_1$, resulting in $k = \frac{c_1(\pi^*)}{\Delta c_1}$ segments. 
A trajectory is $\delta$-similar to $\pi^*$ if and only if it remains entirely within this sequence. 

Lastly, because \textsc{MonteCarloProp} (Algorithm~\ref{alg:MCProp}) samples controls with full support over $\U$, it can produce a control arbitrarily close to that of $\pi^*$, ensuring a positive probability of transitioning from one ball to the next~\cite[Theorem 17]{SST}. 
% Since $\poSST$ is initialized at $x_0 = \pi^*(0) \in \mathcal{B}_\delta (x^*_0)$ for all $\pi^* \in \poDSol$, a $\delta$-similar trajectory to any reference can therefore be generated, provided nodes within the covering ball sequence have a positive probability of selection at each iteration. 
Since $\poSST$ is initialized at $x_0 = \pi^*(0) \in \mathcal{B}_\delta(x^*_0)$ for all $\pi^* \in \poDSol$ (i.e., all solutions begin at $x_0$), a $\delta$-similar trajectory to any Pareto-optimal reference can therefore be generated, provided that nodes within its covering ball sequence have a positive probability of selection at each iteration.
It thus remains to show that the subroutines \textsc{ParetoSelect} and \textsc{PruneDominated} preserve this selection property.
% , which we establish in the following lemmas.

\subsubsection{Properties of \textsc{ParetoSelect}}
We first analyze \textsc{ParetoSelect} (Algorithm~\ref{alg:ParetoSelect}), which biases node selection toward locally non-dominated solutions, thereby accelerating convergence relative to simple nearest-neighbor selection. 

At each iteration, a random state $x_{\text{rand}} \sim \mathcal{U}(\X)$ is sampled from a uniform distribution over $\X$. Then, all active nodes within a ball $\mathcal{B}_{\dps}(x_{\text{rand}})$ are retrieved. The subset of these nodes that are not dominated by any other node in the set (with respect to all cost objectives) forms a local Pareto front. Finally, a random node $x_{\text{selected}}$ is chosen uniformly from this set for propagation.

\begin{lemma}
\label{lem:ParetoSelect}
    Assume $x_{\text{rand}} \sim \mathcal{U}(\X)$. If there exists a node $x \in \mathcal{B}_{\dps}(x^*_i)$ at some iteration $n$, where $x^*_i$ corresponds to a state along a $\delta$-robust reference trajectory $\pi^*$, then the probability that \textsc{ParetoSelect} chooses a node $x' \in \mathcal{B}_\delta(x^*_i)$ for propagation is lower bounded by a positive constant $\gamma > 0$ for all iterations $n' > n$.
\end{lemma}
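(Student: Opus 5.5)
We adapt the argument for the best-near selection in \cite{SST} to the Pareto setting. The argument has three steps. First, we lower bound the probability that $x_{\text{rand}}$ lands somewhere that forces the retrieved set to be both nonempty and contained in $\mathcal{B}_\delta(x^*_i)$. Second, we show that a node remains available in $\mathcal{B}_{\dps}(x^*_i)$ at every later iteration. Third, we lower bound the probability that the uniform draw from the local Pareto set picks a node inside $\mathcal{B}_\delta(x^*_i)$.

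\textbf{Step 1 (sampling event).} Take the event $E = \{x_{\text{rand}} \in \mathcal{B}_{\dps}(x^*_i)\}$ — or, more conservatively, a smaller concentric ball chosen so that the retrieved ball $\mathcal{B}_{\dps}(x_{\text{rand}})$ contains a node near $x^*_i$. Since $x_{\text{rand}} \sim \mathcal{U}(\X)$ and $\X$ is bounded, $\mathbb{P}(E) \geq \mu(\mathcal{B}_{\dps/2}(x^*_i)\cap\X)/\mu(\X) > 0$. The $\delta$-robustness of $\pi^*$ guarantees this ball lies in $\X_f$, so the volume is that of a full ball. On $E$, every node returned by \textsc{Near} satisfies $\|x' - x^*_i\| \leq 2\dps < \delta$ by the triangle inequality and Proposition~\ref{prop:deltas}. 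Hence $X_{near} \subseteq \mathcal{B}_\delta(x^*_i)$, and so is its non-dominated subset $X_{pareto}$.

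\textbf{Step 2 (persistence), the main obstacle.} We must show that $X_{near}$, and therefore $X_{pareto}$, is nonempty at every iteration $n' > n$, even though nodes can be pruned. A node $x \in \mathcal{B}_{\dps}(x^*_i)$ lies within some witness ball $\mathcal{B}_{\delta_s}(s)$. It is only removed by \textsc{PruneDominated} when a node $x'$ in the same witness neighborhood dominates it or lies in its $\vec\varepsilon$-cost-box. In either case the replacement satisfies $\|x' - x^*_i\| \leq \dps + 2\delta_s < \delta$, again by Proposition~\ref{prop:deltas}. Also, each witness always keeps at least one representative, since a non-dominated node always exists in a finite set. So the set of active nodes in $\mathcal{B}_{\dps+2\delta_s}(x^*_i)$ is nonempty forever.

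To make this match Step 1, we enlarge the radius appropriately: sample $x_{\text{rand}}$ in $\mathcal{B}_{\delta - \dps - 2\delta_s}$-type regions around $x^*_i$, as in \cite{SST}, so that the retrieved ball still contains a surviving node while staying inside $\mathcal{B}_\delta(x^*_i)$. We expect the radius bookkeeping here to be the delicate part. It is exactly where the inequality $\dps + 2\delta_s < \delta$ is used.

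\textbf{Step 3 (uniform choice).} Conditioned on $E$, $X_{pareto}$ is nonempty and entirely inside $\mathcal{B}_\delta(x^*_i)$, so the uniform draw lands in the ball with probability $1$. If we instead use a looser event in which only part of $X_{near}$ lies in the ball, we bound the draw probability by $1/|X_{pareto}|$. That quantity is uniformly bounded away from $0$ because $\delta_s$-sparsity gives finitely many witnesses in any bounded region and $\vec\varepsilon$-sparsity gives finitely many representatives per witness, so the number of active nodes in a $\dps$-ball is bounded by a constant independent of $n'$.

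Combining the steps gives $\gamma = \mathbb{P}(E) \cdot \min(1, 1/K) > 0$, where $K$ bounds $|X_{pareto}|$. The constant $\gamma$ is independent of $n'$, which proves the lemma.
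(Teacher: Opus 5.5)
There is a genuine gap in Step~1, and the repair you sketch in Step~2 has the same defect.

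First, $2\dps<\delta$ does not follow from Proposition~\ref{prop:deltas}. That condition only gives $\delta-\dps>2\delta_s$, and since $\delta_s$ may be tiny, $\delta$ may be barely larger than $\dps$ (e.g.\ $\dps=0.5$, $\delta_s=0.1$, $\delta=0.8$). The same objection applies to the $\tfrac32\dps$ bound you would get from the $\dps/2$-ball.

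Second, and more fundamentally, no ball concentric with $x^*_i$ forces the retrieved ball $\mathcal{B}_{\dps}(x_{\text{rand}})$ to contain the hypothesized node. If $\|x-x^*_i\|=\dps$ and $x_{\text{rand}}$ lies on the side of $x^*_i$ opposite to $x$, then $\|x-x_{\text{rand}}\|>\dps$, so $X_{near}$ can be empty on part of your event $E$. The ``$\mathcal{B}_{\delta-\dps-2\delta_s}$-type region around $x^*_i$'' suggested in Step~2 fails the same way: a surviving node at distance $\dps+2\delta_s$ from $x^*_i$ can be up to $\delta>\dps$ away from such an $x_{\text{rand}}$. That radius is SST's $\delta_c$ for the \emph{generated} node, not a sampling region. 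Because you defer this ``radius bookkeeping'', the actual content of the lemma is not established: an event of probability bounded below, uniformly in $n'$, on which $X_{pareto}$ is both nonempty and contained in $\mathcal{B}_\delta(x^*_i)$.

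The missing idea is to anchor the sampling region at the node as well as at $x^*_i$. Take $x_{\text{rand}}\in\mathcal{B}_{\dps}(x)\cap\mathcal{B}_{\delta-\dps}(x^*_i)$, which is exactly the paper's region with $\theta=\delta-\dps$.
\begin{itemize}
\item The first ball puts $x$ into $X_{near}$, so $X_{pareto}\neq\emptyset$.
\item The second ball gives $X_{near}\subseteq\mathcal{B}_\delta(x^*_i)$ by the triangle inequality.
\item The lens has positive measure because the centers are at distance less than $\dps+(\delta-\dps)=\delta$. Its measure is nonincreasing in that distance, so it is bounded below by a constant depending only on the radii.
\end{itemize}

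This also completes your Step~2. For the surviving node $x'\in\mathcal{B}_{\dps+2\delta_s}(x^*_i)$, the lens $\mathcal{B}_{\dps}(x')\cap\mathcal{B}_{\delta-\dps}(x^*_i)$ still has uniformly positive measure precisely because $\dps+2\delta_s<\delta$. Your persistence argument then yields a $\gamma$ independent of $n'$. The paper's proof does not address this point here: it only treats dominance inside \textsc{ParetoSelect} and defers pruning to Lemma~\ref{lem:pruneDom}.

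Your Step~3 remark is right: on this event, the uniform draw lands in $\mathcal{B}_\delta(x^*_i)$ with probability one. That makes the $1/|X_{pareto}|$ factor, which the paper's $\gamma$ also carries, unnecessary. This matters, because $\vec\varepsilon$-sparsity alone does not bound the number of mutually non-dominated nodes at a witness when costs are unbounded.
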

\noindent The proof for Lemma~\ref{lem:ParetoSelect} can be found in Appendix~\ref{proof:ParetoSelect}.

% While \textsc{ParetoSelect} ensures that locally non-dominated nodes are considered for propagation with nonzero probability, this mechanism alone does not guarantee efficient exploration of Pareto-optimal solutions. In general, the local Pareto front may be arbitrarily dense, and uniform sampling from this set may not sufficiently promote diversity among generated trajectories. Consequently, additional mechanisms are required to mitigate redundancy and enhance solution diversity, which are addressed through the pruning procedure discussed next.
While \textsc{ParetoSelect} ensures that locally non-dominated nodes are considered for propagation with positive probability, this alone does not guarantee efficient exploration of the Pareto-optimal set. The local Pareto front may be arbitrarily dense, and uniform sampling from it may not promote diversity among the generated trajectories. We address this redundancy by actively managing the search tree using the pruning procedure discussed next.

\subsubsection{Properties of \textsc{PruneDominated}}

The \textsc{PruneDominated} procedure (Algorithm~\ref{alg:PruneDominated}) is used by $\poSST$ to retain only non-dominated candidate nodes at every witness neighborhood. Further, this procedure involves a sparsity parameter $\witEps$ that ensures only one node is retained within a region of the objective space. Then, we bound the worst-case deviation from a reference trajectory $\pi^*$ that can result from applying \textsc{PruneDominated}.

\begin{lemma}
\label{lem:pruneDom}
Let $\delta_c = \dps - 2\delta_s$. If a state $x \in V$ is generated at iteration $n$ such that $x \in \mathcal{B}_{\delta_c}(x^*_i)$, then for every iteration $n' > n$, there exists a state $x' \in V$ such that $x' \in \mathcal{B}_{\delta - \dps}(x^*_i)$ and $C(x') \succeq C(x) + \witEps$.
\end{lemma}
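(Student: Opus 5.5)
The plan is to adapt the corresponding lemma of \SST (the one showing that a representative of a witness near $x^*_i$ persists with cost no worse than that of $x$), replacing the scalar ``no worse'' with vector dominance up to the sparsity slack $\witEps$. The statement is to be read with $\succeq$ in the ``weakly no worse up to $\witEps$'' sense, i.e., $c_j(x') \le c_j(x) + \varepsilon_j$ for all $j$. We argue by induction on iterations $n' > n$. The invariant is: there exists a node in $V$ lying in $\ball(s)$ for some witness $s$ with $\|s - x^*_i\| \le \delta_c + \delta_s$, and its cost vector is bounded as above.

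First, the geometry. When $x$ is generated, \textsc{NearestWitness} either returns an existing witness $s$ with $\|x - s\| \le \delta_s$, or creates $s = x$. In either case $\|s - x^*_i\| \le \delta_c + \delta_s$. Any node ever stored in $\ball(s)$ therefore satisfies
\[
\|x' - x^*_i\| \le \delta_c + 2\delta_s = \dps,
\]
and Proposition~\ref{prop:deltas} ($\dps + 2\delta_s < \delta$) gives $\dps \le \delta - \dps$. Hence every node that represents $s$ lies in $\mathcal{B}_{\delta-\dps}(x^*_i)$. Since witnesses are never deleted, it remains only to control the costs of the nodes kept in $s.R$.

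Second, the cost bound. We use the two filters in \textsc{PruneDominated}. There are three ways $x$ (or the current witness-holder) can leave $s.R$.
\begin{itemize}
\item $x$ is removed because some $x'$ strictly dominates it. Then $C(x') \prec C(x)$, so the bound holds with slack $0$.
\item $x$ is removed by the $\witEps$-sparsity filter. Then some retained $x'$ satisfies $|c_j(x') - c_j(x)| < \varepsilon_j$ for all $j$, hence $c_j(x') \le c_j(x) + \varepsilon_j$.
\item $x$ is never added. This is handled the same way, since rejection occurs only through one of the two cases above.
\end{itemize}
Chaining is the delicate point. If $x'$ is later removed in favour of $x''$, we need the bound relative to the \emph{original} $x$, not an accumulated $2\witEps$. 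Dominance removals compose without loss, because strict dominance is transitive. For the $\witEps$-filter, we will argue that a node which entered via an $\witEps$-replacement can subsequently be displaced only by a node strictly dominating it, or by one within $\witEps$ of it among the current non-dominated set. To avoid compounding, we compare against the current local Pareto set $\ballNodes$: any node removed at the second stage was non-dominated, so the survivor it was compared to is a member of the retained set. We then carry the invariant ``some retained $x'$ has $C(x') \le C(x) + \witEps$'' directly.

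We expect this non-accumulation step to be the main obstacle. Our first approach is to note that the $\witEps$ filter as written removes $x$ only if a \emph{retained} comparison node exists. We then establish the invariant in the strengthened form
\[
\min_{x' \in s.R} \max_j \bigl(c_j(x') - c_j(x) - \varepsilon_j\bigr) \le 0,
\]
and check that each pruning operation preserves this minimum, using transitivity of dominance for the first filter. If the chaining genuinely compounds, we fall back to bounding the slack by the number of successive replacements. That number is finite because the representatives in each witness are $\witEps$-separated in a bounded cost region.
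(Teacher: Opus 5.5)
Your decomposition matches the paper's. You place the witness within $\delta_c+\delta_s$ of $x^*_i$, so everything it stores lies within $\delta_c+2\delta_s$. You then use the one-step property that \textsc{PruneDominated} discards a node only in favour of a node of the same witness that is no worse up to $\witEps$. Your containment step, however, is a non sequitur. The inequality $\dps+2\delta_s<\delta$ yields $\delta-\dps>2\delta_s$, not $\dps\le\delta-\dps$; take $\delta=1$, $\dps=0.9$, $\delta_s=0.01$. The statement's $\delta_c=\dps-2\delta_s$ is a slip for the \textsc{SST} definition $\delta_c=\delta-\dps-2\delta_s$. The paper's proof uses that definition implicitly when it passes from $\mathcal{B}_{\delta_c+2\delta_s}(x^*_i)$ to $\mathcal{B}_{\delta-\dps}(x^*_i)$. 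With it, $\delta_c+2\delta_s=\delta-\dps$ exactly, and Proposition~\ref{prop:deltas} only guarantees $\delta_c>0$.

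The substantive gap is the chaining step you flag. The paper does not resolve it either: its proof states the one-step property and then asserts the conclusion for all $n'>n$. Your proposed invariant $\min_{x'\in s.R}\max_j(c_j(x')-c_j(x)-\varepsilon_j)\le 0$ is not preserved by \textsc{PruneDominated}.

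\emph{Counterexample.} Take $\witEps=(1,1)$ and $C(x)=(10,10)$. Let the current holder be $y$ with $C(y)=(10.9,9.5)$, which displaced $x$ by being $\witEps$-close to it. Let another representative $w$ have $C(w)=(11.8,8.0)$. A new $z$ with $C(z)=(10.85,8.9)$ strictly dominates $y$, and is $\witEps$-close to, but not dominated by, $w$. The first stage drops $y$. If incumbents win $\witEps$-ties, the second stage drops $z$, leaving $w$ with $c_1(w)>c_1(x)+\varepsilon_1$. If instead newcomers may win ties, a chain of mutually non-dominated, $\witEps$-close replacements gains up to $\witEps$ of slack at each step. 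So dominance does not compose without loss once the dominator can itself be discarded.

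Algorithm~\ref{alg:PruneDominated} also tests closeness against $\ballNodes$, not against the retained set. It therefore drops both members of a close pair and can empty $s.R$. Your ``removes $x$ only if a retained comparison node exists'' is a reinterpretation, not the pseudocode. The fallback does not help. The $\witEps$-separation bounds how many representatives coexist, not how often the holder is replaced over the run. And $m$ replacements give $m\witEps$, not the stated $\witEps$.

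What makes the lemma true is a fixed pruning order. First decide whether to accept the incoming node against the current representatives: reject it if one of them strictly dominates it or is $\witEps$-close to it. Only after accepting it, let it delete the representatives it strictly dominates. Then $s.R$ never empties. The chain from $x$ has at most one $\witEps$-link, the one at iteration $n$, followed only by strict-dominance links to accepted nodes of the same witness. Your invariant then follows by transitivity. You need to state such a rule and argue under it.
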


\begin{lemma}
\label{lem:poSST}
Assuming uniform sampling of the state space in \textsc{ParetoSelect}, if there exists a node $x \in V$ such that $x \in \mathcal{B}_{\delta_c}(x^*_i)$ at iteration $n$, then the probability that \textsc{ParetoSelect} chooses a node $x' \in \mathcal{B}_\delta(x^*_i)$ is lower bounded by a positive constant $\gamma_{\text{PS}} > 0$ for all iterations $n' > n$.
\end{lemma}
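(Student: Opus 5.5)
The plan is to chain Lemma~\ref{lem:pruneDom} into Lemma~\ref{lem:ParetoSelect}. That is, we first show that the existence of a node in the smaller ball $\mathcal{B}_{\delta_c}(x^*_i)$ at iteration $n$ implies that, for every later iteration, some node remains in a ball around $x^*_i$ of the size required by Lemma~\ref{lem:ParetoSelect}. We then invoke that lemma to obtain the uniform lower bound $\gamma_{\text{PS}}$.

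\emph{Step 1: persistence of a nearby node.} Suppose $x \in V \cap \mathcal{B}_{\delta_c}(x^*_i)$ at iteration $n$. By Lemma~\ref{lem:pruneDom}, for every $n' > n$ there is an active node $x' \in V$ with $x' \in \mathcal{B}_{\delta - \dps}(x^*_i)$. The cost relation $C(x') \succeq C(x) + \witEps$ is not needed for the selection bound; it matters only later, for near-optimality.

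\emph{Step 2: selection.} Next we argue as in Lemma~\ref{lem:ParetoSelect}. Whenever $x_{\text{rand}}$ lands in $\mathcal{B}_{\dps - (\delta-\dps)}(x^*_i)$, or more carefully in a ball of radius $r>0$ around $x^*_i$ with $r$ chosen so that $x' \in \mathcal{B}_{\dps}(x_{\text{rand}})$ by the triangle inequality, the set $X_{near}$ is nonempty. Every node of $X_{near}$ then lies within $\|x_{\text{rand}} - x^*_i\| + \dps \le \delta$ of $x^*_i$, so whichever element of $X_{pareto}$ is selected lies in $\mathcal{B}_\delta(x^*_i)$. Nonemptiness of $X_{pareto}$ follows because dominance is a strict partial order on a finite set, so a nonempty $X_{near}$ has a non-dominated element. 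This yields
\[\gamma_{\text{PS}} \ge \frac{\mu(\mathcal{B}_r(x^*_i)\cap\X)}{\mu(\X)} > 0,\]
which is independent of $n'$. The bound is uniform along $\pi^*$ because $\delta$-robustness keeps each $x^*_i$ at distance at least $\delta$ from $\X_O$, so the intersection with $\X$ has full measure of the ball.

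\emph{Main obstacle: the radius bookkeeping.} The hard part is reconciling $\mathcal{B}_{\delta-\dps}$ from Lemma~\ref{lem:pruneDom} with the hypothesis radius $\dps$ of Lemma~\ref{lem:ParetoSelect}, and guaranteeing $r>0$ using Proposition~\ref{prop:deltas} ($\dps + 2\delta_s < \delta$). I would try the simplest route first: if $\delta - \dps \le \dps$ the node is already in $\mathcal{B}_{\dps}(x^*_i)$, so Lemma~\ref{lem:ParetoSelect} applies verbatim. Otherwise I would redo its measure argument directly, with $x_{\text{rand}} \in \mathcal{B}_{r}(x')$ for $r = \min\{\dps, \delta - \|x'-x^*_i\| - \dps\}$. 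This $r$ is positive as long as $x'$ lies strictly inside $\mathcal{B}_{\delta-\dps}(x^*_i)$, and we can shrink it by a fixed margin using the strict inequality in Proposition~\ref{prop:deltas}.
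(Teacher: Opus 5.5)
Your overall structure matches the paper's. Lemma~\ref{lem:pruneDom} supplies a surviving node $x'\in\mathcal{B}_{\delta-\dps}(x^*_i)$ at every $n'>n$. You then lower-bound the probability that $x_{\text{rand}}$ lands where $x'\in X_{near}$ and all of $X_{near}$ lies in $\mathcal{B}_\delta(x^*_i)$. Your observation that $X_{pareto}$ is then nonempty and contained in $\mathcal{B}_\delta(x^*_i)$ is correct, so no division by the number of nearby nodes is needed. That is cleaner than the paper's bound, which divides by $|\mathcal{B}_{\dps}(x')|$.

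\textbf{The gap is in the step you flagged, and your proposed repair does not close it.}

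\emph{Ball centered at $x^*_i$.} To force $x'\in\mathcal{B}_{\dps}(x_{\text{rand}})$, its radius must be at most $\dps-\|x'-x^*_i\|$. In the worst case $\|x'-x^*_i\|=\delta-\dps$ this is $2\dps-\delta\le 0$ whenever $\dps\le\delta/2$. Proposition~\ref{prop:deltas} allows that regime, and forces it once $\delta_s\ge\delta/4$.

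\emph{Ball $\mathcal{B}_r(x')$ with $r=\min\{\dps,\delta-\|x'-x^*_i\|-\dps\}$.} This gives $r=0$ when $x'$ sits on the boundary of $\mathcal{B}_{\delta-\dps}(x^*_i)$. Moreover, the surviving node changes with $n'$, so $\inf_{n'} r$ can be $0$ even if each individual $r$ is positive. Either way, no constant $\gamma_{\text{PS}}$ results.

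\emph{Your repair.} The strict inequality of Proposition~\ref{prop:deltas} constrains only the parameters, not where Lemma~\ref{lem:pruneDom} places $x'$, which is anywhere in the closed ball. Getting a margin would require shrinking the hypothesis ball $\mathcal{B}_{\delta_c}$, which changes the statement.

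\textbf{The missing idea, which is exactly the paper's:} sample from the lens $\mathcal{B}_{\delta-\dps}(x^*_i)\cap\mathcal{B}_{\dps}(x')$ itself, not from a ball centered at one of its defining points. The lens has positive measure even when $\|x'-x^*_i\|=\delta-\dps$. That boundary placement is the worst case, so the bound is independent of $n'$ and of which node survives.

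Concretely, let $\rho=\tfrac12\min\{\dps,\delta-\dps\}>0$. Take the ball of radius $\rho$ centered at the point at distance $\min\{\rho,\|x'-x^*_i\|\}$ from $x'$ toward $x^*_i$. It lies inside both $\mathcal{B}_{\dps}(x')$ and $\mathcal{B}_{\delta-\dps}(x^*_i)$. Hence $\gamma_{\text{PS}}\ge\mu(\mathcal{B}_\rho)/\mu(\X)$, and the rest of your argument goes through unchanged.

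Your first branch, which appeals to Lemma~\ref{lem:ParetoSelect}, is implicitly this same lens argument. It too needs the worst-case placement to be uniform in $n'$.
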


Proofs are included in Appendices~\ref{proof:pruneDom} and~\ref{proof:lemmaPoSST}. From this result, the following is immediate.

\begin{theorem}[Completeness of \poSST]
\label{thm:poSST-complete}
    $\poSST$ is probabilistically $\delta$-robustly Pareto-complete.
\end{theorem}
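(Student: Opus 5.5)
The argument adapts the completeness proof of \SST (\cite[Theorem~17]{SST}) to a fixed Pareto-optimal reference, and then passes from ``every fixed reference'' to ``all references simultaneously.''

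\textbf{Step 1: a single reference.} Fix a $\delta$-robust Pareto-optimal trajectory $\pi^* \in \poDSol$. Take its covering ball sequence $\mathbb{B}(\pi^*, \delta, \Delta c_1)$ from Definition~\ref{def:coveringBalls}, with $M+1$ balls, and choose parameters satisfying Proposition~\ref{prop:deltas}. Let $A_i^{(n)}$ be the event that by iteration $n$ the tree contains a node in $\mathcal{B}_{\delta_c}(x^*_i)$ whose branch is $\delta$-similar to the prefix of $\pi^*$ up to $x^*_i$.

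\begin{itemize}
\item \emph{Base case.} $A_0$ holds at $n=0$, because $x_0 = x^*_0$.
\item \emph{Persistence.} Once $A_i$ holds, Lemma~\ref{lem:pruneDom} guarantees that pruning never removes coverage of ball $i$. A node $x' \in \mathcal{B}_{\delta-\dps}(x^*_i)$ with cost no worse than $C(x)+\witEps$ survives for all later iterations.
\item \emph{Selection.} By Lemma~\ref{lem:poSST}, in each later iteration \textsc{ParetoSelect} picks some node in $\mathcal{B}_\delta(x^*_i)$ with probability at least $\gamma_{\text{PS}}$.
\item \emph{Propagation.} \textsc{MonteCarloProp} samples controls and durations with full support. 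By the Lipschitz dynamics argument of \cite[Theorem~17]{SST}, propagating from any state in $\mathcal{B}_\delta(x^*_i)$ lands in $\mathcal{B}_{\delta_c}(x^*_{i+1})$ with probability at least some $\rho>0$. This $\rho$ is uniform over starting states in the ball, and the resulting branch stays $\delta$-similar to $\pi^*$.
\end{itemize}

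Hence each iteration advances from ball $i$ to ball $i+1$ with probability at least $p = \gamma_{\text{PS}}\rho > 0$. Steps already made are never undone, so the number of advances stochastically dominates a Binomial$(n,p)$ variable. Therefore
\[\mathbb{P}(A_M^{(n)}) \geq 1 - \mathbb{P}(\mathrm{Bin}(n,p) < M) \to 1.\]
The last ball intersects $\X_G$ with clearance, so the trajectory reaching it is added to $\Sols$. The solution-set pruning in \textsc{AddNewNode} uses \textsc{PruneDominated} only for \poSST, so Lemma~\ref{lem:pruneDom} again keeps a $\delta$-similar surrogate there.

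\textbf{Step 2: all references simultaneously.} This is the main obstacle, because $\poDSol$ is generally uncountable and a union bound over it fails. I would reduce to finitely many references by exploiting the sparsity structure. Within the bounded state space there are only finitely many witnesses, since they are $\delta_s$-separated. For each witness, only finitely many cost cells of size $\witEps$ are relevant. Consequently only finitely many distinct covering-ball itineraries, at the resolution of the $\delta_c$ balls, need to be realized.

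Concretely, choose a finite $\delta_c$-net of $\poDSol$ under the $\delta$-similarity pseudometric; compactness comes from bounded $\X$, bounded cost, and the bound $\|\ddot{x}\|\leq M$. Every $\pi^*$ then shares its covering sequence, up to tolerance $\delta - \dps - 2\delta_s$, with some net element. This slack is exactly what Proposition~\ref{prop:deltas} reserves, so a trajectory $\delta_c$-tracking the net element is $\delta$-similar to $\pi^*$.

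Apply Step 1 to each of the finitely many net elements and combine them with a union bound:
\[\mathbb{P}(\text{some net element fails by } n) \leq \sum_k \mathbb{P}(\text{element } k \text{ fails by } n) \to 0.\]
Taking $\liminf$ then gives Definition~\ref{def:pareto-completeness}. If the net argument proves too delicate, the fallback is to prove the statement for each fixed $\pi^*$ and note that the paper's definition may be read pointwise. The preferred route, however, is the finite-net reduction above.
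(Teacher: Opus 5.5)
\textbf{Relation to the paper.} Your Step~1 is the paper's proof. It uses the same induction along the covering balls from $x_0$, with Lemma~\ref{lem:poSST} for selection, \cite[Theorem~17]{SST} for propagation and Lemma~\ref{lem:pruneDom} for persistence. Your Binomial domination supplies the ``$\to 1$'' that the paper leaves at ``non-zero probability at every iteration.'' The paper never handles the quantifier $\forall\pi^*$ inside the probability in Definition~\ref{def:pareto-completeness}. Its argument is per reference path, which is your fallback.

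\textbf{The gap in Step~2.} Step~2 goes beyond the paper, and a concrete step there fails. You set the net mesh to $\delta-\delta_{\text{PS}}-2\delta_s$, on the grounds that a Step~1 trajectory ``$\delta_c$-tracks'' the net element $\pi_k$. It does not. Inside Step~1 the whole radius $\delta$ is already spent:
\begin{itemize}
\item new nodes land in the inner ball;
\item witness replacement moves them by up to $2\delta_s$;
\item \textsc{ParetoSelect} adds $\delta_{\text{PS}}$.
\end{itemize}
So, as your own Selection and Propagation bullets say, the selected node is only in $\mathcal{B}_\delta(x^k_i)$ and the generated segments are only $\delta$-close to $\pi_k$. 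The slack in Proposition~\ref{prop:deltas} is what makes the inner landing radius positive; it cannot be spent a second time on the net. Composing $\delta$-similarity to $\pi_k$ with an $\eta$-net gives only $(\delta+\eta)$-similarity to $\pi^*$.

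The repair uses the strictness of Proposition~\ref{prop:deltas}:
\begin{itemize}
\item pick $\delta'$ with $\delta_{\text{PS}}+2\delta_s<\delta'<\delta$;
\item each net element is $\delta$-robust, hence $\delta'$-robust, so Step~1 applies to it at clearance $\delta'$, with inner radius $\delta'-\delta_{\text{PS}}-2\delta_s>0$;
\item take mesh $\eta\le\delta-\delta'$ and compose the rescalings, $\pi(\sigma_1(\sigma_2(t)))\in\mathcal{B}_{\delta'+\eta}(\pi^*(t))$.
\end{itemize}
With that change, the finite union bound gives the simultaneous statement, a genuine strengthening of the paper's argument.

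\textbf{A second gap, inherited from the paper.} Lemma~\ref{lem:pruneDom} only provides a surrogate node in $\mathcal{B}_{\delta-\delta_{\text{PS}}}(x^*_i)$ with cost at most $C(x)+\vec\varepsilon$. Its branch may arrive by an entirely different route, e.g.\ another homotopy class reaching the same witness with slightly lower costs. Three of your claims are therefore unjustified:
\begin{itemize}
\item ``the resulting branch stays $\delta$-similar'';
\item ``steps already made are never undone'';
\item the ``$\delta$-similar surrogate'' in the solution set after \textsc{PruneDominated} in \textsc{AddNewNode}.
\end{itemize}
The event that the tree, or the solution set, contains a trajectory $\delta$-similar to $\pi^*$ is not monotone in $n$. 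Your Binomial argument, with $A_i$ as defined, relies on exactly that monotonicity.

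What your induction actually proves, like the paper's (which concludes only ``probabilistic visitation of every ball''), is monotone occupancy of the covering balls with controlled cost, hence reaching the goal near $\pi^*$'s endpoint. To get literal $\delta$-similarity as in Definition~\ref{def:pareto-completeness}, you have two options. Either redefine $A_i$ as occupancy of $\mathcal{B}_{\delta-\delta_{\text{PS}}}(x^*_i)$ with a cost bound and weaken the conclusion accordingly, or add an argument that some $\delta$-similar branch survives pruning.
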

% \ml{probabilistically $\delta$-robustly Pareto-complete is not defined.}

\subsubsection{Near-optimality of \poSST}

We now derive a formal bound on the worst-case performance of $\poSST$  in approximating any Pareto-optimal trajectory. Building on the result that $\poSST$  can generate a $\delta$-similar trajectory to any reference trajectory $\pi^* \in \poDSol$, we analyze how this deviation in the state space translates into sub-optimality in the multi-objective cost space.

% Let $\mathbf{K}_c = (K_{c_0}, K_{c_1}, \dots, K_{c_N}) \in \mathbb{R}^N_{\geq 0}$ be the vector of Lipschitz constants associated with the $N$ individual cost components. Intuitively, for any $\delta$-similar subtrajectories, the deviation in their cost vectors can be upper bounded $|C(\pi^*) - C(\pi)| \leq \delta \cdot \mathbf{K}_c$. We formalize this in the following theorem.

Let $\mathbf{K}_c = (K_{c,1}, \dots, K_{c,N}) \in \mathbb{R}^N_{\geq 0}$ denote the Lipschitz constants of the $N$ cost components. Intuitively, for any $\delta$-similar subtrajectories $\pi, \pi^*$, the cost deviation is bounded component-wise by $|c_i(\pi^*) - c_i(\pi)| \leq \delta \cdot K_{c,i}
$ for all $i$. We formalize this in the following theorem.

\begin{theorem}[Asymptotic near-optimality of \poSST]
\label{thm:poSST-optimal}
    $\poSST$  is asymptotically $\delta$-robustly near-Pareto-optimal.
\end{theorem}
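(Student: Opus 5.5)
The plan is to build on Theorem~\ref{thm:poSST-complete} and convert its state-space statement into a cost-space bound through Assumption~\ref{assumption:CostFunction}. The key observation is that Theorem~\ref{thm:poSST-complete} guarantees, almost surely as $n\to\infty$, a discovered trajectory $\delta$-similar to each $\pi^*\in\poDSol$. However, the trajectory that survives in $\Sols$ need not be that one, because \textsc{PruneDominated} may replace nodes by dominating or $\witEps$-close ones. I would therefore track, ball by ball along the covering ball sequence $\mathbb{B}(\pi^*,\delta,\Delta c_1)$ of Definition~\ref{def:coveringBalls}, the cost of the best surviving node. The target is an explicit $H(C^*,\witEps,\delta)$ with $H\to C^*$ as $\delta,\witEps\to 0$.

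\textbf{Step 1 (per-segment cost error).} Fix $\pi^*\in\poDSol$ with covering balls centered at $x^*_0,\dots,x^*_M$. Suppose we have a node $x_j$ in $\mathcal{B}_\delta(x^*_j)$ and we propagate it with a control close to that of $\pi^*$ on segment $j$, landing in $\mathcal{B}_{\delta_c}(x^*_{j+1})$. Lipschitz continuity gives, componentwise,
\[
C(x_{j+1})\preceq C(x_j)+C(\overline{x^*_j\rightarrow x^*_{j+1}})+\delta\,\mathbf{K}_c .
\]
Here additivity is used to split costs at $x^*_j$. A careful treatment has to restart the Lipschitz comparison from the common start state; I would do this by comparing the concatenated trajectory to $\pi^*$ prefix-wise.

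\textbf{Step 2 (effect of pruning).} Lemma~\ref{lem:pruneDom} says that if $x_{j+1}$ is later pruned, some surviving $x'_{j+1}\in\mathcal{B}_{\delta-\dps}(x^*_{j+1})$ has $C(x'_{j+1})\preceq C(x_{j+1})+\witEps$, where I read the lemma's $\succeq$ as "no worse than, up to $\witEps$." Lemma~\ref{lem:poSST} then shows that $x'_{j+1}$, or a node no worse than it, is selected with probability at least $\gamma_{\text{PS}}>0$ at every subsequent iteration. A Borel--Cantelli/geometric-trials argument, as in \cite[Theorem~17]{SST}, then gives an almost-sure successful extension to ball $j+2$.

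Inducting over the $M=c_1(\pi^*)/\Delta c_1$ segments yields a surviving goal node with
\[
C(\pi)\preceq C(\pi^*)+M\left(\delta\,\mathbf{K}_c+\witEps\right)=:H(C^*,\witEps,\delta).
\]
Non-degeneracy bounds the segment durations, so $M$ is finite and depends only on $C^*$ and $\Delta c_1$. The same pruning argument applied to $\Sols$ in \textsc{AddNewNode} adds at most one more $\witEps$.

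\textbf{Step 3 (uniformity over the front).} The induction handles one $\pi^*$. To get the statement "for all $C^*\in\C^*_\delta$" inside a single probability-one event, I would use finiteness. The witnesses are finite by $\delta_s$-sparsity, and each witness keeps finitely many representatives by $\witEps$-sparsity. Hence there are only finitely many distinct covering-ball "itineraries" through witness neighborhoods. Pareto-optimal references sharing an itinerary can be handled by one event, and a finite intersection of almost-sure events is almost sure.

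\textbf{Main obstacle.} I expect this uniformity step to be the hardest. The front $\C^*_\delta$ is generally uncountable, so a naive countable union fails. The $\witEps$ error accumulating linearly in $M$ is a second weakness. If the itinerary-compactness argument does not go through, I would fall back to a compactness argument on $\C^*_\delta$: take a finite $\eta$-net of Pareto references, apply the bound to each, and absorb $\eta$ via Lipschitz continuity into $H$.
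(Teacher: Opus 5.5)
Your Steps~1--2 are essentially the paper's proof: a Lipschitz error $\delta\,\mathbf{K}_c$ per segment, at most one $\vec\varepsilon$ per segment from Lemma~\ref{lem:pruneDom}, persistence of selection via Lemmas~\ref{lem:ParetoSelect} and~\ref{lem:poSST}, and a sum over the covering-ball segments. The paper then rewrites $k(\delta\,\mathbf{K}_c+\vec\varepsilon)$ in the multiplicative form \eqref{eq:bounds} using $k\le c_i(\pi^*)/\Delta c_i$. Your additive $C(\pi^*)+M(\delta\,\mathbf{K}_c+\vec\varepsilon)$ is never weaker, since $M\,\Delta c_i\le c_i(\pi^*)$. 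Your extra $\vec\varepsilon$ for the pruning of $\Pi^{\textsc{alg}}_{\text{sol}}$ in \textsc{AddNewNode} is bookkeeping the paper skips.

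The start-state problem you flag in Step~1 is genuine, and the paper silently ignores it. However, your prefix-wise repair cannot work. After a pruning replacement, the tree prefix of the surviving node need not be $\delta$-similar to $\pi^*$ at all. That is exactly why the error has to be accumulated segment by segment. Making this rigorous needs a Lipschitz hypothesis that also charges for the offset between start states, which is what the paper tacitly uses.

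Step~3 goes beyond the paper. Its proof fixes a single $\pi^*$, and the quantifier over $\mathcal{C}^*_\delta$ is only asserted afterwards in the text. Your itinerary argument does not close this gap, for two reasons.
\begin{itemize}
\item Two references visiting the same witnesses still have different covering balls and different targets $H(C^*)$. A trajectory certifying one therefore certifies nothing about the other.
\item Finiteness of itineraries would itself need $M$ and the representative counts to be bounded uniformly over the front.
\end{itemize}

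The $\eta$-net fallback is the viable route: take finitely many almost-sure events, then absorb $O(\eta)$ into $H$, letting $\eta\to 0$ together with $\delta$ and $\vec\varepsilon$. It needs two conditions that the paper's assumptions do not supply: $\mathcal{C}^*_\delta$ must be totally bounded, and $H$ must be uniformly continuous in $C^*$. Your additive $H$ with fixed $\Delta c_1$ meets the second condition, since it is affine in $c_1(\pi^*)$. The paper's bound does not obviously meet it. Its $\Delta c_i=\min_j c_i(\overline{x^*_j\rightarrow x^*_{j+1}})$ depends on the whole trajectory $\pi^*$ rather than on $C^*$ alone, so nearby front points need not have nearby bounds.
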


The proof, included in Appendix~\ref{proof:poSST-optimal}, also provides an upper bound on the near-optimality of $\poSST$, reproduced below:
\begin{equation}
    \label{eq:bounds}
    c_i(\pi) \leq \left(1 + \frac{\delta \cdot K_{c, i} + \varepsilon_i}{\Delta c_i} \right) \cdot c_i(\pi^*) \quad \forall i.
\end{equation}
\noindent where $\Delta c_i = \min_{j \in \{0, \dots, M-1\}} c_i(\overline{x^*_j \rightarrow x^*_{j+1}})$.

Figure~\ref{fig:2DUpperBound} illustrates this upper bound in a 2D objective space. Finally, since the probability of generating a $\delta$-similar trajectory segment is strictly positive, $\poSST$ almost surely generates a $\delta$-similar motion plan to any point along the true Pareto front. Thus, in the worst case, it returns a solution within this upper bound as the approximate Pareto front, i.e., for every $\pi^* \in \poDSol$, $\exists \pi \in \Sols$ s.t. \eqref{eq:bounds} holds.

\begin{figure}[tbp]
    \centering
    \includegraphics[width=0.9\columnwidth]{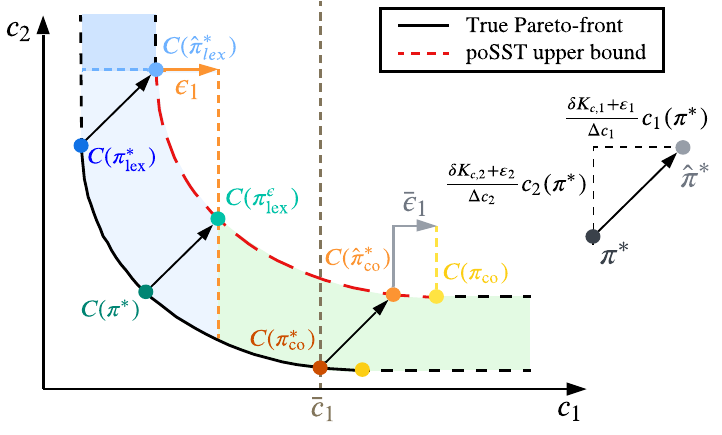} 
    \caption{Upper bound on approximation error of the Pareto-front returned by $\poSST$  as $n \rightarrow \infty$. Worst-case, near-optimal solutions of \lexSST ($\lexApprox$, green) and \conSST ($\conApprox$, yellow) shown relative to true minimums $\lexMin$ and $\conMin$, respectively.
    % The admissible solution region of \lexSST, given by \eqref{eq:lex-optimality}, is shown in blue in Figure~\ref{fig:2DUpperBound}, with the corresponding worst-case cost shown in light purple.
    }
    \label{fig:2DUpperBound}
\end{figure}

\subsection{Analysis of $\lexSST$}
\label{subsection:analyzeLexSST}

We now analyze the impact of \textsc{PruneLexSet} on the \poSST framework, arguing that its integration preserves the ability to generate a $\delta$-similar trajectory to the lexicographical minimum $\lexMin \in \poDSol$ for bi-objective problems. Recall that rather than exploring the entire Pareto front, \lexSST restricts the search to nodes within an $\epsilon$-tolerance of the current best-known primary cost, effectively considering a slice of width $\epsilon$ near the lexicographical optimum (shown in blue in Fig.~\ref{fig:2DUpperBound}). 
% Figure~\ref{fig:2d_lex} illustrates this region, defined by $c_1(x) \leq c_1(\bar{x}^*) + \epsilon_1$.

We first show that \textsc{PruneLexSet} does not remove nodes along a $\delta$-similar trajectory to $\lexMin$.

\begin{lemma}
\label{lemma:lexSST}
Let $\epsilon \geq \frac{\delta \cdot K_{c_1} + \varepsilon_1}{\Delta c_1} \cdot c_1(\lexMin)$, and let $\repLex$ be the representative set of a witness
% \ml{just one witness?}
centered at $s \in \mathcal{B}_{\delta_c + \delta_s}(\lexBall)$. If a state $x \in \repLex$ is generated at iteration $n$ such that $x \in \mathcal{B}_{\delta_c}(\lexBall)$ and
\[
c_1(\overline{x_0 \rightarrow x}) \leq \frac{\delta \cdot K_{c_1} + \varepsilon_1}{\Delta c_1} \cdot c_1(\lexMin),
\]
then for every iteration $n' > n$, there exists a state $x' \in \repLex$ such that $x' \in \mathcal{B}_{\delta - \dps}(\lexBall)$ and $C(x') \succeq C(x)$.
\end{lemma}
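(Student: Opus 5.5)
The argument reduces to Lemma~\ref{lem:pruneDom}. Since $\repLex = L^{\epsilon}(\ballNodes) \subseteq \ballNodes$, a node can leave the representative set of the witness $s$ in only two ways: \textsc{PruneDominated}, which also acts in \poSST, or \textsc{PruneLexSet}, which is specific to \lexSST. I will argue by induction over iterations $n' > n$. The invariant is that some $x' \in \repLex$ satisfies
\[
x' \in \mathcal{B}_{\delta - \dps}(\lexBall), \qquad C(x') \succeq C(x),
\]
and its primary cost obeys the same bound as in the hypothesis. At iteration $n$ the invariant holds with $x' = x$, because $\delta_c = \dps - 2\delta_s \le \delta - \dps$ under Proposition~\ref{prop:deltas}.

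For the inductive step, suppose the current witness-holder $x'$ is removed at some later iteration. \textbf{Case 1: removal by \textsc{PruneDominated}.} Removal by dominance or by $\witEps$-sparsity happens only because some newly retained $x''$ in the same representative set either dominates $x'$ or lies within $\witEps$ of it in cost. I will reuse the geometric part of the proof of Lemma~\ref{lem:pruneDom}. The witness is centered at $s \in \mathcal{B}_{\delta_c+\delta_s}(\lexBall)$ and every member of its set lies within $\delta_s$ of $s$, so $x''$ lies within $\delta_c + 2\delta_s = \dps \le \delta - \dps$ of $\lexBall$. The cost relation is then inherited from Lemma~\ref{lem:pruneDom} by transitivity along the chain of replacements, using the paper's convention for $\succeq$. \textbf{Case 2: removal by \textsc{PruneLexSet}.} This is the main obstacle. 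With $N=2$ there is only the single filter $c_1(x') > \hat c_1^* + \epsilon$, where $\hat c_1^*$ is the minimum primary cost in the set. Because $\hat c_1^* \ge 0$, it suffices to show
\[
c_1(\overline{x_0 \rightarrow x'}) \le \epsilon .
\]
This follows from the hypothesis together with the choice $\epsilon \ge \frac{\delta K_{c_1} + \varepsilon_1}{\Delta c_1} c_1(\lexMin)$, provided the primary-cost bound is preserved along the chain of Case~1 replacements. So \textsc{PruneLexSet} can never remove the witness-holder.

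The delicate point is that Case~1 replacements may increase $c_1$ by up to $\varepsilon_1$ through sparsity, so the primary-cost bound must be carried through the induction explicitly. I plan to absorb this slack into the $\varepsilon_1$ term already in the numerator, mirroring how \eqref{eq:bounds} is obtained in the proof of Theorem~\ref{thm:poSST-optimal}. If that turns out too tight, I will instead rely on the monotone non-increase of $\hat c_1^*$ and argue against the current minimum rather than $0$. In either version, the choice of $\epsilon$ is exactly what makes \textsc{PruneLexSet} inert on the replacement chain, and Lemma~\ref{lem:pruneDom} handles everything else.
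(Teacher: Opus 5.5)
Your split into the two removal mechanisms matches the paper: \textsc{PruneDominated} is inherited from the \textsc{poSST} analysis, and only \textsc{PruneLexSet} needs new work. But your two cases do not fit together, and neither fallback fixes that.

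Lemma~\ref{lem:pruneDom} only gives $C(x') \succeq C(x) + \witEps$, and chaining it along replacements adds one $\witEps$ per step. So Case~1 does not deliver your invariant $C(x') \succeq C(x)$; your own ``delicate point'' concedes that $c_1$ can rise by up to $\varepsilon_1$ per replacement. Case~2 then has no room. Because you compare against the trivial base $\hat c_1^* \ge 0$, you need $c_1(x') \le \epsilon$. Yet the hypothesis already permits $c_1(\overline{x_0 \rightarrow x}) = \epsilon$ when $\epsilon$ sits at its lower bound. After a single sparsity replacement the holder can lie above $\hat c_1^* + \epsilon$, and $\hat c_1^* \ge 0$ cannot rule this out.

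Plan (a) fails because the $\varepsilon_1$ in the numerator is already consumed by the hypothesis bound on $c_1(x)$. Plan (b) points the wrong way: a non-increasing $\hat c_1^*$ only makes the window stricter. What you need is a \emph{lower} bound on the window base.

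That lower bound is the missing idea, and it is the core of the paper's proof. Rather than following the specific node $x$, the paper uses the covering-ball bound from the proof of Theorem~\ref{thm:poSST-optimal}:
\[
c_1(x_j) \le c_1(\lexBall) + j\,(\delta K_{c,1} + \varepsilon_1).
\]
This bound applies to whichever node survives \textsc{PruneDominated}, because it already charges one $\varepsilon_1$ of sparsity slack per segment. The paper then combines $\bar c_1^* \ge c_1(\lexBall)$ with $\epsilon \ge k(\delta K_{c,1}+\varepsilon_1) \ge j(\delta K_{c,1}+\varepsilon_1)$ to get $c_1(x_j) \le \bar c_1^* + \epsilon$.

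Anchoring at $c_1(\lexBall)$ instead of $0$ is also what makes the lemma usable in Theorem~\ref{thrm:lexComplete}. Deep along $\lexMin$ the cost-to-come is about $c_1(\lexBall)$. For small $\delta$ and $\witEps$ this is far above the budget $k(\delta K_{c,1}+\varepsilon_1)$, so a base-$0$ argument only covers nodes near $x_0$.

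A smaller slip concerns $\delta_c$. Your claim $\delta_c + 2\delta_s = \dps \le \delta - \dps$ requires $2\dps \le \delta$, and $\delta_c \le \delta - \dps$ similarly fails for $\dps$ near $\delta$; Proposition~\ref{prop:deltas} gives neither. Read $\delta_c$ as $\delta - \dps - 2\delta_s$, mirroring \textsc{SST}, which is what the proof of Lemma~\ref{lem:pruneDom} requires. Then $\delta_c + 2\delta_s = \delta - \dps$ exactly.
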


\begin{theorem}[Completeness of \lexSST]
\label{thrm:lexComplete}
\lexSST is probabilistically $\delta$-robustly complete for Problem~\ref{prob:lexicoMP} with bi-objective requirements.
\end{theorem}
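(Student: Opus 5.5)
The plan is to mirror the completeness argument for \poSST (Theorem~\ref{thm:poSST-complete}), taking $\lexMin$ as the reference trajectory. \lexSST differs from \poSST only in the extra call to \textsc{PruneLexSet} (Line~10 of Algorithm~\ref{alg:lexSST}) and in applying \textsc{PruneLexSet} to $\Sols$ inside \textsc{AddNewNode}. So it suffices to show that, when $N=2$, neither step can destroy the chain of nodes that propagates along the covering ball sequence $\mathbb{B}(\lexMin,\delta,\Delta c_1)$. We assume a $\delta$-robust solution exists, so that $\lexMin$ (restricted to $\delta$-robust plans) lies in $\poDSol$ and Proposition~\ref{prop:deltas} holds.

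The argument proceeds by induction over the balls $\mathcal{B}_\delta(\lexBall)$, $j=0,\dots,M$.

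\emph{Base case.} $x_0$ lies in $\mathcal{B}_{\delta_c}(x^*_{\text{lex},0})$, and it is never pruned: it is the root, and it is trivially $\epsilon$-minimal in its own witness.

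\emph{Inductive step.} Suppose a node $x$ is in $\mathcal{B}_{\delta_c}(\lexBall)$ at iteration $n$ and satisfies the $c_1$ condition of Lemma~\ref{lemma:lexSST}. Take $\epsilon_1$ as in that lemma. The witness containing $x$ is centered within $\delta_c+\delta_s$ of $\lexBall$, so Lemma~\ref{lemma:lexSST} gives, for every later iteration, a node $x'\in\repLex$ in $\mathcal{B}_{\delta-\dps}(\lexBall)$ with $C(x')\succeq C(x)$, i.e.\ no worse in either cost. The lemma therefore covers both \textsc{PruneLexSet} and \textsc{PruneDominated}. Since $\mathcal{B}_{\dps}(x')\subseteq\mathcal{B}_\delta(\lexBall)$, Lemma~\ref{lem:poSST} (or Lemma~\ref{lem:ParetoSelect}) then yields a uniform lower bound $\gamma_{\text{PS}}>0$ on the probability of selecting a node in $\mathcal{B}_\delta(\lexBall)$ at each subsequent iteration. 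Combining this with the full-support property of \textsc{MonteCarloProp}~\cite[Theorem~17]{SST} gives a positive probability $\rho>0$ of producing a node in $\mathcal{B}_{\delta_c}(x^*_{\text{lex},j+1})$. The new node's $c_1$ cost exceeds the cost of the corresponding segment of $\lexMin$ by at most $\delta K_{c_1}+\varepsilon_1$ per segment, by Lipschitz continuity and Lemma~\ref{lem:pruneDom}. Accumulating this over the segments, exactly as in the derivation of \eqref{eq:bounds}, keeps it within the hypothesis of Lemma~\ref{lemma:lexSST}, so the induction continues.

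\emph{Conclusion.} The probability of failing to advance through all $M$ balls within $k$ attempts each decays geometrically, $(1-\gamma_{\text{PS}}\rho)^k$, as in the Markov-chain argument of~\cite{SST}. Hence the probability that some node of a $\delta$-similar trajectory reaches the goal ball tends to one.

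It remains to check that this goal-reaching node survives in $\Sols$. Its $c_1$ cost is within $\epsilon_1$ of the best $c_1$ among solutions, because $\lexMin$ minimizes $c_1$ over $\delta$-robust plans and the approximation error is bounded by the choice of $\epsilon_1$. So \textsc{PruneLexSet} on $\Sols$ keeps it or a node that is no worse, and $\Sols$ is nonempty. With only two objectives there is a single $\epsilon$-filter, which is what avoids the compounded-filtering failure of Section~\ref{subsec:eps-equivalence-3}.

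The main obstacle is the per-witness bookkeeping in the inductive step. The threshold $\hat c_1^*$ in \textsc{PruneLexSet} moves as better nodes arrive in a witness, so I must verify that the accumulated $c_1$ error of the chain node stays within $\epsilon_1$ of every future local minimum. Those minima are bounded below by the $c_1$ cost of $\lexMin$ minus $\delta K_{c_1}$, and the hypothesis on $\epsilon_1$ in Lemma~\ref{lemma:lexSST} is exactly what makes this hold. I would lean on that lemma rather than re-derive it, and would state carefully that the bound uses the minimality of $c_1(\lexMin)$ among $\delta$-robust trajectories.
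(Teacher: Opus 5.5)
Your proposal is essentially the paper's proof. Both start from $x_0$ and combine three ingredients: Lemma~\ref{lem:ParetoSelect}/\ref{lem:poSST} for a positive probability of selecting a node in $\mathcal{B}_\delta(\lexBall)$, Lemma~\ref{lemma:lexSST} for retention under \textsc{PruneLexSet}, and \cite[Theorem~17]{SST} to advance to the next ball; your induction, geometric-decay, and $\Sols$ checks make explicit what the paper leaves implicit. One caution on those extras: minimality of $c_1(\lexMin)$ among $\delta$-robust plans gives no lower bound on the $c_1$ cost of arbitrary, possibly non-robust, tree nodes or solutions, so your claim that the chain's goal node is within $\epsilon_1$ of the best $c_1$ in $\Sols$ is unjustified, as is the analogous lower bound on witness-local minima in your last paragraph. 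It is also unnecessary, since completeness only needs $\Sols \neq \emptyset$, which holds because \textsc{PruneLexSet} always keeps the $c_1$-minimizer of a nonempty set.
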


\begin{theorem}[Asymptotic near-optimality of \lexSST]
\label{thrm:lexOptimal}
\lexSST is asymptotically $\delta$-robustly near-optimal for Problem~\ref{prob:lexicoMP} with bi-objective requirements.
\end{theorem}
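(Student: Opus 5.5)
The plan is to mirror the proof of Theorem~\ref{thm:poSST-optimal}, using Theorem~\ref{thrm:lexComplete} for the existence of a $\delta$-similar trajectory to $\lexMin$, and then to control the final selection step $\lexApprox = \arg\min_{\pi \in \Sols} c_N(\pi)$ with $N=2$.

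First, take the covering ball sequence $\mathbb{B}(\lexMin, \delta, \Delta c_1)$ from Definition~\ref{def:coveringBalls}. Apply Lemma~\ref{lemma:lexSST} inductively along the balls $\lexBall$, $j=0,\dots,M$. Once a node lies in $\mathcal{B}_{\delta_c}(\lexBall)$, some node of $\repLex$ with no worse cost persists in $\mathcal{B}_{\delta-\dps}(\lexBall)$. By Lemma~\ref{lem:poSST}, \textsc{ParetoSelect} chooses such a node with probability at least $\gamma_{\text{PS}}>0$. By \cite[Theorem~17]{SST}, \textsc{MonteCarloProp} then reaches $\mathcal{B}_{\delta_c}(x^*_{\text{lex},j+1})$ with positive probability. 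The usual geometric-trials argument then makes the probability of never completing all $M$ transitions decay to zero. So almost surely there is eventually some $\hat\pi \in \Sols$ that is $\delta$-similar to $\lexMin$. By Lipschitz continuity, accumulated per segment together with the $\witEps$ slack from \textsc{PruneDominated} (Lemma~\ref{lem:pruneDom}), it satisfies the bound~\eqref{eq:bounds} in each component:
\[
c_i(\hat\pi) \le \Bigl(1+\tfrac{\delta K_{c,i}+\varepsilon_i}{\Delta c_i}\Bigr) c_i(\lexMin), \quad i=1,2.
\]

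Second, handle the solution-set pruning in \textsc{AddNewNode}. Since $c_1(\lexMin)=c_1^*_\delta$ is the minimum primary cost among $\delta$-robust plans, the quantity $\hat c_1^*$ tracked in $\Sols$ is asymptotically at most $c_1(\hat\pi)$. The hypothesis on $\epsilon$ in Lemma~\ref{lemma:lexSST} then guarantees
\[
c_1(\hat\pi) \le c_1(\lexMin) + \tfrac{\delta K_{c,1}+\varepsilon_1}{\Delta c_1}c_1(\lexMin) \le \hat c_1^* + \epsilon_1 + (\text{Lipschitz slack}),
\]
so $\hat\pi$, or a non-dominated replacement within $\varepsilon$ of it, survives \textsc{PruneLexSet}. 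Consequently the returned $\lexApprox$ has $c_2(\lexApprox) \le c_2(\hat\pi)+\varepsilon_2$ and $c_1(\lexApprox)\le \hat c_1^*+\epsilon_1$. This gives
\[
h = \Bigl( c_1(\lexMin)+\epsilon_1+\tfrac{\delta K_{c,1}+\varepsilon_1}{\Delta c_1}c_1(\lexMin),\; \bigl(1+\tfrac{\delta K_{c,2}+\varepsilon_2}{\Delta c_2}\bigr)c_2(\lexMin)\Bigr),
\]
which is the error $\vec e_\lex=\vec e_\textsc{alg}+(\epsilon_1,0)$ of Section~\ref{subsec:eps-equivalence-2}. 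Because the $\limsup$ of $Y_n$ is controlled once $\hat\pi$ appears, and $\Sols$ only improves afterward (replacements via Lemma~\ref{lemma:lexSST} are weakly better, $C(x')\succeq C(x)$), we get $\mathbb{P}(\limsup_n Y_n \le h)=1$.

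The main obstacle is the second step. The threshold $\hat c_1^*$ in \textsc{PruneLexSet} is a moving window: a solution whose $c_1$ falls slightly below $c_1(\lexMin)$, due to Lipschitz error, can shrink the window and prune $\hat\pi$. I would handle this by bounding $\hat c_1^* \ge c_1(\lexMin) - \delta K_{c,1}$. This needs an argument that all returned solutions are themselves close to $\delta$-robust plans, or else it is absorbed into the choice of $\epsilon$ as the lemma's hypothesis does. The bound holds only for $N=2$, since for $N>2$ Section~\ref{subsec:eps-equivalence-3} shows the compounded filtering breaks it.
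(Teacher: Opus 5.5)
Your proposal is correct and follows the paper's proof: completeness (Theorem~\ref{thrm:lexComplete}) yields a $\delta$-similar $\hat\pi$ obeying \eqref{eq:bounds}, Lemma~\ref{lemma:lexSST} keeps a weakly dominating copy of it in $\Sols$, the $\epsilon$-window adds $\epsilon$ to the $c_1$ bound as in \eqref{eq:lex-optimality}, and the $\arg\min$ over $c_2$ controls the second component. The obstacle you flag closes with no slack: $\lexMin$ minimizes $c_1$ over all of $\sol$ (the paper assumes it is $\delta$-robust, $\lexMin\in\poDSol$), and every element of $\Sols$ is a valid motion plan, so $\hat c_1^*\ge c_1(\lexMin)$ exactly, which is the solution-set analogue of $\bar c_1^*\ge c_1(\lexBall)$ in the proof of Lemma~\ref{lemma:lexSST}; likewise, that lemma's weak dominance $C(\hat\pi')\succeq C(\hat\pi)$ removes the $+\varepsilon_2$ you derive on $c_2$, which your stated $h$ drops in any case.
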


\noindent Proofs are included in Appendices~\ref{proof:lemmaLexSST}, \ref{proof:lexComplete}, and \ref{proof:lexOptimal}, respectively.

In practice, $c_1(\lexMin)$ is unknown a priori, so a sufficiently large, fixed $\epsilon$ cannot be set to satisfy the condition in Lemma~\ref{lemma:lexSST}. However, a surrogate $\tilde\epsilon$ can be computed dynamically from the current best solution:
\[
\tilde\epsilon = \frac{\delta \cdot K_{c_1} + \varepsilon_1}{\Delta c_1} \cdot c_1(\hat\pi^*), \quad \text{where } \hat\pi^* = \arg\min_{\pi \in \algSol} c_1(\pi).
\]
Since $c_1(\hat\pi^*) \geq c_1(\lexMin)$, we maintain a conservative tolerance satisfying Lemma~\ref{lemma:lexSST}. As the algorithm progresses, $c_1(\hat\pi^*)$ decreases monotonically, $\tilde\epsilon$ shrinks accordingly, and
\[
\lim_{n \to \infty} c_1(\hat\pi^*) \to c_1(\lexMin), \qquad \lim_{n \to \infty} \tilde\epsilon \to \frac{\delta \cdot K_{c_1} + \varepsilon_1}{\Delta c_1} \cdot c_1(\lexMin).
\]
Since $\delta$, $\delta_s$, and $\vec\varepsilon$ are user-defined, the sub-optimality bound can be driven to zero, and by Eq.~\eqref{eq:lex-optimality} in the proof:
\[
C(\lexApprox) \to C(\lexMin) \quad \text{as} \quad \dps, \delta_s \to 0 \text{ and } \vec\varepsilon \to 0.
\]

An asymptotically optimal variant $\lexSST^*$ can be obtained by applying a shrinking schedule to $\dps$, $\delta_s$, and $\vec\varepsilon$, analogous to that of $\textsc{SST}^*$~\cite{SST}, thereby progressively densifying the search tree in both the state and objective spaces.

% In Fig.~\ref{fig:2DUpperBound}, the admissible solution region of \lexSST (given by \eqref{eq:lex-optimality}) is shown in blue.
% , with the corresponding worst-case cost shown in light purple.

% \ml{either remove this discussion, or modify it by saying that designing an alg. for $N>2$ remains and open research question.}
% We conclude with a remark on the fundamental difficulty of lexicographic minimization in continuous cost spaces. As discussed in Section~\ref{subsec:e-eqivalence}, the $\epsilon$-equivalence approach does not extend to $N > 2$ objectives: even if the true Pareto front is approximated with arbitrary accuracy, there is no guarantee that a node representing the lexicographic minimum is retained throughout the filtering procedure.

% \begin{remark}
% \ml{remove remark and discuss in the conclusion section}
% Let $\C = \{C_1, C_2, \dots, C_N\}$ be a set of vectors $C \in \mathbb{R}^N$, and let $\hat{L} \in \C$ deviate from the true lexicographic minimum $L$ by a known bound $\vec\epsilon \in (0, \epsilon_i]^N$ (i.e., $\hat{L} = L + \vec\epsilon$). Then there \emph{does not exist} an algorithm that can identify $\hat{L}$ with bounded error
% \ml{this is a huge claim.  Can you prove it?}\yr{proof: I thought hard about it and couldn't find a way.}
% .
% \end{remark}

\subsection{Analysis of $\conSST$}
\label{subsec:analyzeConSST}

The analysis of \conSST mirrors that of \lexSST, with the key difference that \textsc{PruneConSet} enforces a fixed constraint window $\bar{c}_i + \bar\epsilon_i$ rather than the shifting $\epsilon$-window of \textsc{PruneLexSet}. We show this substitution preserves the ability to generate a $\delta$-similar trajectory to $\conMin$, the motion plan minimizing $c_N$ subject to all $N-1$ constraints. For compactness, we write $i \neq N$ to mean $i \in \{1, \dots, N-1\}$.

\begin{lemma}
\label{lemma:conSST}
Let $\bar\epsilon_i \geq \frac{\delta \cdot K_{c,i} + \varepsilon_i}{\Delta c_i} \cdot c_i(\conMin)$ for all $i \neq N$, and let $\repCon$ be the representative set of a witness 
% \ml{one witness?}
centered at $s \in \mathcal{B}_{\delta_c + \delta_s}(\conBall)$. If a state $x \in \repCon$ is generated at iteration $n$ such that $x \in \mathcal{B}_{\delta_c}(\conBall)$ and
\[
c_i(\overline{x_0 \rightarrow x}) \leq \frac{\delta \cdot K_{c,i} + \varepsilon_i}{\Delta c_i} \cdot c_i(\conMin) \quad \forall i \neq N,
\]
then for every iteration $n' > n$, there exists a state $x' \in \repCon$ such that $x' \in \mathcal{B}_{\delta - \dps}(\conBall)$ and $C(x') \succeq C(x)$.
\end{lemma}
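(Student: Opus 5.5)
The argument mirrors the one for Lemma~\ref{lemma:lexSST}. It reduces to Lemma~\ref{lem:pruneDom}, after checking that the extra filter \textsc{PruneCoSet} never removes the node that Lemma~\ref{lem:pruneDom} guarantees. The first step is to observe that node removal from $\repCon$ happens only through two routines. \textsc{PruneDominated} removes nodes that are dominated or $\witEps$-redundant. \textsc{PruneCoSet} removes a node $x$ only when $c_i(\overline{x_0\rightarrow x}) > \bar c_i + \bar\epsilon_i$ for some $i \neq N$. These are exactly the routines analyzed for \poSST, plus the static threshold. Unlike \textsc{PruneLexSet}, the threshold does not move with the current best cost, so no iteration-dependent window needs to be tracked.

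The plan proceeds by induction on iterations $n' > n$, maintaining the following invariant: there is a node $x'\in\repCon$ with $x' \in \mathcal{B}_{\delta-\dps}(\conBall)$ and $C(x') \succeq C(x)$, and moreover $c_i(x') \le c_i(x) + \varepsilon_i$ for each $i \neq N$. At $n'=n$ the invariant holds with $x'=x$. For the inductive step, suppose a new node $x_{new}$ in the witness at $s$ triggers pruning. If $x'$ is removed by \textsc{PruneDominated}, Lemma~\ref{lem:pruneDom} supplies a replacement $x''$. This replacement lies in $\mathcal{B}_{\delta-\dps}(\conBall)$, since the witness centre satisfies $s\in\mathcal{B}_{\delta_c+\delta_s}(\conBall)$ and Proposition~\ref{prop:deltas} places every node of $\ball(s)$ inside that radius. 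Its costs either dominate those of $x'$ or are within $\witEps$ of them. The new node $x_{new}$ is itself subject to \textsc{PruneCoSet}, but it is retained only if it satisfies the constraint window; if it is retained, it is a legitimate replacement.

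The key step is to show that the surviving node is never deleted by \textsc{PruneCoSet}. For each $i\neq N$ I will combine three facts: the hypothesis $c_i(\overline{x_0\rightarrow x}) \le \frac{\delta K_{c,i}+\varepsilon_i}{\Delta c_i}c_i(\conMin)$, the $\varepsilon_i$ slack from the sparsity pruning, and the choice $\bar\epsilon_i \ge \frac{\delta K_{c,i}+\varepsilon_i}{\Delta c_i}c_i(\conMin)$. From these I aim to conclude $c_i(x') \le \bar c_i + \bar\epsilon_i$, so that $x'$ passes every filter in \textsc{PruneCoSet}.

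I expect this bookkeeping to be the main obstacle. Read literally, the hypothesis bounds the cost by the error term alone, whereas the natural quantity is $c_i$ along the $\delta$-similar prefix. That prefix is bounded via Lipschitz continuity and the covering-ball argument of Theorem~\ref{thm:poSST-optimal} by $c_i(\conMin\text{ prefix}) + \delta K_{c,i} + \varepsilon_i$, and monotonicity (Assumption~\ref{assumption:CostFunction}) gives $c_i(\conMin\text{ prefix}) \le c_i(\conMin) \le \bar c_i$. So I would first interpret the hypothesis as bounding the excess over the reference prefix. I would then use the per-segment ratio $\Delta c_i$ exactly as in~\eqref{eq:bounds} to show that the accumulated excess never exceeds $\bar\epsilon_i$. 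This ensures the constraint window, and hence \textsc{PruneCoSet}, never cuts the chain. Once the cost margin is established, the conclusion $C(x')\succeq C(x)$ with $x'\in\mathcal{B}_{\delta-\dps}(\conBall)$ follows for all $n'>n$.
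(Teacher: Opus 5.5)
Your proposal is correct at the paper's level of rigor and follows essentially the same route. The paper likewise inherits persistence from the \textsc{poSST} pruning analysis (Lemma~\ref{lem:pruneDom}) and adds only a check that the static window of \textsc{PruneCoSet} never cuts the covering-ball chain, via $c_i(x_j) \leq c_i(\conBall) + j(\delta K_{c,i} + \varepsilon_i) \leq \bar{c}_i + \bar\epsilon_i$ with $j \leq c_i(\conMin)/\Delta c_i$ and $c_i(\conBall) \leq c_i(\conMin) \leq \bar{c}_i$; your reading of the hypothesis as a bound on the excess over the reference prefix is exactly how the paper's proof uses it.
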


\begin{theorem}[Completeness of \conSST]
\label{thrm:conComplete}
\conSST is probabilistically $\delta$-robustly complete for Problem~\ref{prob:conOptMP}.
\end{theorem}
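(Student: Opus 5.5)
The plan mirrors the proof of Theorem~\ref{thm:poSST-complete}, substituting Lemma~\ref{lemma:conSST} wherever the \poSST argument invoked Lemma~\ref{lem:pruneDom}. Assume Problem~\ref{prob:conOptMP} admits a $\delta$-robust feasible solution. I will take the reference trajectory to be $\conMin$, the $\delta$-robust minimizer of $c_N$ subject to $c_i \le \bar c_i$ for $i\neq N$. If instead only feasibility is of interest, any $\delta$-robust constraint-satisfying motion plan works as the reference. Construct the covering ball sequence $\mathbb{B}(\conMin,\delta,\Delta c_1)$ of Definition~\ref{def:coveringBalls} with centers $\conBall$, $j=0,\dots,M$. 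The aim is to show that, with probability tending to one, the tree contains a node in every $\mathcal{B}_\delta(\conBall)$, connected consecutively. That yields a $\delta$-similar motion plan, which reaches $\X_G$ because $\conMin$ is $\delta$-robust.

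The argument is an induction over $j$. The base case holds since $x_0 = \conBall[0]$ lies in the tree and, being the root, cannot be removed. For the inductive step, suppose a node $x$ with $x\in\mathcal{B}_{\delta_c}(\conBall)$ has been added. Two things must hold for it to persist.

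First, $x$ must not be pruned away without a useful replacement. Because $x$ lies on a $\delta$-similar prefix of $\conMin$, the Lipschitz bound of Assumption~\ref{assumption:CostFunction} gives $c_i(\overline{x_0\to x}) \le c_i(\conMin^{t}) + \delta K_{c,i}$. Adding the $\varepsilon_i$ slack from \textsc{PruneDominated} and normalizing by the segment count, as in \eqref{eq:bounds}, keeps the accumulated excess below $\frac{\delta K_{c,i}+\varepsilon_i}{\Delta c_i}c_i(\conMin)$. Monotonicity gives $c_i(\conMin^{t})\le c_i(\conMin)\le \bar c_i$. So with the buffer $\bar\epsilon_i$ chosen as in Lemma~\ref{lemma:conSST}, $x$ survives \textsc{PruneCoSet}. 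Lemma~\ref{lemma:conSST} then guarantees that for all later iterations some $x'\in\repCon$ lies in $\mathcal{B}_{\delta-\dps}(\conBall)$ with $C(x')\succeq C(x)$. The relation $C(x')\succeq C(x)$ is what keeps $x'$ itself within the constraint window.

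Second, given such a persistent node, Lemma~\ref{lem:poSST} (equivalently Lemma~\ref{lem:ParetoSelect}) lower-bounds by $\gamma_{\text{PS}}>0$ the probability that \textsc{ParetoSelect} picks a node in $\mathcal{B}_\delta(\conBall)$. The control-sampling result \cite[Theorem~17]{SST} gives a positive probability $\rho>0$ that \textsc{MonteCarloProp} lands in $\mathcal{B}_{\delta_c}(\conBall[j+1])$ along a collision-free edge. Proposition~\ref{prop:deltas} ensures the radii are consistent. Each iteration therefore advances the chain with probability at least $p=\gamma_{\text{PS}}\rho$, independently of the tree's history.

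To conclude, the standard SST Bernoulli/Markov-chain argument over $M$ stages shows that the probability of not having traversed all balls after $n$ iterations is at most a binomial tail $\mathbb{P}(\mathrm{Bin}(n,p)<M)\to 0$. This gives the $\liminf$ equal to $1$ required by Definition~\ref{def:delta-robust-completeness}. I expect the main obstacle to be the first sub-step: verifying that the inductively accumulated cost of the $\delta$-similar prefix actually satisfies the hypothesis of Lemma~\ref{lemma:conSST}, and that replacement nodes $x'$ stay constraint-admissible. Both the $\delta K_{c,i}$ drift and the $\varepsilon_i$ sparsity error compound across segments, so the argument must tie this compounding precisely to the $\frac{\delta K_{c,i}+\varepsilon_i}{\Delta c_i}$ factor via the segment count $k = c_i(\conMin)/\Delta c_i$.
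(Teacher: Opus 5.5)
Your proposal is correct and follows the paper's argument: you induct along the covering balls of $\conMin$, with Lemma~\ref{lemma:conSST} (taking the place of Lemma~\ref{lemma:lexSST} in the proof of Theorem~\ref{thrm:lexComplete}) guaranteeing that \textsc{PruneCoSet} always retains a node in $\mathcal{B}_{\delta}(\conBall)$, Lemma~\ref{lem:ParetoSelect} giving a positive selection probability, and \cite[Theorem~17]{SST} giving a positive probability of propagating into the next ball. Your binomial-tail conclusion spells out the standard SST step that the paper leaves implicit, but your aside that any constraint-satisfying $\delta$-robust reference would work is not covered by Lemma~\ref{lemma:conSST}, whose buffer condition and proof (which uses Pareto-optimality) are tied to $\conMin$.
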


\begin{theorem}[Asymptotic near-optimality of \conSST]
\label{thrm:conOptimal}
\conSST is asymptotically $\delta$-robustly near-optimal for Problem~\ref{prob:conOptMP}.
\end{theorem}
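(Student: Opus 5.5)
The plan is to mirror the argument for Theorem~\ref{thm:poSST-optimal} and Theorem~\ref{thrm:lexOptimal}, using Theorem~\ref{thrm:conComplete} and Lemma~\ref{lemma:conSST} to guarantee that a $\delta$-similar trajectory to $\conMin$ is eventually generated and never pruned. The argument then turns state-space closeness into a cost bound on $c_N$, and checks that the returned $\conApprox = \arg\min_{\pi\in\Sols} c_N(\pi)$ inherits it. Throughout I take $\conMin$ to be $\delta$-robust, i.e., the optimum over $\delta$-robust plans satisfying the constraints, so that the comparison is with $c^*_\delta = c_N(\conMin)$ as in Def.~\ref{def:asymp_delta_near_opt}.

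\textbf{Step 1 (reaching and retaining the covering balls).} Build the covering ball sequence $\mathbb{B}(\conMin,\delta,\Delta c)$ of Def.~\ref{def:coveringBalls}. Proceed by induction over the balls $\mathcal{B}_\delta(\conBall)$. Suppose a node lies in $\mathcal{B}_{\delta_c}(\conBall)$ with prefix costs within the bound in the hypothesis of Lemma~\ref{lemma:conSST}. By that lemma, \textsc{PruneCoSet} and \textsc{PruneDominated} always leave a node $x'\in\mathcal{B}_{\delta-\dps}(\conBall)$ in $\repCon$ whose cost is no worse up to $\witEps$. By Lemma~\ref{lem:poSST}, such a node is selected with probability at least $\gamma_{\text{PS}}>0$. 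By \cite[Theorem~17]{SST}, \textsc{MonteCarloProp} then reaches $\mathcal{B}_{\delta_c}(x^*_{\text{co},j+1})$ with positive probability. A geometric-trials argument (Borel--Cantelli over the finitely many $M+1$ balls) shows that, almost surely, some node in the final ball, inside $\X_G$, is eventually added to $\Sols$. Its trajectory $\pi$ is $\delta$-similar to $\conMin$.

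\textbf{Step 2 (cost bound).} Accumulate the per-segment errors as in the proof of Theorem~\ref{thm:poSST-optimal}. On each segment, the cost deviation from Lipschitz continuity (Assumption~\ref{assumption:CostFunction}) is at most $\delta K_{c,i}$, and the witness/$\witEps$ replacement adds at most $\varepsilon_i$. Summing over the $c_i(\conMin)/\Delta c_i$ segments gives \eqref{eq:bounds} for all $i$. For $i\neq N$, this yields $c_i(\pi)\le \bar c_i+\bar\epsilon_i$, which is exactly why the choice of $\bar\epsilon_i$ in Lemma~\ref{lemma:conSST} keeps $\pi$ from being pruned. For $i=N$, it yields
\[
c_N(\pi)\le h(c^*_\delta,\delta):=\Bigl(1+\frac{\delta K_{c,N}+\varepsilon_N}{\Delta c_N}\Bigr)c_N(\conMin).
\]

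\textbf{Step 3 (returned solution).} The cost vector of each stored solution node is non-increasing over iterations: replacements in $\Sols$ only happen by dominance, or within $\witEps$. Hence $Y_n=c_N(\conApprox)\le c_N(\pi)$ for all $n$ after the hitting time, so almost surely $\limsup_n Y_n\le h(c^*_\delta,\delta)$ with $h\ge c^*_\delta$. This proves asymptotic $\delta$-robust near-optimality in the sense of Def.~\ref{def:asymp_delta_near_opt}. A returned solution may violate a constraint by at most $\bar\epsilon_i$, which is the relaxation the algorithm's buffer allows.

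\textbf{Main obstacle.} I expect the delicate step to be Step~3's interaction between the $\witEps$-sparsity in \textsc{PruneDominated} applied to $\Sols$ and the retention guarantee. A near-duplicate solution within $\varepsilon_N$ may replace $\pi$, so the bound must absorb an extra $\varepsilon_N$. I would handle this by showing that any replacing solution is within $\witEps$ componentwise, so the surviving representative satisfies the same bound up to the $\varepsilon_N$ term already in $h$.
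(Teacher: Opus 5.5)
Your proposal is correct and follows essentially the same route as the paper. The paper restricts the accumulation argument of Theorem~\ref{thm:poSST-optimal} to $\conMin$ to obtain a $\delta$-similar plan satisfying \eqref{eq:bounds}, uses Lemma~\ref{lemma:conSST} to keep that plan (or a no-worse replacement) in $\Sols$, and reads off the $c_N$ bound from $\conApprox=\arg\min_{\pi\in\Sols}c_N(\pi)$. Your extra care goes slightly beyond what the paper writes: the almost-sure geometric-trials argument; stating the constraint slack as $c_i(\conApprox)\le\bar c_i+\bar\epsilon_i$, which is the guarantee that actually holds for $i\neq N$ because $\conApprox$ is chosen by $c_N$ alone; and tracking the $\varepsilon_N$ lost to sparsity in $\Sols$, which strictly adds one more $\varepsilon_N$ on top of $h$ rather than being absorbed by it, though this is harmless for Def.~\ref{def:asymp_delta_near_opt}.
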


\noindent Proofs are included in Appendices~\ref{proof:conSST}, \ref{proof:conComplete}, and \ref{proof:conOptimal}, respectively.

Since $c_i(\conMin)$ is unknown a priori, $\bar\epsilon_i$ can be conservatively set using the user-defined bounds $\bar{c}_i$:
\[
\tilde\epsilon_i = \frac{\delta \cdot K_{c,i} + \varepsilon_i}{\Delta c_i} \cdot \bar{c}_i \quad \forall i \neq N,
\]
which satisfies Lemma~\ref{lemma:conSST} since $c_i(\conMin) \leq \bar{c}_i$ by the problem statement. As with \lexSST, the sub-optimality bound decreases monotonically as $\dps, \delta_s \to 0$ and $\vec\varepsilon \to 0$, and an asymptotically optimal variant, $\conSST^*$, follows by applying a shrinking schedule analogous to that of $\textsc{SST}^*$~\cite{SST}.

\section{Evaluations}
\label{sec:eval}

We present a series of case studies designed to validate the theoretical contributions of this work and demonstrate the practical advantages of our proposed algorithms. We first introduce the experimental setup, then present our evaluations of \lexSST, \conSST, and \poSST in the subsequent subsections. 
\subsection{Overview}
\subsubsection*{Workspaces}
Experiments are conducted across four workspaces shown in Fig.~\ref{fig:Workspaces}: a simple environment (\ws), a two-homotopy-class environment (\wss), a three-homotopy-class environment (\wsss), and a cluttered environment with many homotopic solution classes (\wssss). 
% The workspaces are shown in Fig.~\ref{fig:Workspaces}.

\subsubsection*{Dynamics Models}

We consider two dynamic models: (i) a 2D double integrator (\DI) with state $x = [p_x, p_y, v_x, v_y]^\top$ comprising position and velocity, and control input $u = [a_x, a_y]^\top$ representing acceleration, and 
% The discrete-time dynamics are given by $x_{k+1} = Ax_k + Bu_k$, where
% %
% \[
% A = \begin{bmatrix} 1 & 0 & 1 & 0 \\ 0 & 1 & 0 & 1 \\ 0 & 0 & 1 & 0 \\ 0 & 0 & 0 & 1 \end{bmatrix}, \quad
% B = \begin{bmatrix} 0 & 0 \\ 0 & 0 \\ 1 & 0 \\ 0 & 1 \end{bmatrix},
% \]
% %
% with state and control limits that vary by workspace.
% 
% The second model is 
(ii) a 4D bicycle model (\BI) 
% representing a car-like robot 
with 
% state $x = [p_x, p_y, \theta, \lambda]^\top$, 
dynamics:
\[
\dot{p}_x = u_v \cos\theta, \quad \dot{p}_y = u_v\sin\theta, \quad \dot\theta = \frac{u_v}{L}\tan\lambda, \quad \dot\lambda = u_{\dot\lambda},
\]
where $(p_x, p_y)$ is the rear-axle position, $\theta$ is the heading angle, and $\lambda$ is the steering angle. The control input 
% $u = [u_v, u_{\dot\lambda}]^\top$ 
consists of the longitudinal velocity $u_v$ and the steering rate $u_{\dot\lambda}$.
% The continuous-time dynamics are given by:
% 
% where $L$ is the wheelbase length. Forward propagation is performed using the Runge-Kutta (RK4) integration method.

\subsubsection*{Objective functions}
% We define several objective functions to evaluate the quality of solution trajectories. 
% The cost functions corresponding to each objective are as follows:
%
We consider three cost functions:
\begin{align*}
\PL: \quad c(\pi) &= \int_\pi dy, \\
\CF: \quad c(\pi) &= \int_\pi \mathcal{N}(y) dy, \\
\CL: \quad c(\pi) &= 100 - \min_{x \in \pi} \mathcal{\check{D}}(x, \X_O),
\end{align*}
where $\mathcal{\check{D}}(x, \X)$ is the shortest distance from state $x$ to the set $\X_O$. Each experiment is given two of the above objectives, with the goal of minimizing the associated cost functions.

For example, a subset of Pareto-optimal trajectories for objectives \PL and \CL is shown in Figs.~\ref{fig:WS1} and \ref{fig:WS2} for workspaces \ws and \wss, respectively. Due to the simplicity of these environments and the interdependence of the objectives, the Pareto front can be computed analytically. 
% \ml{shown in Figs. ???}\yr{added in  parenthesis above.}
% \ml{what are the system dynamics considered?} \yr{added below, maybe too detailed.}

\begin{figure}[htbp]
    \centering
    \begin{subfigure}{0.24\textwidth}
        \centering
        \includegraphics[width=\textwidth]{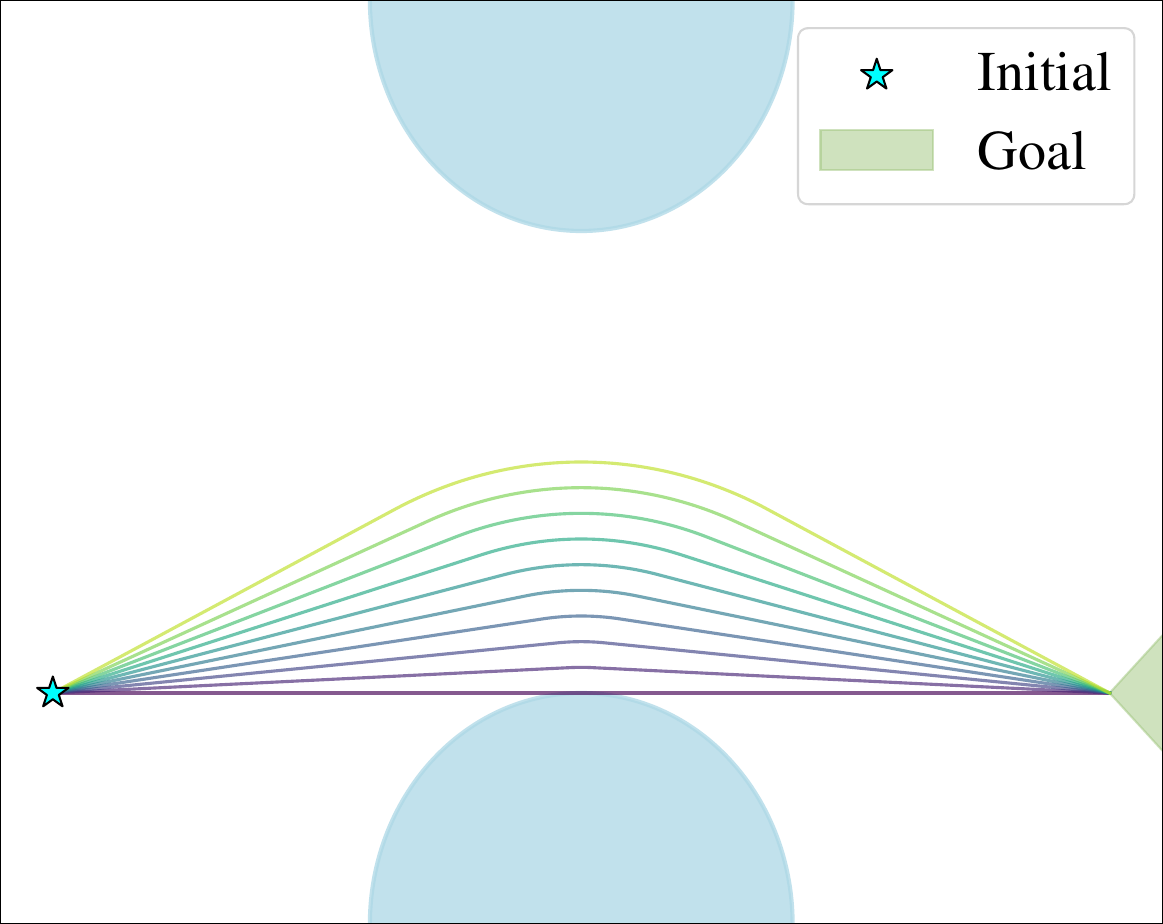}
        \caption{\ws.} 
        \label{fig:WS1}
    \end{subfigure}
    \begin{subfigure}{0.24\textwidth}
        \centering
        \includegraphics[width=\textwidth]{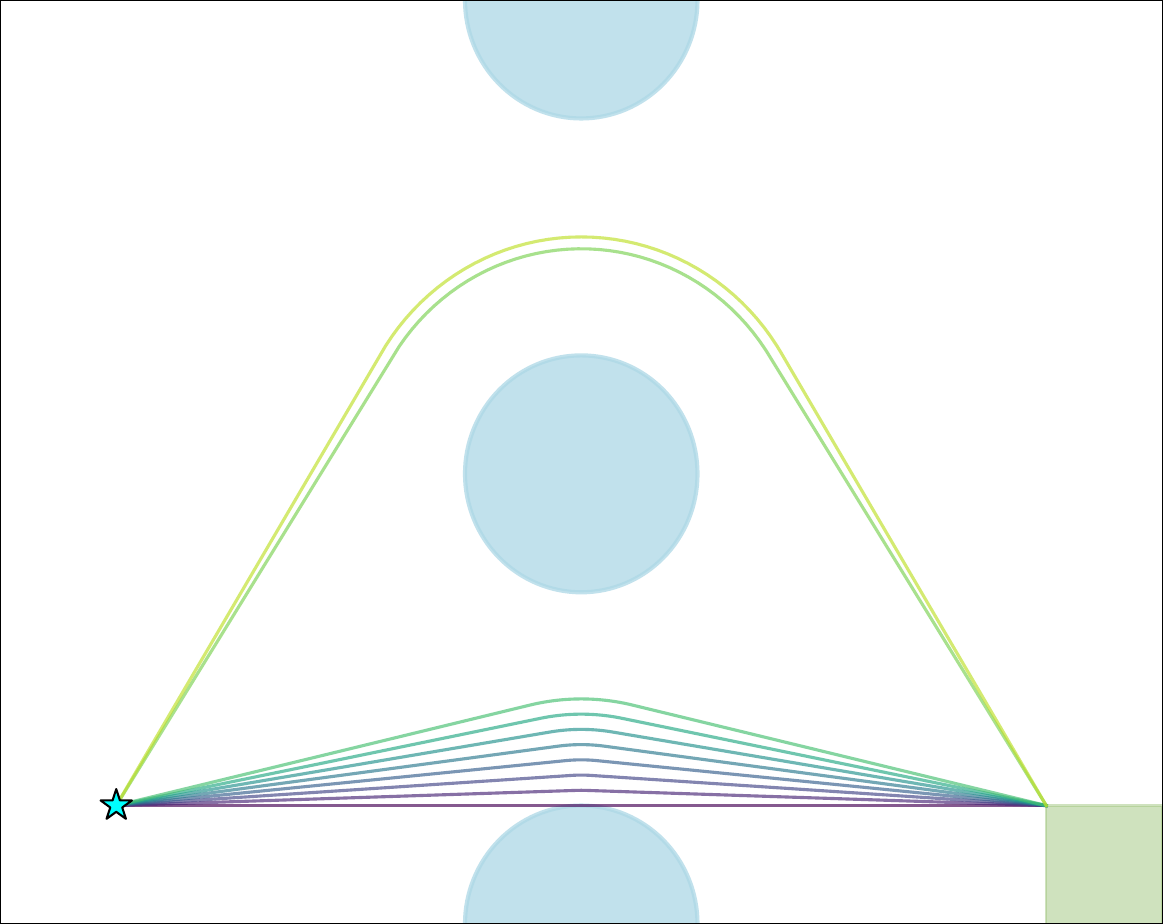}
        \caption{\wss.} 
        \label{fig:WS2}
    \end{subfigure} 
    \begin{subfigure}{0.24\textwidth}
        \centering
        \includegraphics[width=\textwidth]{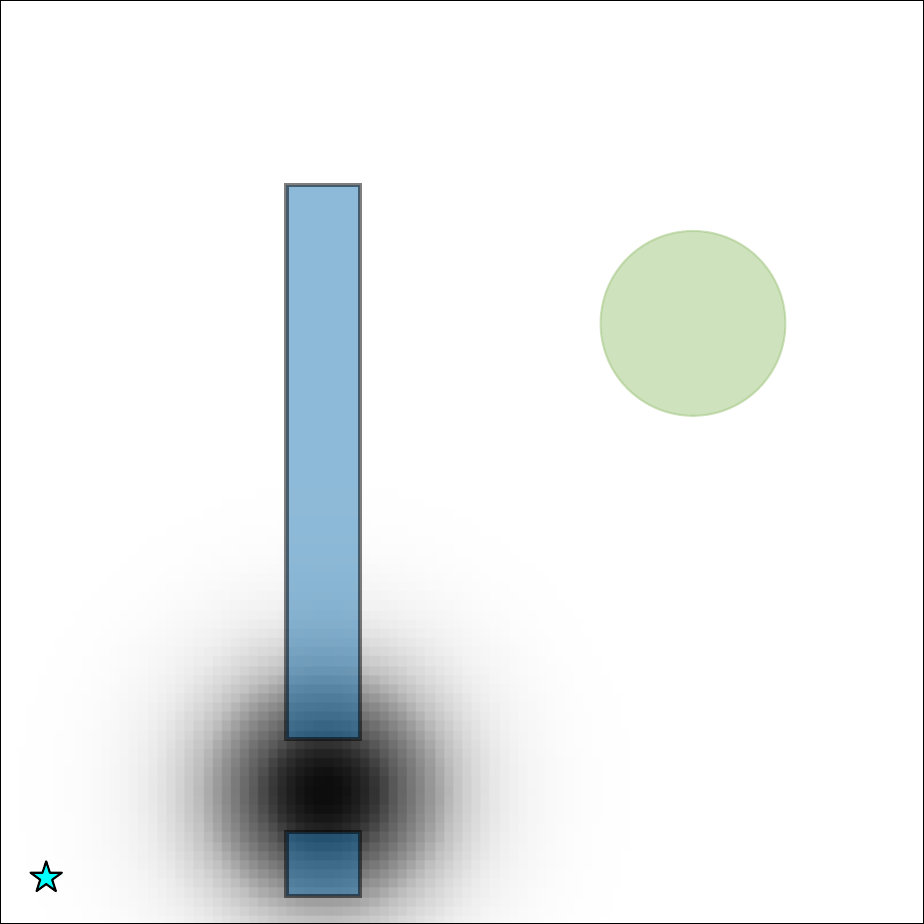}
        \caption{\wsss.} 
        \label{fig:WS3}
    \end{subfigure} 
    \begin{subfigure}{0.24\textwidth}
        \centering
        \includegraphics[width=\textwidth]{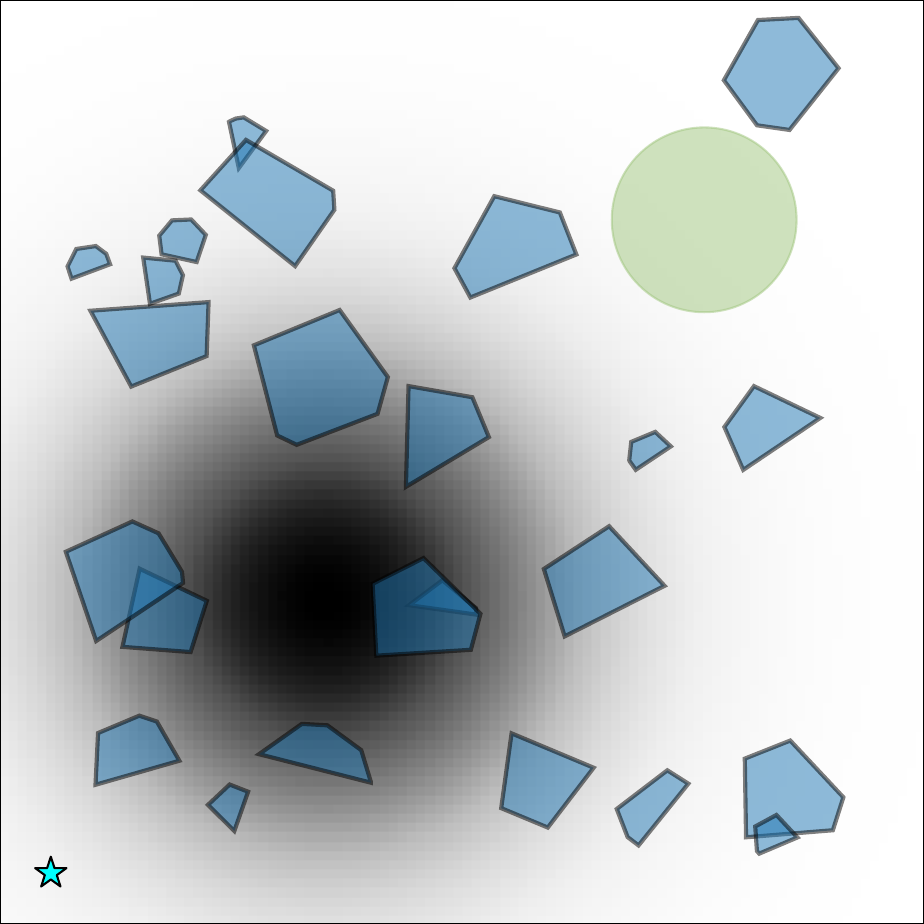}
        \caption{\wssss.} 
        \label{fig:WS4}
    \end{subfigure}    
    \caption{Considered workspaces. 
    % used for the evaluation of \lexSST, \conSST, and \poSST.
    Shades of gray in (c) and (d) correspond to the values of \CF.
    }
    \label{fig:Workspaces}
\end{figure}

\subsubsection*{Setup}
Results are reported across 100 independent runs and compared against weighted-sum SST (\wSST) as the primary baseline. Each case study is designed to isolate a specific aspect of the proposed approach, progressing from single-solution problems to full Pareto front approximation.

% \wSST is obtained by replacing the single cost of \SST with a weighted sum of the cost objectives. \lexSST, \conSST, and \poSST each extend the base \SST structure by replacing the single representative per witness with their respective representative set definitions. 
All algorithms were implemented in C++ as extensions to the \SST implementation provided by the Open Motion Planning Library (OMPL)~\cite{OMPL}. 
No parallelization was employed; all simulations were run sequentially on a machine equipped with an AMD Ryzen\texttrademark\ 5 8645HS processor (4.3\,GHz base clock) and 16\,GB of RAM. The implementation will be made publicly available upon the acceptance of the article.
% \ml{make sure, in every case study, the dynamics and planning times are clearly stated.}
\subsection{Lexicographic Minimization}

\subsubsection*{Case Study 1 -- \lexSST vs \SST}
\label{subsec:cs1}

\begin{figure*}[t]
    \centering
    \begin{subfigure}{0.21\textwidth}
        \centering
        \includegraphics[width=\textwidth]{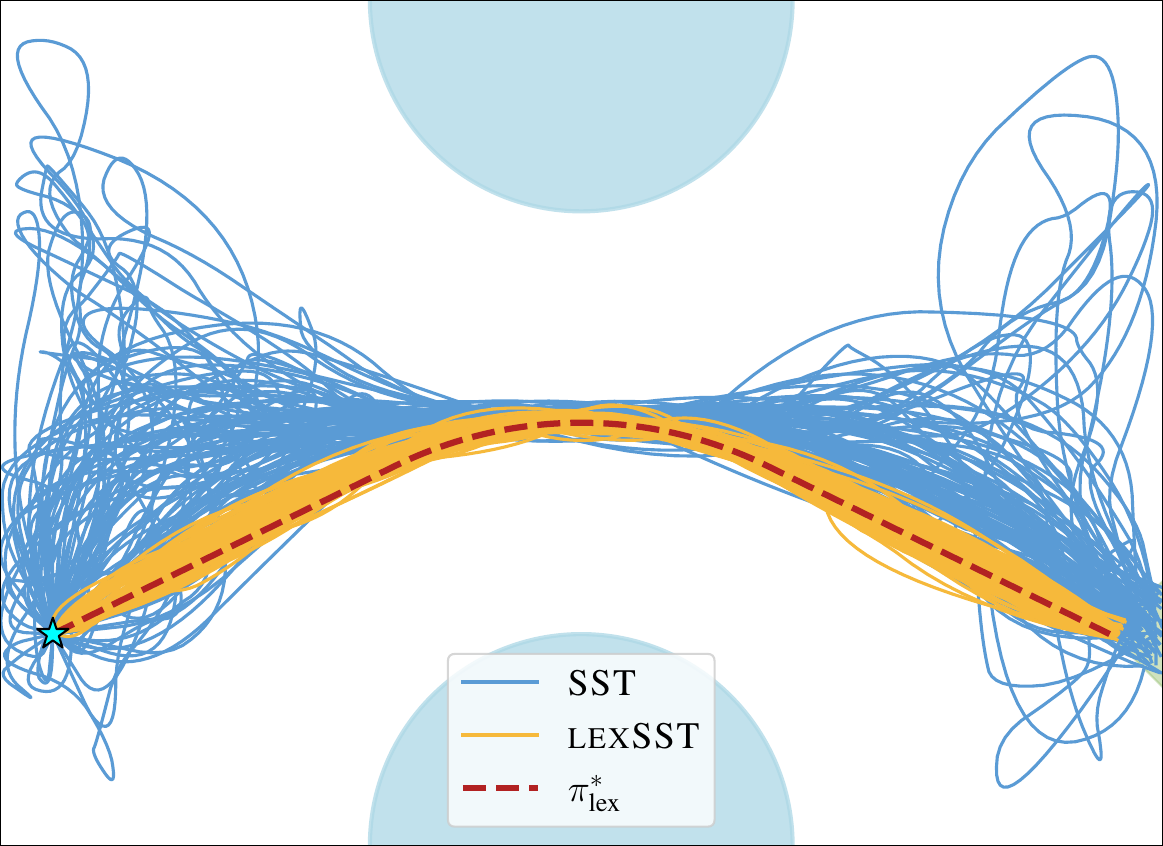}
        \vspace{0mm}
        \caption{Solution trajs. in \ws.}
        \label{fig:WS1lex traj}
    \end{subfigure}
    \begin{subfigure}{0.25\textwidth}
        \centering
        \includegraphics[width=\textwidth]{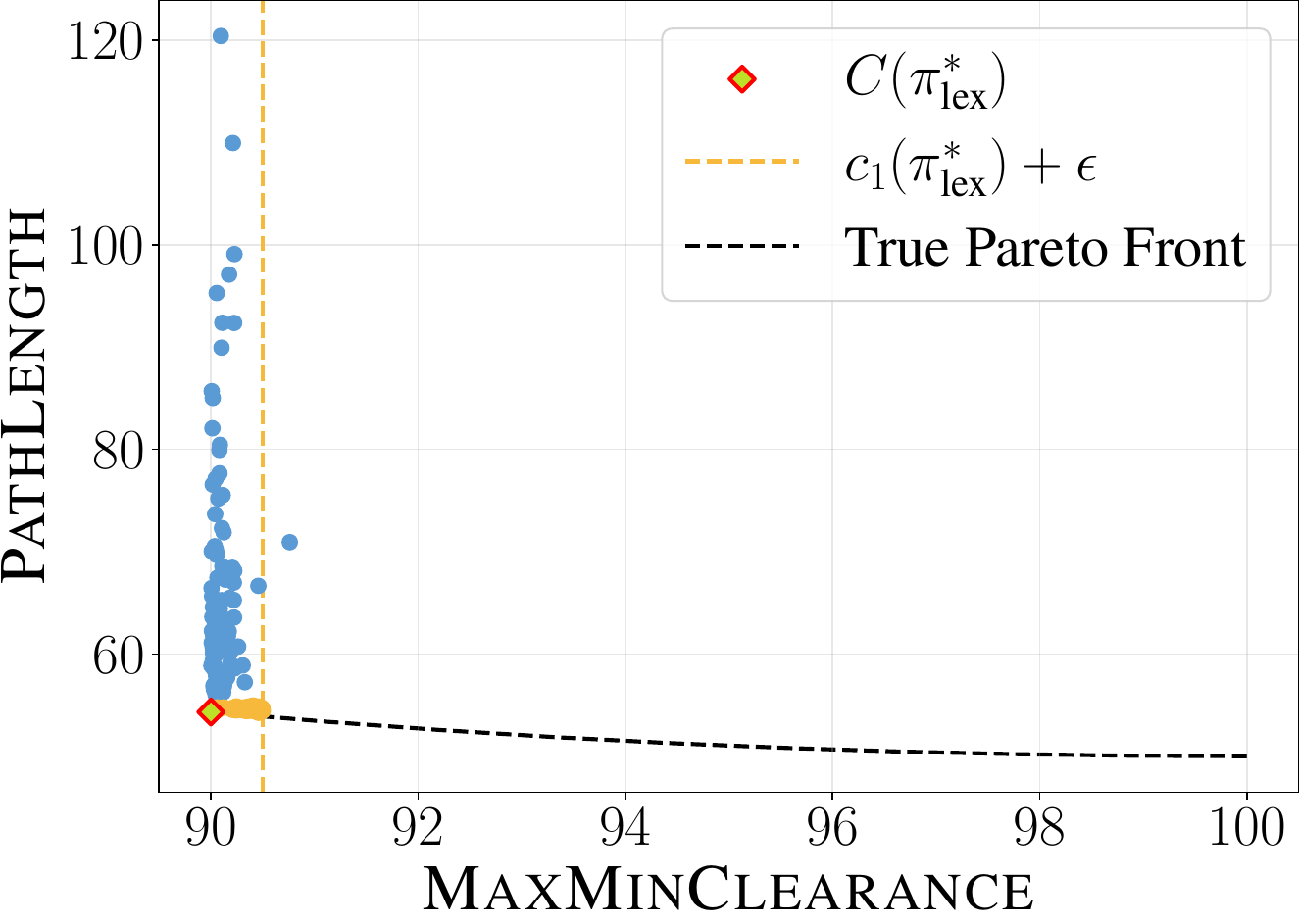}
        \caption{Solutions in objective space.}
        \label{fig:WS1lex costs}
    \end{subfigure} 
    \begin{subfigure}{0.25\textwidth}
        \centering
        \includegraphics[width=\textwidth]{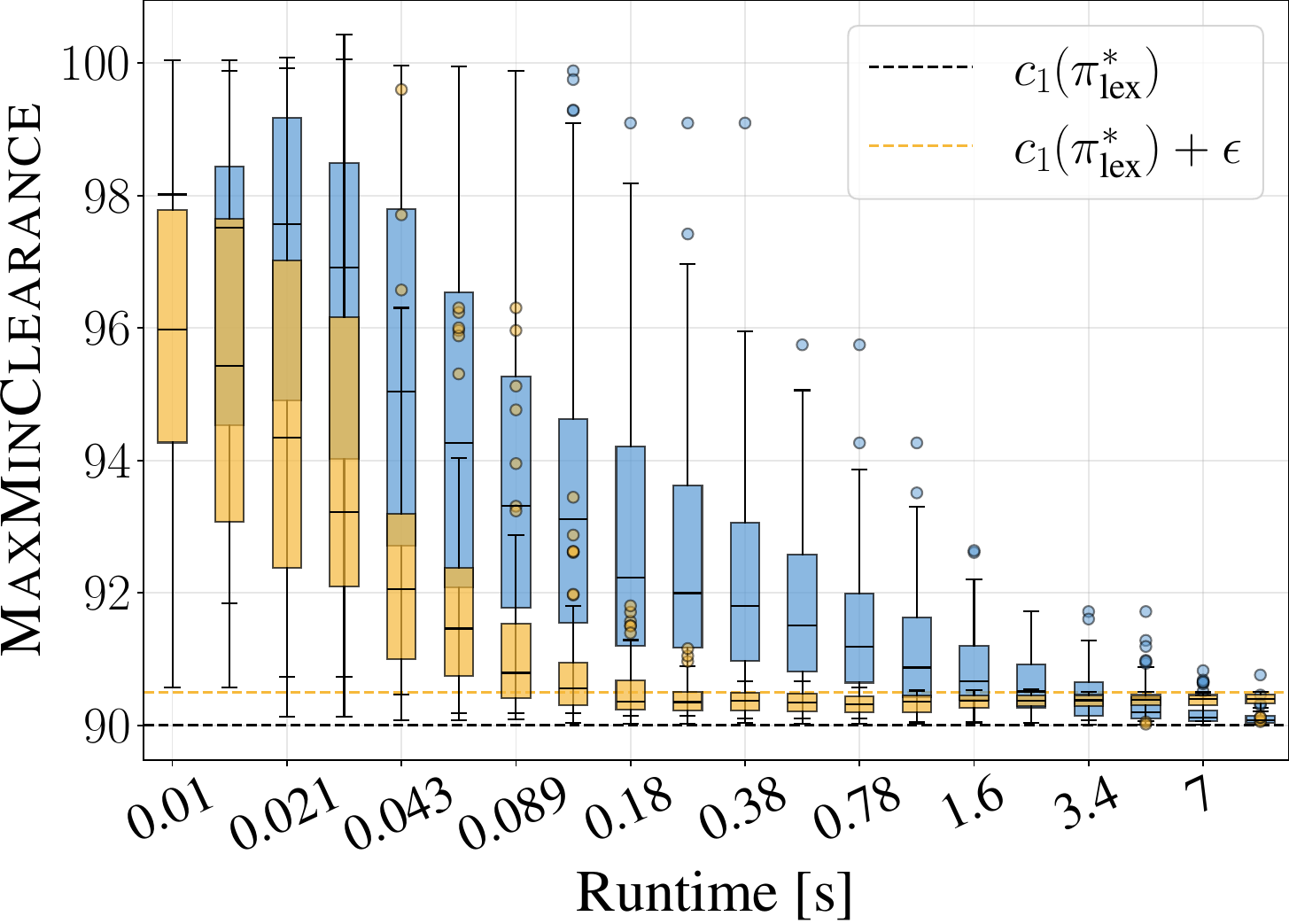}
        \caption{Time evolution of $c_1(\algSol)$.}
        \label{fig:WS1lex boxplot evol c1}
    \end{subfigure} 
    \begin{subfigure}{0.25\textwidth}
        \centering
        \includegraphics[width=\textwidth]{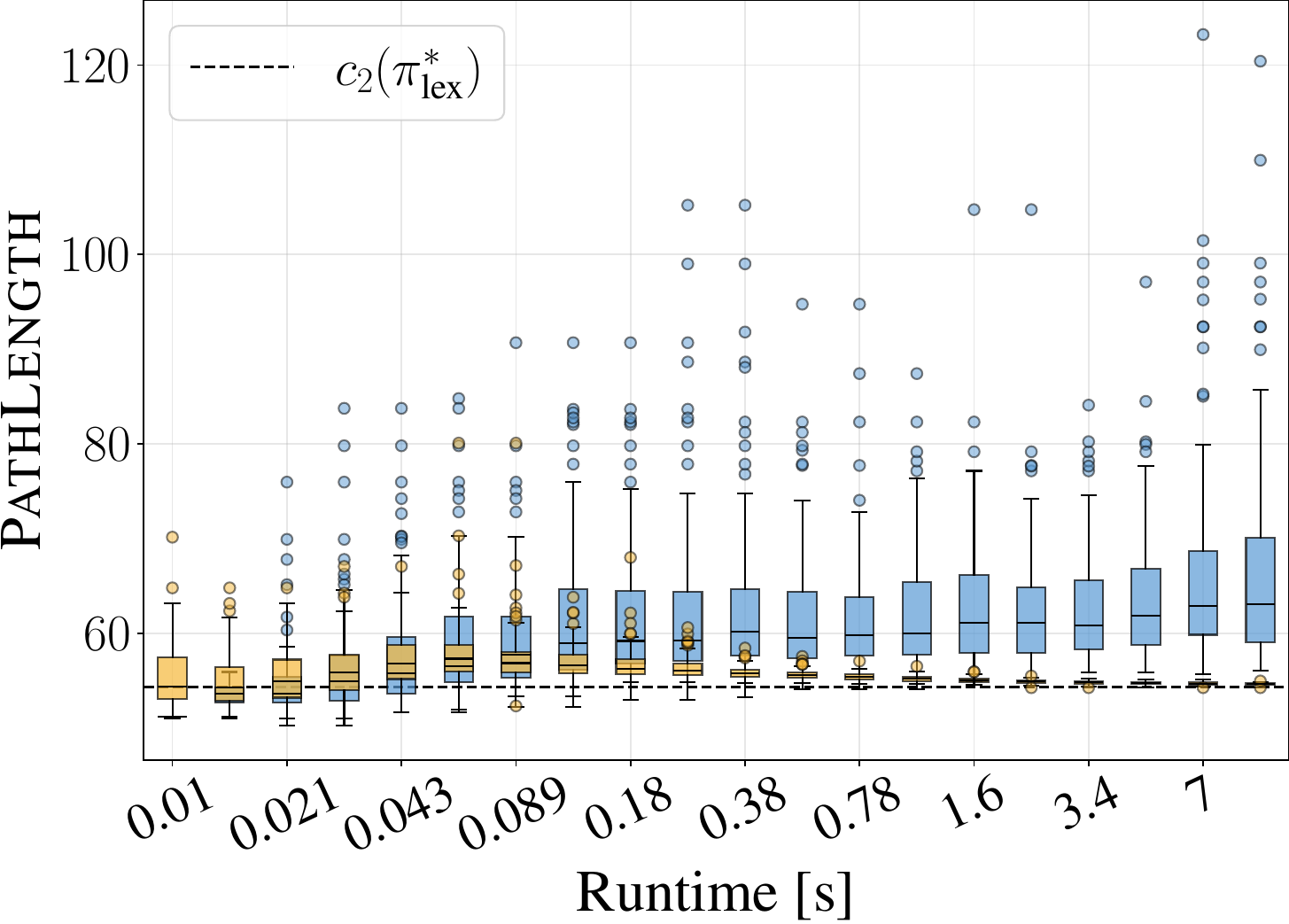}
        \caption{Time evolution of $c_2(\algSol)$.}
        \label{fig:WS1lex boxplot evol c2}
    \end{subfigure}    
    \caption{Case Study 1 (\lexSST vs. \SST). Lexicographic objectives: \CL (1st) and \PL (2nd).}
    \label{fig:WS1lex}
\end{figure*}
\begin{figure*}[t]
  \centering
  \begin{subfigure}{0.24\textwidth}
    \centering
    \includegraphics[width=\textwidth]{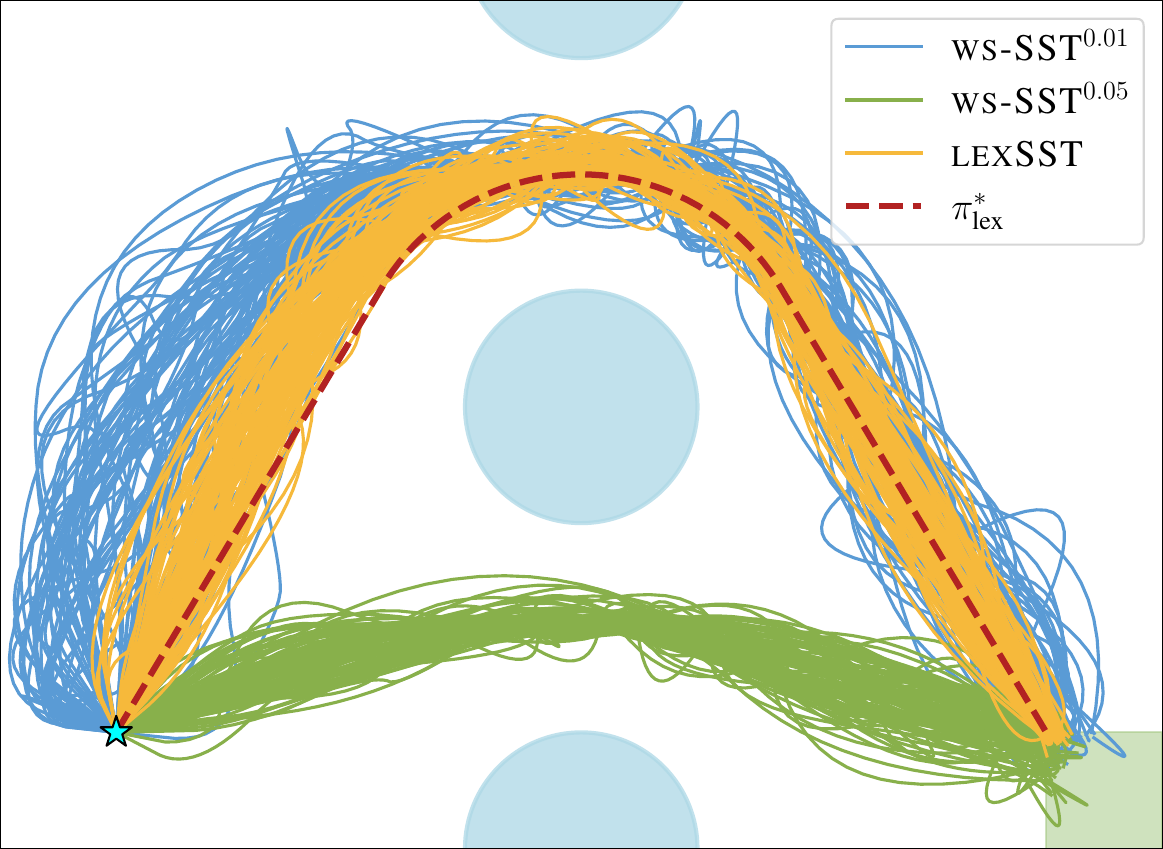}
    \caption{Solution trajectories in \wss.}
    \label{fig:WS2lex traj}
  \end{subfigure}
  \begin{subfigure}{0.24\textwidth}
    \centering
    \includegraphics[width=\textwidth]{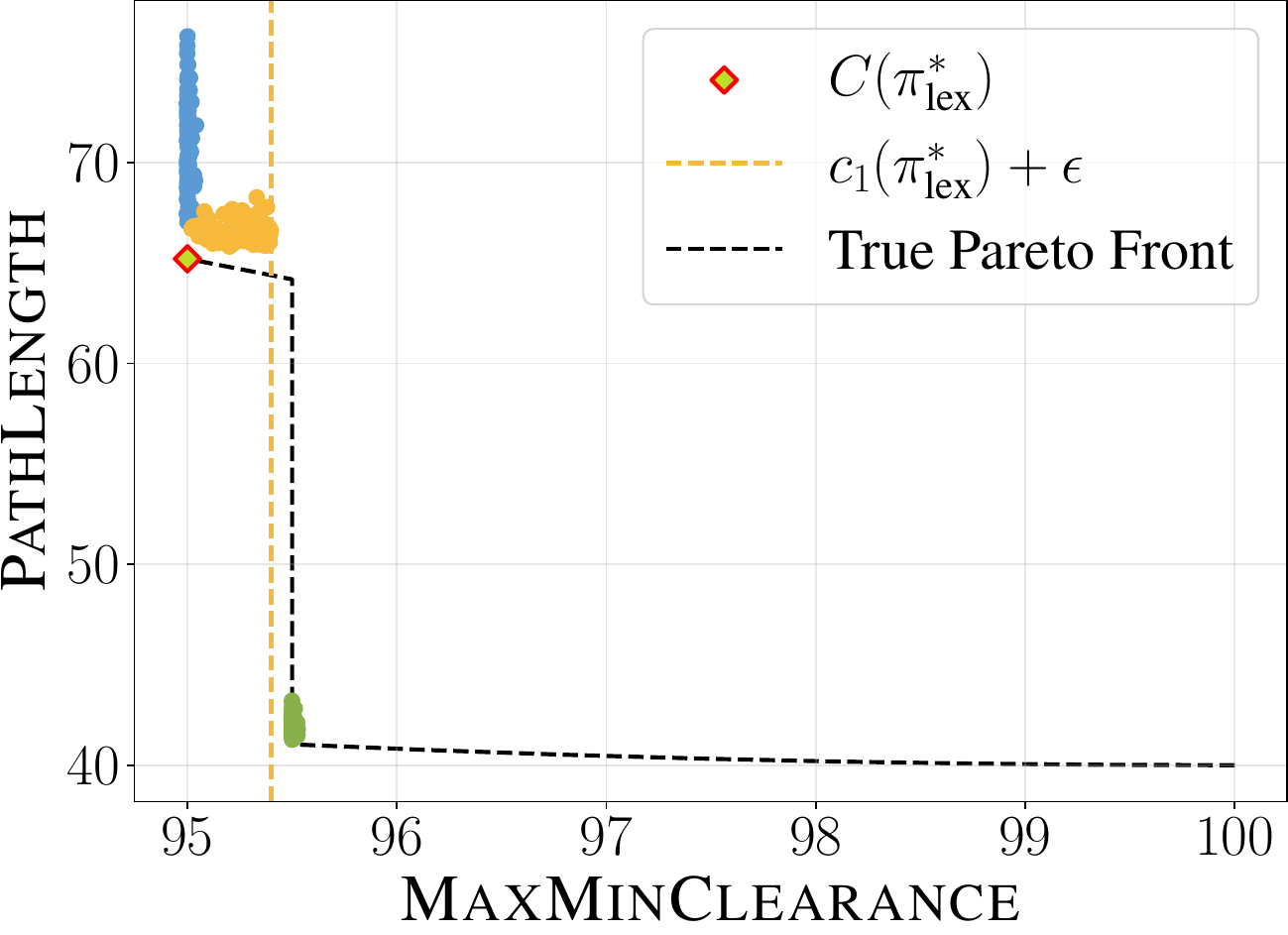}
    \caption{Solutions in objective space.}
    \label{fig:WS2lex cost}
  \end{subfigure}
  \begin{subfigure}{0.24\textwidth}
    \centering
    \includegraphics[width=\textwidth]{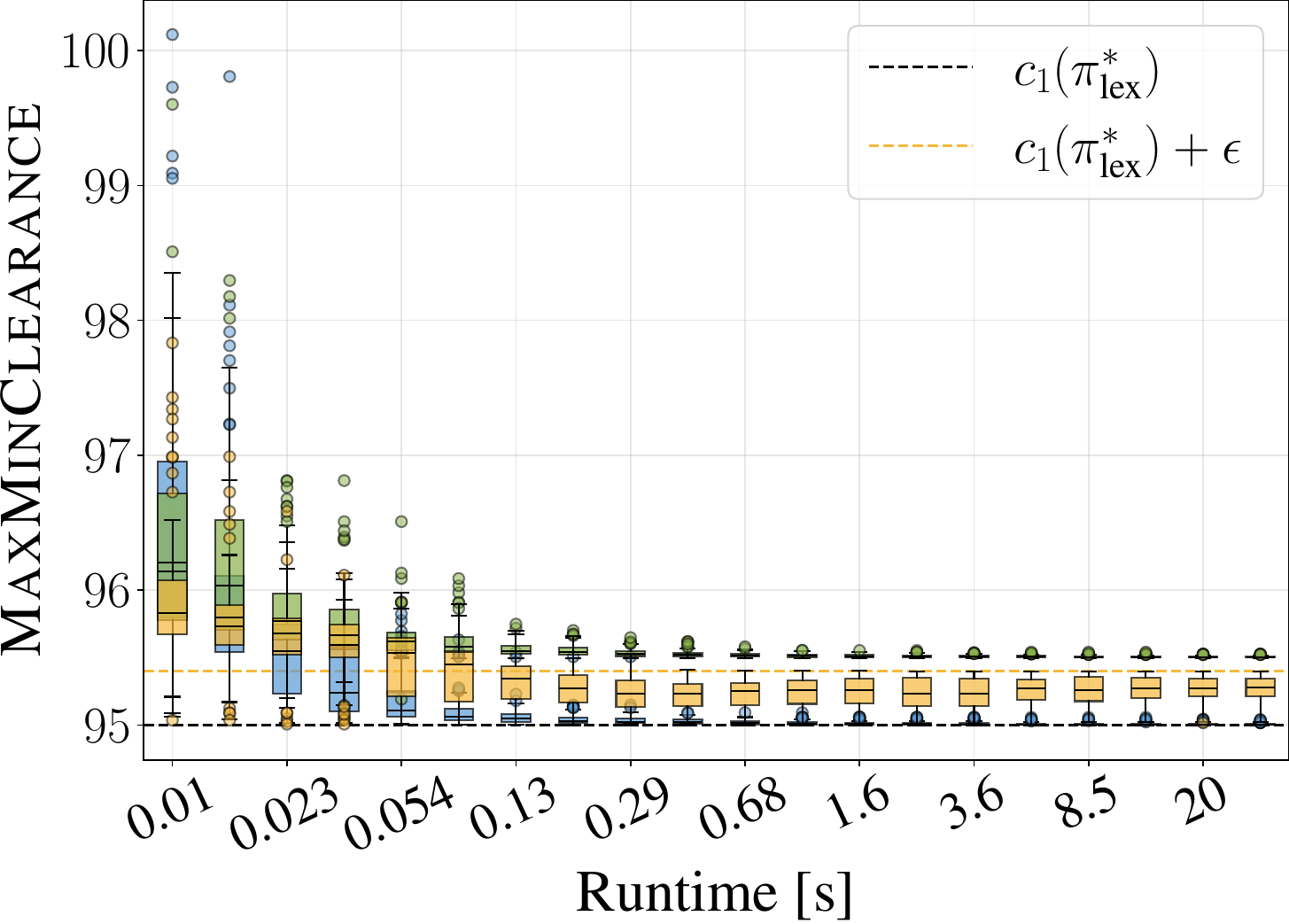}
    \caption{Time evolution of $c_1(\algSol)$.}
    \label{fig:WS2lex evol c1}
  \end{subfigure}
  \begin{subfigure}{0.24\textwidth}
    \centering
    \includegraphics[width=\textwidth]{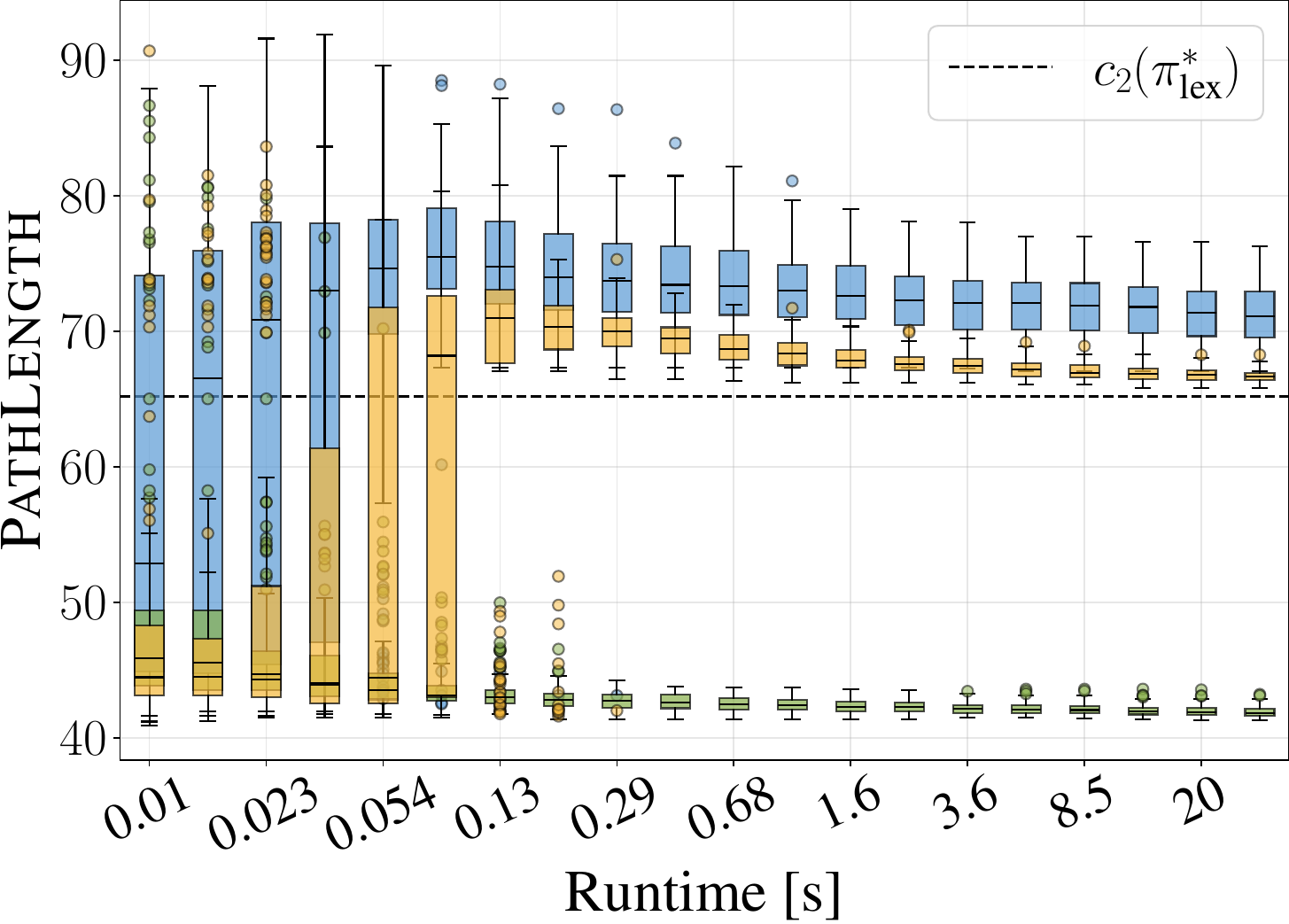}
    \caption{Time evolution of $c_2(\algSol)$.}
    \label{fig:WS2lex evol c2}
  \end{subfigure}
  \caption{Case Study 2 (\lexSST vs. \wSST). Lexicographic objectives: \CL (1st) and \PL (2nd).}
  \label{fig:WS2lex}
\end{figure*}

We begin by validating $\lexSST$'s ability to approximate the true lexicographic minimum, as established in Theorems~\ref{thrm:lexComplete} and~\ref{thrm:lexOptimal}. The experiment considers workspace \ws with the \DI model, where the optimization objective is lexicographic: minimize \CL first, then \PL. A 10-second planning budget was given per instance. Note that the primary optimum $c_1(\lexMin)$ is achieved by infinitely many trajectories, since any path traversing the widest passage without approaching an obstacle attains the same optimal clearance.

The results are shown in Fig.~\ref{fig:WS1lex}. As seen in Fig.~\ref{fig:WS1lex traj}, \SST (blue) reliably finds trajectories that maximize clearance but produces a wide spread of path lengths, since it lacks a mechanism to refine secondary objectives. In contrast, $\lexSST$ (orange) correctly prioritizes \CL and simultaneously refines \PL, yielding a tight cluster of solutions near the true lexicographic minimum (red dashed line). This behavior is further evident in the objective space (Fig.~\ref{fig:WS1lex costs}), where \SST solutions are broadly dispersed along the \PL axis, while $\lexSST$ solutions concentrate around the lexicographic minimum (red diamond).

The time series in Figs.~\ref{fig:WS1lex boxplot evol c1}--\ref{fig:WS1lex boxplot evol c2} shows the convergence behavior of both planners. Both \SST\ and $\lexSST$ converge to $c_1(\lexMin)$ at comparable rates, and $\lexSST$'s final solutions remain within the user-defined tolerance $\epsilon$, consistent with Eq.~\eqref{eq:lex-optimality}. However, while \SST exhibits a large spread in \PL, $\lexSST$ steadily converges toward $c_2(\lexMin)$, demonstrating its ability to refine the secondary objective while respecting the tolerance on the primary one.

\subsubsection*{Case Study 2 -- Robustness of $\lexSST$ to Weight Selection}
\label{subsec:cs2}
% \ml{add results for another dynamical system}

Case Study 1 shows that \SST\ produces wide variance in $c_2$ when optimizing $c_1$ alone. A natural remedy is to add a small weight to $c_2$, giving preference to solutions that also improve upon \PL. Here, we demonstrate the practical failure of this approach in a setting where weight selection is consequential. Workspace \wss\ contains two homotopy classes: an upper corridor with slightly greater clearance and a lower corridor with shorter path length. Using the same dynamics model and objectives as Case Study 1, we compare $\lexSST$ against \wSST\ with two weight vectors: $\mathbf{w}^1 = [1, 0.01]$ (blue) and $\mathbf{w}^2 = [1, 0.05]$ (green), shown in Fig.~\ref{fig:WS2lex}. Here, a 
30-second planning budget was used. 

Despite both weight vectors appearing to heavily prioritize $c_1$, their behavior differs. In Fig.~\ref{fig:WS2lex traj}, $\mathbf{w}^1$ assigns insufficient weight to \PL, resulting in solutions with high variance in $c_2$, similar to vanilla \SST. Conversely, $\mathbf{w}^2$ overweights \PL, biasing the planner toward the lower corridor at the expense of $ c_1$-optimality. In contrast, $\lexSST$ consistently selects the upper corridor, forming a tighter bundle around $\lexMin$. Fig.~\ref{fig:WS2lex cost} shows these results in the objective space: \wSST\ solutions exhibit either large $c_2$ variance ($\mathbf{w}^1$) or unbounded $c_1$ suboptimality ($\mathbf{w}^2$), while $\lexSST$ clusters about $\lexMin$ within the user-defined tolerance $\epsilon$.

The time series in Figs.~\ref{fig:WS2lex evol c1}--\ref{fig:WS2lex evol c2} are consistent with Case Study 1: all planners converge in $c_1$ at comparable rates, but only $\lexSST$ steadily improves $c_2$ toward $c_2(\lexMin)$; whereas the variance under $\mathbf{w}^1$ persists throughout the planning horizon. This underscores a fundamental limitation of scalarization: no fixed weight vector reliably guarantees near-optimality of the lexicographic solution, whereas $\lexSST$ achieves this without any weight tuning.

\subsection{Constrained Optimization}
\label{subsec:cs3}

We now evaluate $\conSST$ on Problem~\ref{prob:conOptMP}, examining its ability to restrict the search tree to constraint-admissible regions and its completeness advantage over \SST. In the following studies, we use the \DI model and a 10-second time budget per planning instance. 

\subsubsection*{Case Study 3 -- Constraint-Guided Search}

\begin{figure}[b]
  \centering
  \begin{subfigure}{0.24\textwidth}
    \centering
    \includegraphics[width=\textwidth]{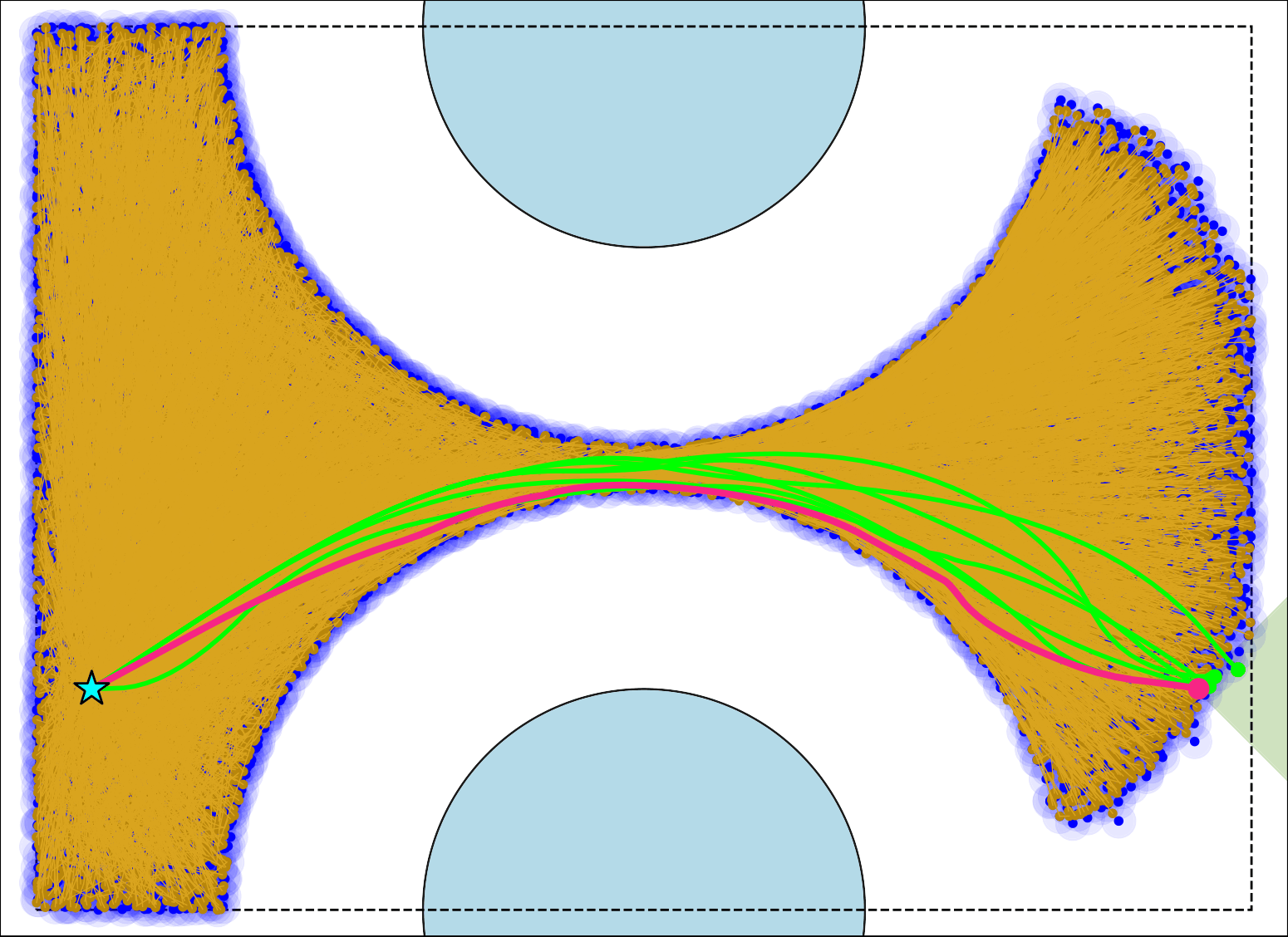}
    \caption{$\conSST$ search tree with $\algSols$ dominance pruning.}
    \label{fig:conSST-solSet}
  \end{subfigure}
  \hfill
  \begin{subfigure}{0.24\textwidth}
    \centering
    \includegraphics[width=\textwidth]{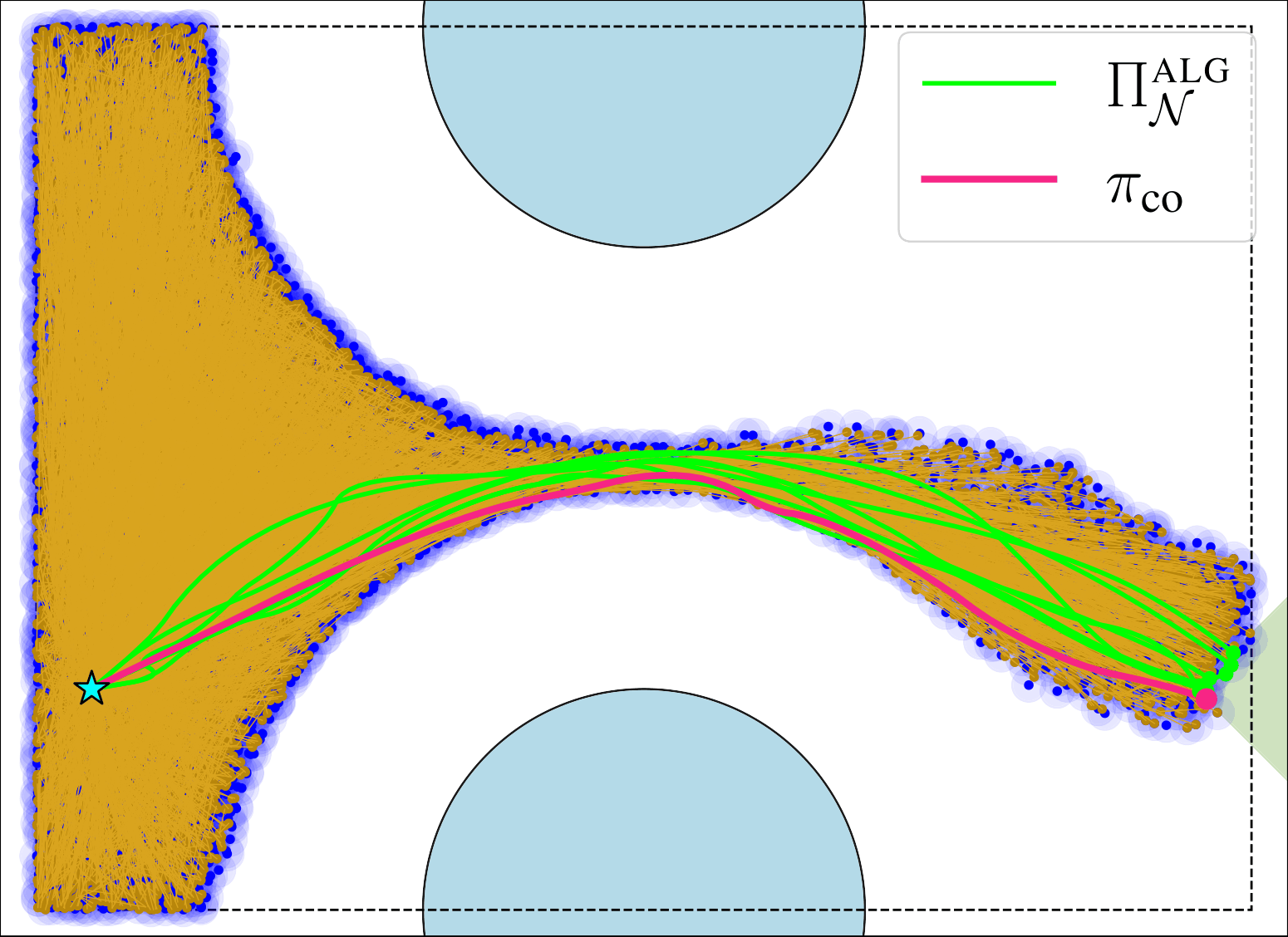}
    \caption{Search tree with cost-to-go pruning for \PL.}
    \label{fig:conSST-ctg}
  \end{subfigure}
  \caption{Case Study 3 (Constraint-guided search via \conSST). Constrained search of $\conMin$ for objectives \CL and \PL.}
  \label{fig:conSST-tree}
\end{figure}

We apply a constraint $\bar{c}_1 = 91$ to \CL in \ws, requiring all trajectories to maintain a clearance of at least 9 units from obstacles. Fig.~\ref{fig:conSST-tree} shows the resulting search trees after a 10-second planning window, with nodes in yellow and witness neighborhoods in blue. The tree is visibly confined to the admissible region, returning multiple non-dominated solutions (in green), with the one that minimizes \PL (in red).

\begin{figure*}[t]
  \centering
  \begin{subfigure}{0.229\textwidth}
    \centering
    \includegraphics[width=\textwidth]{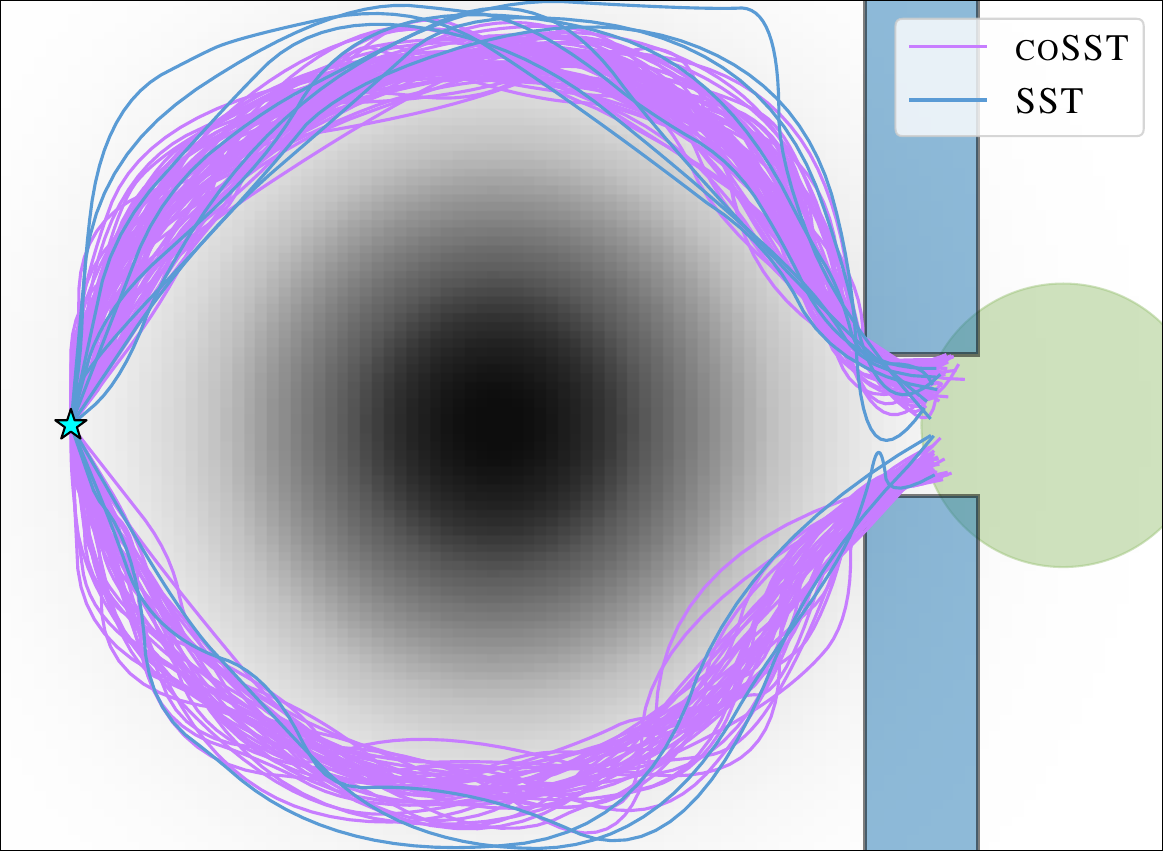}
    \caption{Solution trajectories.}
    \label{fig:bugtrap traj}
  \end{subfigure}
  \begin{subfigure}{0.141\textwidth}
    \centering
    \includegraphics[width=\textwidth]{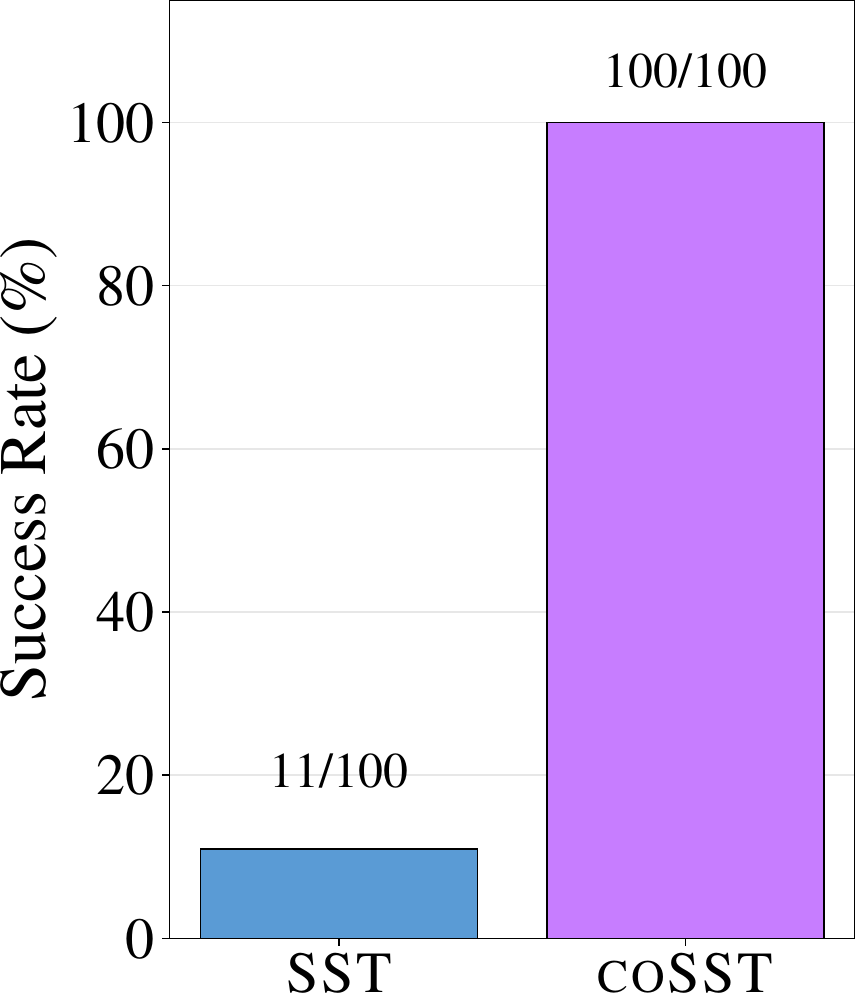}
    \caption{Success rate.}
    \label{fig:bugtrap succ}
  \end{subfigure}
  \hspace{0.1pt}
  \begin{subfigure}{0.229\textwidth}
    \centering
    \includegraphics[width=\textwidth]{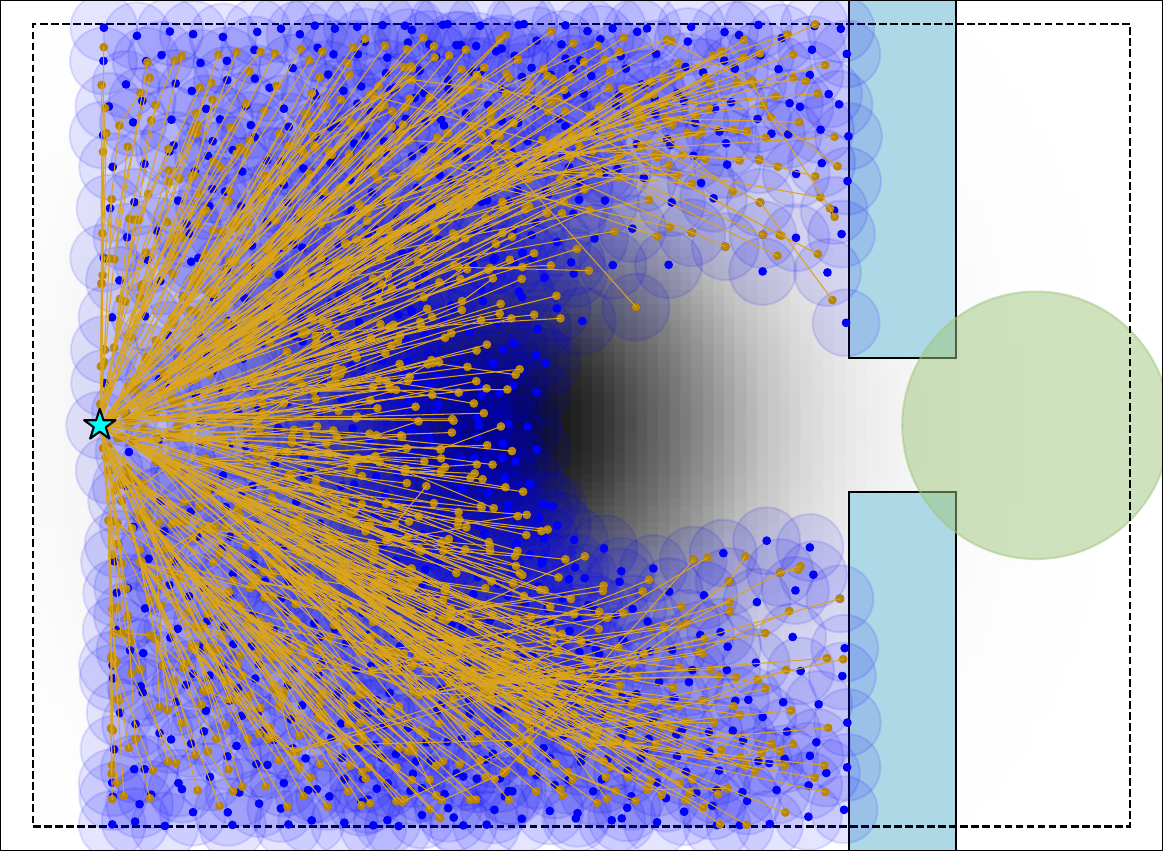}
    \caption{\SST\ search tree.}
    \label{fig:SST tree}
  \end{subfigure}
  \begin{subfigure}{0.229\textwidth}
    \centering
    \includegraphics[width=\textwidth]{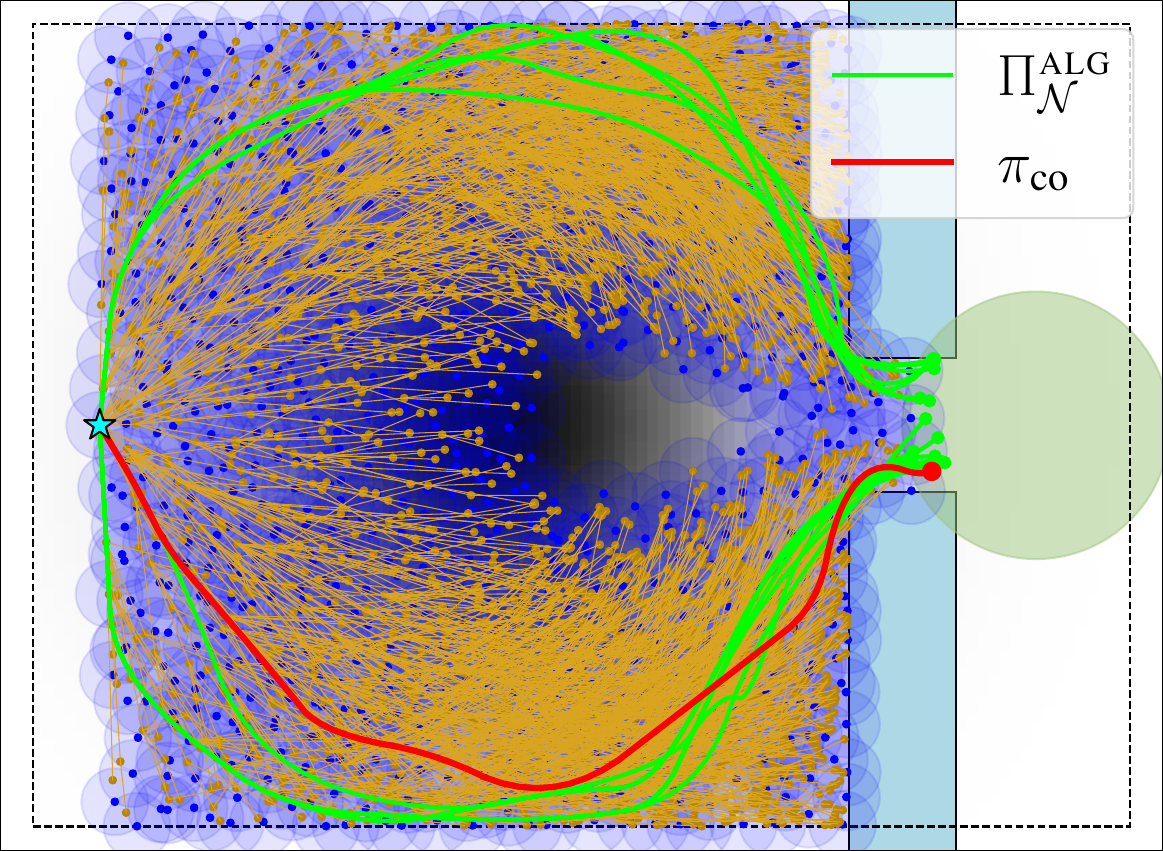}
    \caption{$\conSST$ search tree.}
    \label{fig:conSST tree}
  \end{subfigure}
  \begin{subfigure}{0.141\textwidth}
    \centering
    \includegraphics[width=\textwidth]{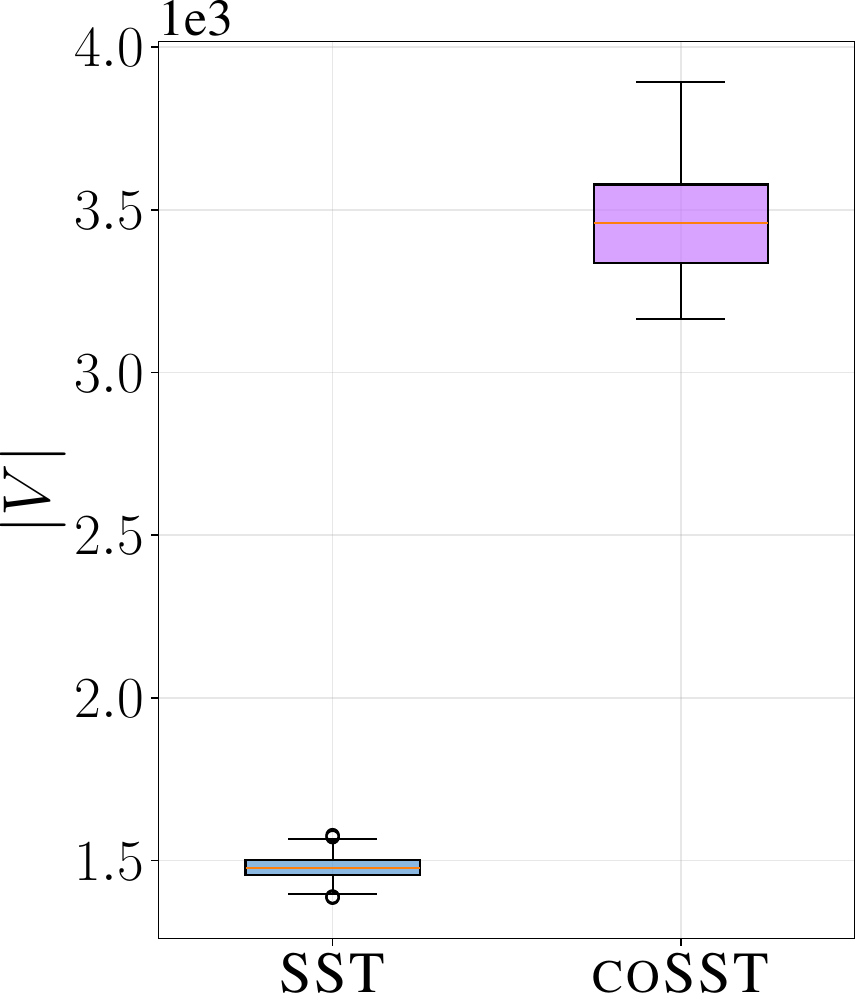}
    \caption{Tree size.}
    \label{fig:bugtrap tree size}
  \end{subfigure}
  \caption{Case Study 4 (incompleteness of \SST for constrained optimization). \SST\ fails to find feasible solutions in most runs, whereas $\conSST$ maintains completeness by retaining multiple constraint-admissible subtrajectories for each witness.}
  \label{fig:bugtrap}
\end{figure*}

Two additional pruning mechanisms further focus the search. First, we utilize a solution-set dominance check to prune nodes that are strictly dominated by existing solutions in $\algSols$. Since costs increase monotonically, there is no risk of prematurely pruning such nodes. As shown in Fig.~\ref{fig:conSST-solSet}, this prevents exploration of the upper-right and lower-right regions of the workspace, since shorter solutions already exist. Second, we leverage an admissible cost-to-go heuristic which inflates each candidate node's \PL cost by the shortest remaining path to the goal. As shown in Fig.~\ref{fig:conSST-ctg}, this disqualifies a substantially larger region, since reaching the goal within the remaining \PL budget becomes infeasible from those states.

\subsubsection*{Case Study 4 -- Incompleteness of \SST}
\label{sec:cs4}

We empirically confirm the incompleteness of \SST\ for constrained optimization, as argued in Section~\ref{subsec:SSTlimitaion}. The task is to minimize \PL\ subject to a constraint on \CF in the narrow-passage environment in Fig.~\ref{fig:bugtrap}, with sparsity parameters chosen to guarantee the existence of a $\delta$-robust solution. We conduct 100 independent runs per planner, each with a 30-second time budget, using the \DI model.

As shown in Figs.~\ref{fig:bugtrap traj}--\ref{fig:bugtrap succ}, \SST (blue) finds a valid motion plan in only a small fraction (0.11) of runs, whereas $\conSST$ succeeds across all runs. This failure is structural, not a consequence of limited planning time. Figs.~\ref{fig:SST tree}--\ref{fig:conSST tree} show the search trees after a 300-second budget to ensure convergence. Because \SST\ greedily optimizes for \PL, its single representative at each witness accumulates a high \CF, leaving no feasible extensions through the narrow passage. In contrast, $\conSST$ retains multiple constraint-admissible subtrajectories per witness, preserving the ability to extend through the passage. The cost of this completeness guarantee is a larger search tree, as shown in Fig.~\ref{fig:bugtrap tree size}, since multiple nodes are maintained per witness.

\subsection{Pareto Front Coverage}
Having validated the single-solution algorithms, we now evaluate $\poSST$ on Problem~\ref{prob:paretoOptMP}. We first demonstrate \poSST's Pareto-completeness and near-optimality, and then show its computational advantages relative to scalarization methods. 

\subsubsection*{Case Study 5 -- Convex Pareto Front}
\label{subsec:cs5}
We again consider a \DI model in \ws\ under the objectives \CL and \PL, but without imposing any priority among the objectives, aiming to recover the entire Pareto front.
Fig.~\ref{fig:WS1poSST traj} shows an example run as $\poSST$ discovers a diverse set of non-dominated trajectories that span the full Pareto front. Fig.~\ref{fig:WS1poSST pareto evo} shows how the approximate front computed by \poSST progressively approaches the theoretical optimum over time, with lighter hues indicating later times. 

Over 100 runs with a 60-second planning budget, $\poSST$ identified, on average, 50 non-dominated solutions per run, providing broad coverage of the objective space as shown in Fig.~\ref{fig:WS1poSST fronts}.
To quantify approximation quality, we define a coverage metric according to the ratio of the region dominated by the approximate front to the area enclosed by the true Pareto front and the axis-aligned boundary (red dashed lines in Fig.~\ref{fig:WS1poSST pareto evo}). As shown in Fig.~\ref{fig:WS1poSST coverage}, $\poSST$ dominated, on average, roughly 80\% of the reference area over the 60-second planning horizon, demonstrating its near-Pareto-optimality.

\begin{figure}[tbp]
  \centering
  \begin{subfigure}{0.24\textwidth}
    \centering
    \includegraphics[width=\textwidth]{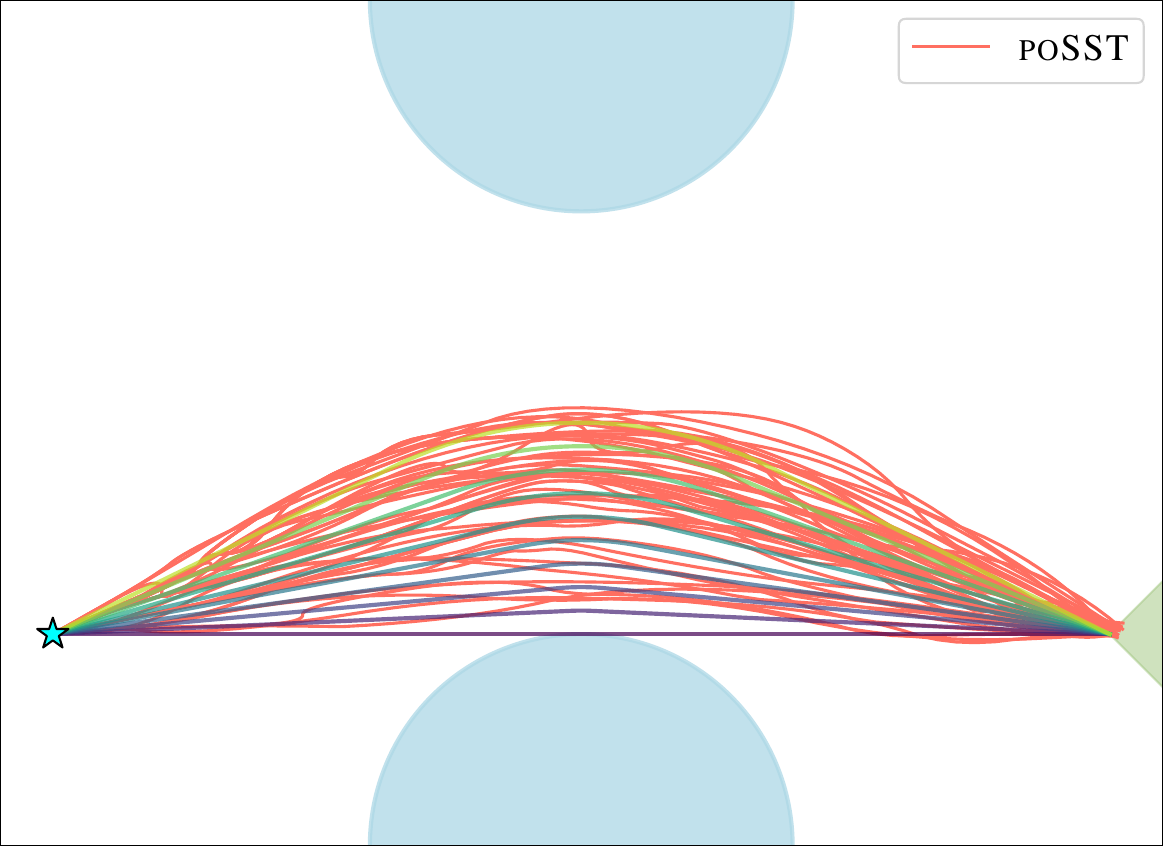}
    \caption{Pareto-optimal trajectories.}
    \label{fig:WS1poSST traj}
  \end{subfigure}
  \hfill
  \begin{subfigure}{0.24\textwidth}
    \centering
    \includegraphics[width=\textwidth]{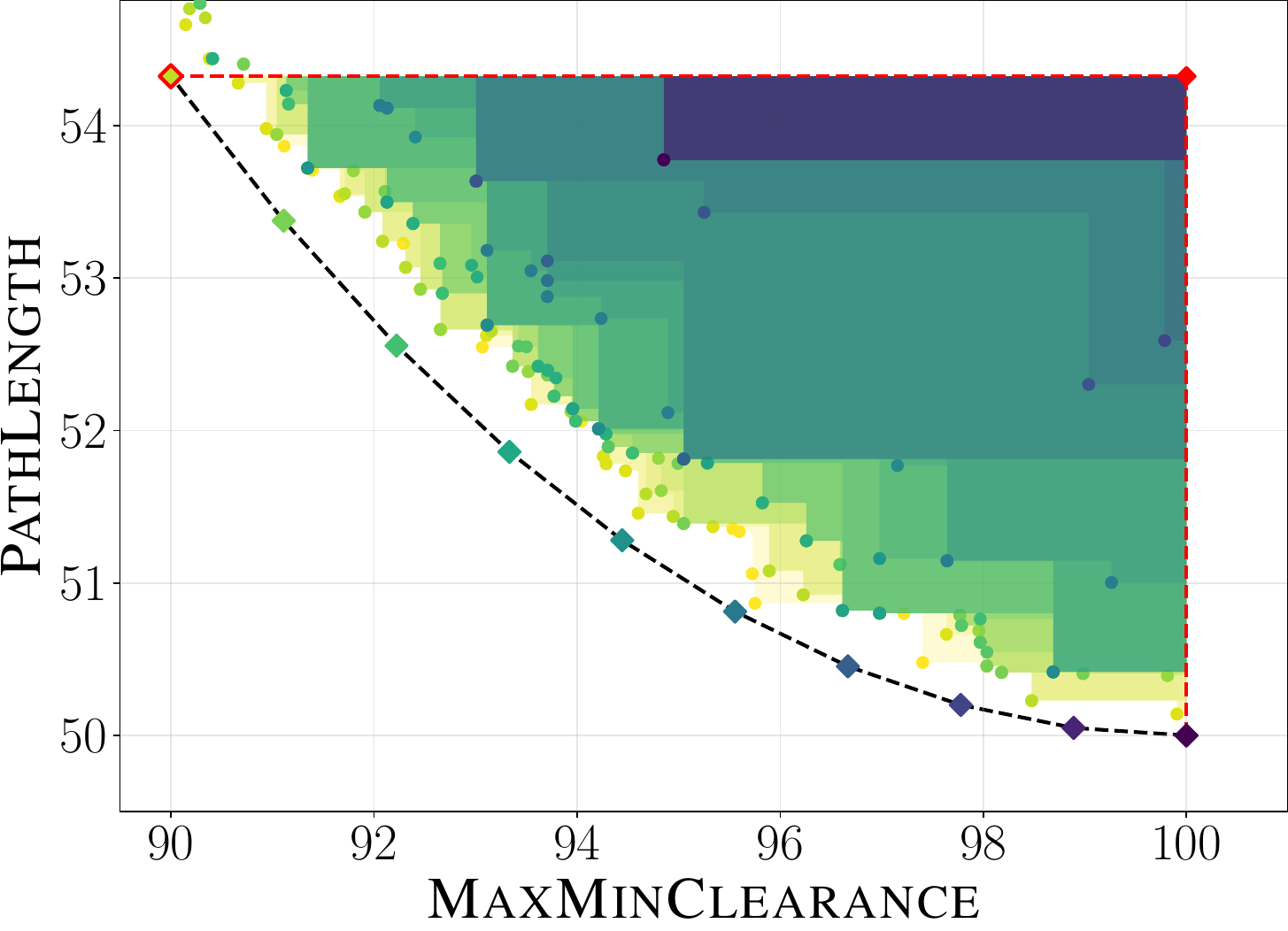}
    \caption{Evolution of Pareto front.}
    \label{fig:WS1poSST pareto evo}
  \end{subfigure}
  \begin{subfigure}{0.24\textwidth}
    \centering
    \includegraphics[width=\textwidth]{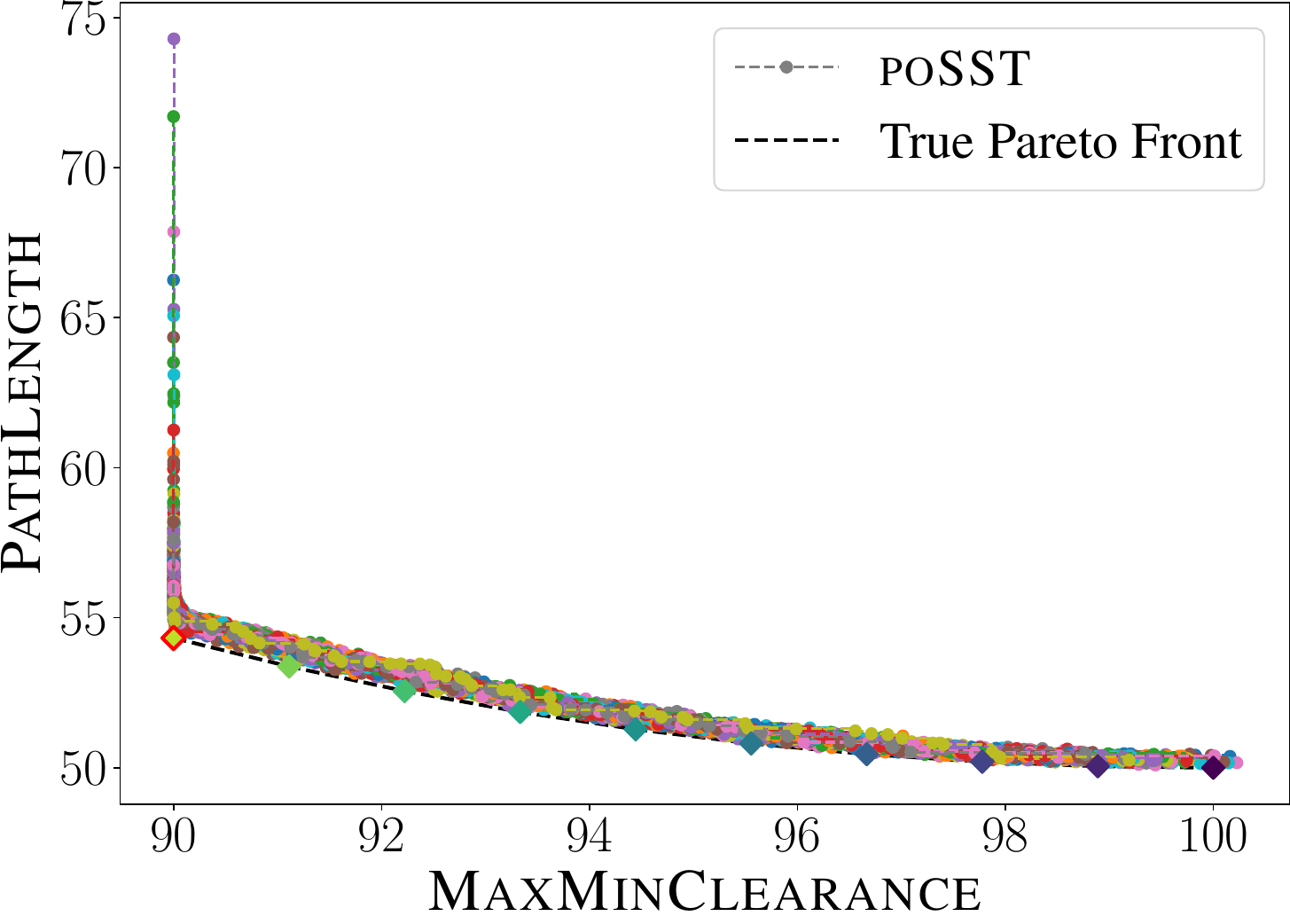}
    \caption{Pareto fronts (all runs).}
    \label{fig:WS1poSST fronts}
  \end{subfigure}
  \hfill
  \begin{subfigure}{0.24\textwidth}
    \centering
    \includegraphics[width=\textwidth]{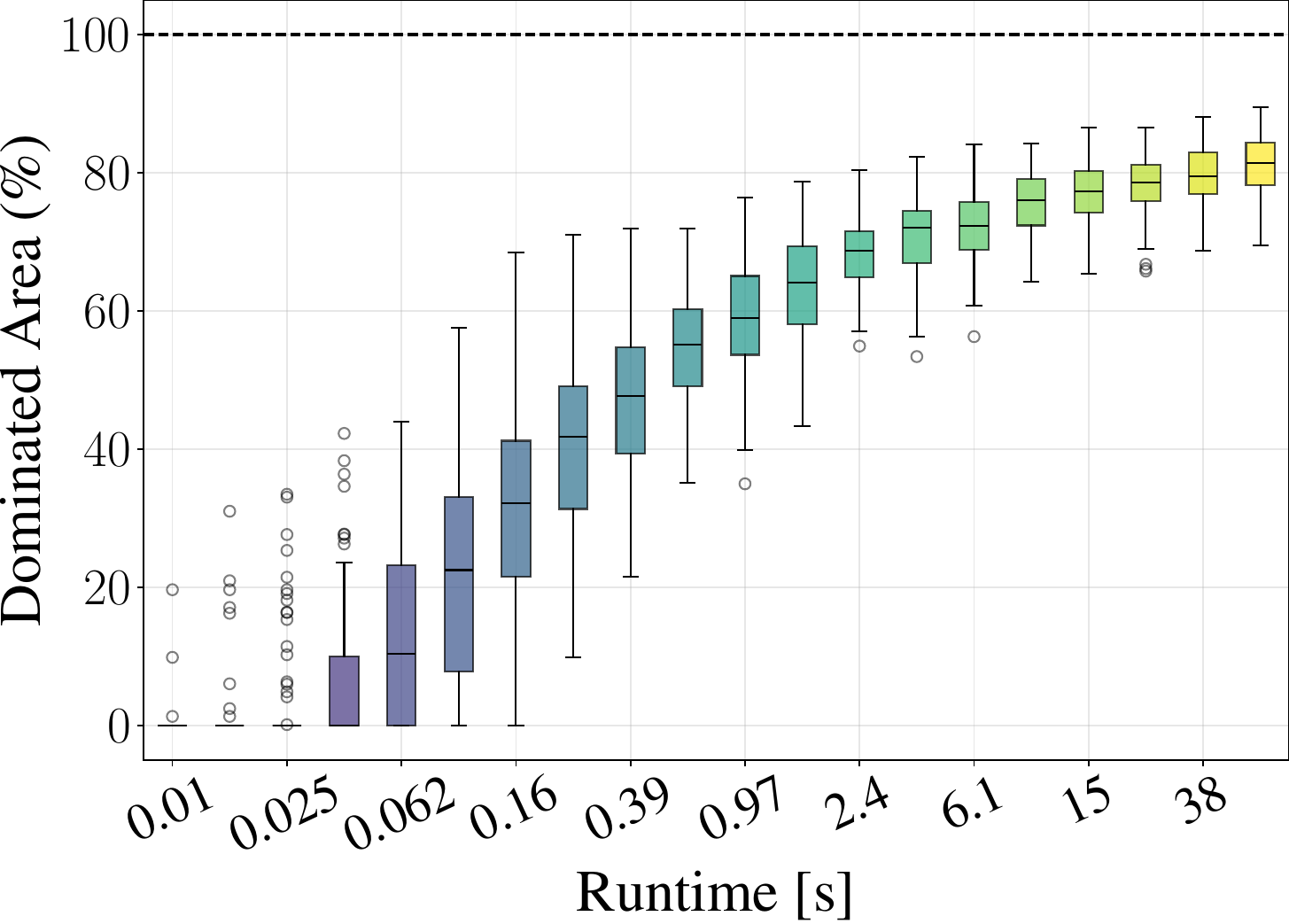}
    \caption{Coverage metric of $\algSols$.}
    \label{fig:WS1poSST coverage}
  \end{subfigure}
  \caption{Case Study 5 (efficacy of $\poSST$ in approximating $\Pi^*_\text{sol}$ with objectives \CL and \PL). (a) and (b) show a single representative run. (c) and (d) aggregate over 100 runs.}
  \label{fig:WS1po}
\end{figure}

\subsubsection*{Case Study 6 -- Non-Convex Pareto Fronts}
\label{subsec:cs6}

A key motivation for this work is the structural inability of weighted-sum scalarization to recover non-convex Pareto fronts, as established in Section~\ref{sec:prelim}. We evaluate $\poSST$ with objectives \CF and \PL across two environments: \wsss with three homotopic solution classes and a cluttered workspace, \wssss. We compare \poSST against \wSST initialized with 101 different weight vectors, spanning a normalized sweep of unique ratios to promote diverse solutions: 
\[\mathbf{w^\alpha} = [\alpha, 1-\alpha] \text{ for all } \alpha \in \{0, 0.01, 0.02, \dots, 1\}.\]
Each instance of \wSST is given a planning budget of 30-seconds (3030 seconds total), while $\poSST$ is given a single 30-second planning instance. This comparison is intentionally demanding, as the baseline is allocated $101\times$ the computational budget of $\poSST$.

\textbf{Non-convex front recovery} (Figs.~\ref{fig:WS3paths}, \ref{fig:WS3fronts}): Workspace \wsss, using the \DI model, isolates the structural limitation of WS scalarization (\wSST) on a highly non-convex Pareto front, where many optimal trade-offs do not lie on the convex hull. Across the 101 weight selections, \wSST collapses to two solution clusters: one that prioritizes \PL by passing through the gap, and one that prioritizes \CF by traveling over the wall. The intermediate trade-offs that balance both objectives are missed entirely. $\poSST$, by contrast, discovers a diverse set of solutions spanning all three homotopic classes, empirically demonstrating the Pareto-completeness established in Theorem~\ref{thm:poSST-complete}.

\begin{figure}[tbp]
  \centering
  \begin{subfigure}{0.24\textwidth}
    \centering
    \includegraphics[width=\textwidth]{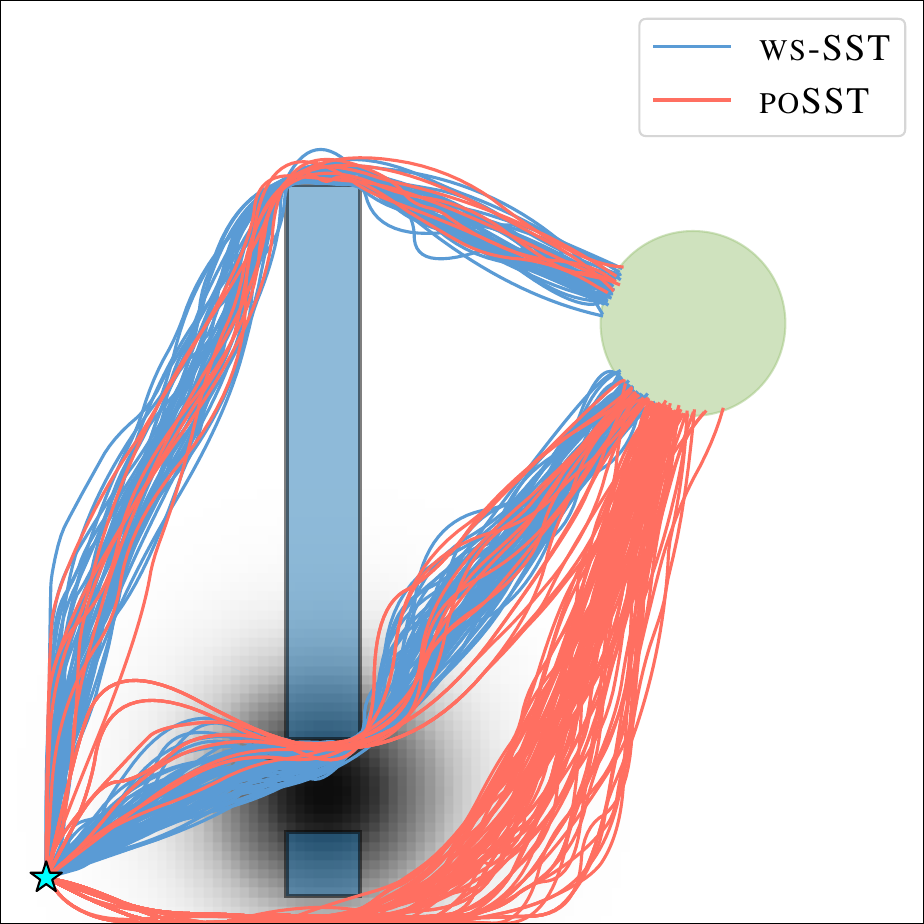}
    \caption{\wsss solution trajectories.}
    \label{fig:WS3paths}
  \end{subfigure}
  \hfill
  \begin{subfigure}{0.24\textwidth}
    \centering
    \includegraphics[width=\textwidth]{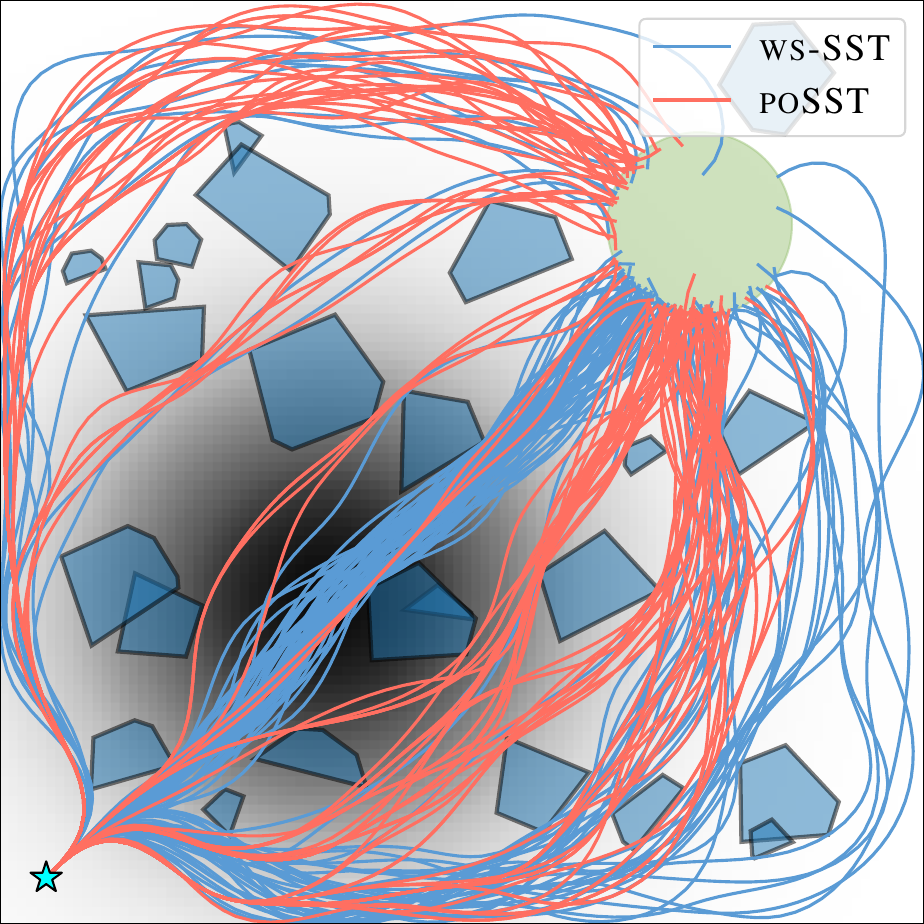}
    \caption{\wssss solution trajectories.}
    \label{fig:WS4paths}
  \end{subfigure}
  \begin{subfigure}{0.24\textwidth}
    \centering
    \includegraphics[width=\textwidth]{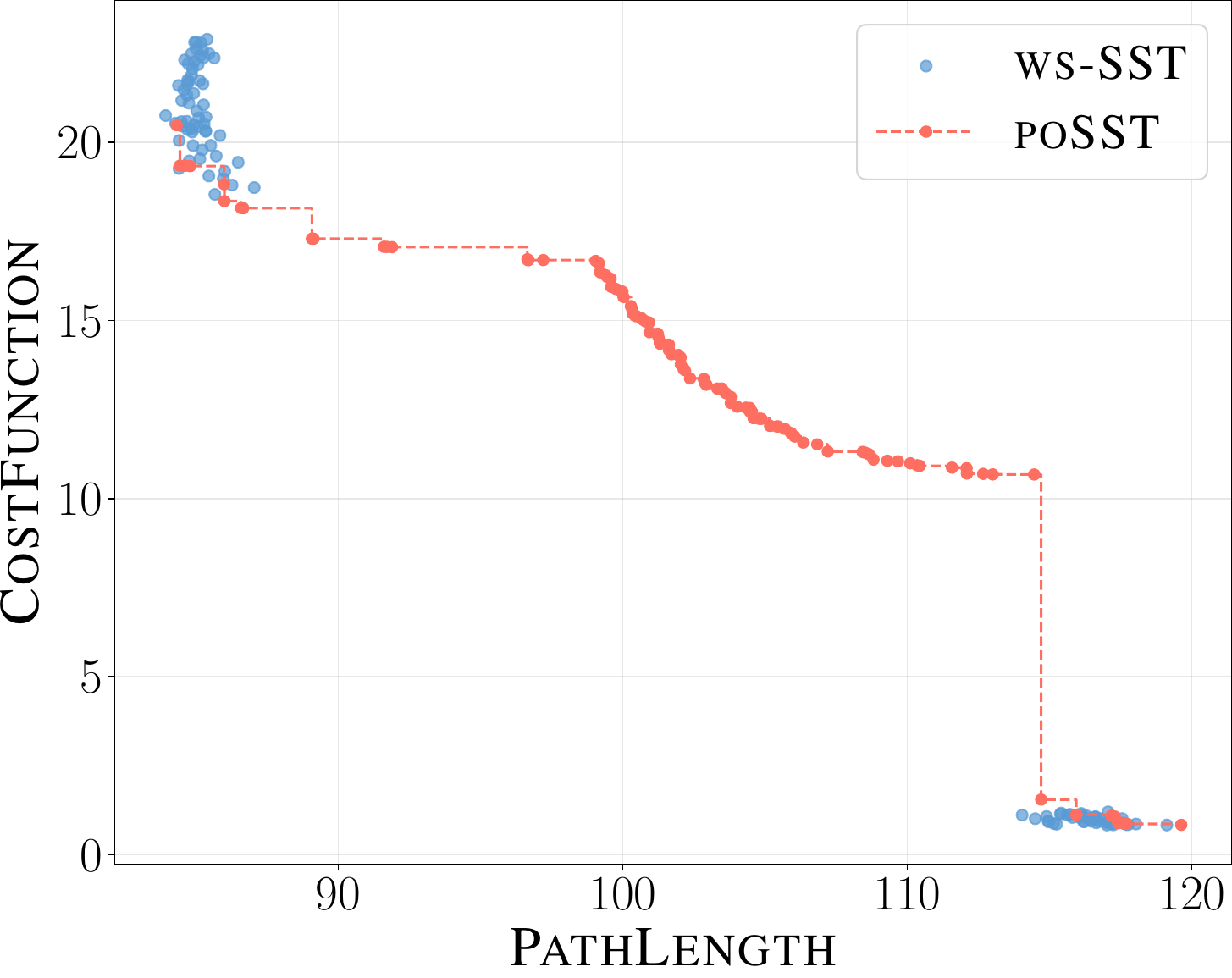}
    \caption{\wsss Pareto front.}
    \label{fig:WS3fronts}
  \end{subfigure}
  \hfill
  \begin{subfigure}{0.24\textwidth}
    \centering
    \includegraphics[width=\textwidth]{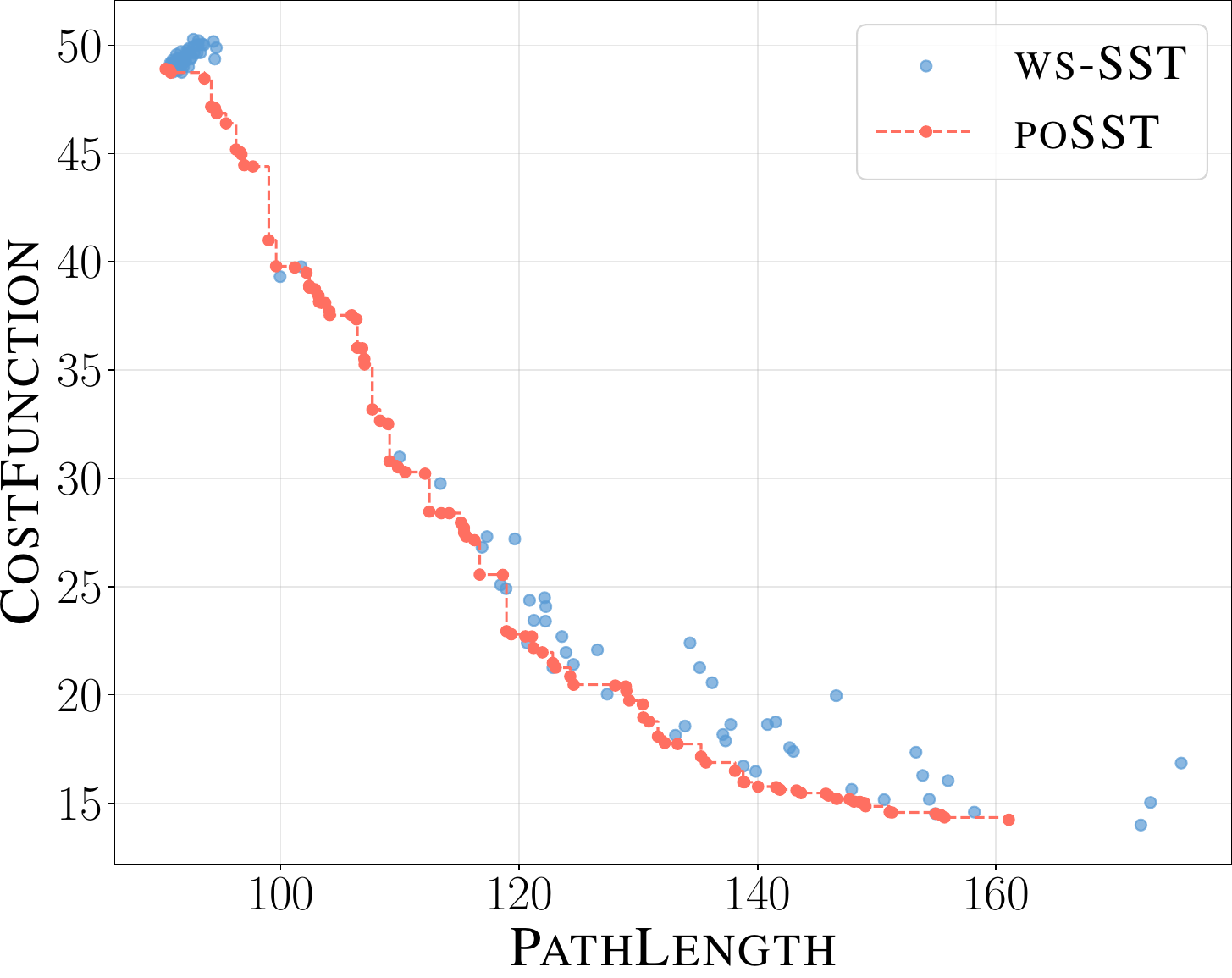}
    \caption{\wssss Pareto front.}
    \label{fig:WS4fronts}
  \end{subfigure}
  \caption{Case Study 6 (\poSST vs. \wSST on non-convex Pareto fronts). In \wsss, \wSST recovers only extreme solutions, while $\poSST$ discovers trade-offs across all homotopic classes. In \wssss, a single run of $\poSST$ achieves better coverage than 101 scalarized planning instances combined.}
  \label{fig:pareto front results}
\end{figure}

% \textbf{Computational efficiency} (Figs.~\ref{fig:WS4paths}, \ref{fig:WS4fronts}): In \wssss using the \BI model, $\poSST$ recovers 124 unique Pareto-optimal solutions within a single 30-second planning instance, achieving better coverage than \wSST across all 101 weight selections. 
% As seen in Fig.~\ref{fig:WS4fronts}, the cluster of points about the optimal \PL solutions underscores the inherent clumsiness of summing weights across objectives that operate over different ranges and units of measure. 
% In this setup, since \PL operates at a higher range, any weight ratio greater than $\frac{1}{2}$ yields the same solution, rendering many weight selections redundant. However, this cannot be known a priori. In contrast, \poSST performs element-wise comparisons across the various objectives, thereby completely circumventing the need for a unified metric. 
% This demonstrates that maintaining a single, non-dominated search tree is substantially more efficient than repeatedly scalarizing and re-planning to recover a diverse set of tradeoffs.

\textbf{Computational efficiency} (Figs.~\ref{fig:WS4paths}, \ref{fig:WS4fronts}): In \wssss using the \BI model, $\poSST$ recovers 124 unique Pareto-optimal solutions within a single 30-second planning instance, achieving better coverage than \wSST across all 101 weight selections. 
As seen in Fig.~\ref{fig:WS4fronts}, the cluster of solutions near the \PL optimum reveals a key pitfall of WS scalarization: since \PL operates over a larger range than \CL, any weight ratio greater than about 0.5 collapses to the same solution, rendering most weight selections redundant. This imbalance is not known a priori, making it difficult to choose weights that yield a diverse spread. $\poSST$ avoids this issue entirely, as its element-wise cost comparisons require no unified (scalar) cost metric across objectives. 
Further, this case study highlights that maintaining a single, non-dominated search tree is substantially more efficient than repeatedly scalarizing and re-planning to recover a diverse set of trade-offs.

\subsubsection*{Case Study 7 -- Effect of $\vec\varepsilon$ on Pareto Convergence}
\label{subsec:cs7}

Finally, we examine the effect of the objective-space sparsity parameter $\vec\varepsilon$ introduced in this work, which governs the trade-off between near-optimality and tree size (sparsity). Using workspace \ws and the \DI model, we sweep $\vec\varepsilon$ over a range of values and run 100 instances of $\poSST$ for each configuration over a 300-second planning horizon.

Fig.~\ref{fig:WS1poBench} presents results as box plots. As expected, larger $\vec\varepsilon$ values admit fewer nodes per witness neighborhood (Fig.~\ref{fig:eps-node}), reducing tree size but coarsening the approximation of the Pareto front (Fig.~\ref{fig:eps-sols}). Conversely, smaller $\vec\varepsilon$ yields denser trees with tighter approximations at the cost of slower convergence rates (Fig.~\ref{fig:eps-coverage}). This experiment provides practical guidance for tuning $\vec\varepsilon$: in time-constrained settings, a moderate $\vec\varepsilon$ efficiently achieves broad Pareto coverage, whereas $\vec\varepsilon \to \mathbf{0}$ is appropriate when tight bounds are required.

\begin{figure}[tbp]
  \centering
  \begin{subfigure}{0.24\textwidth}
    \centering
    \includegraphics[width=\textwidth]{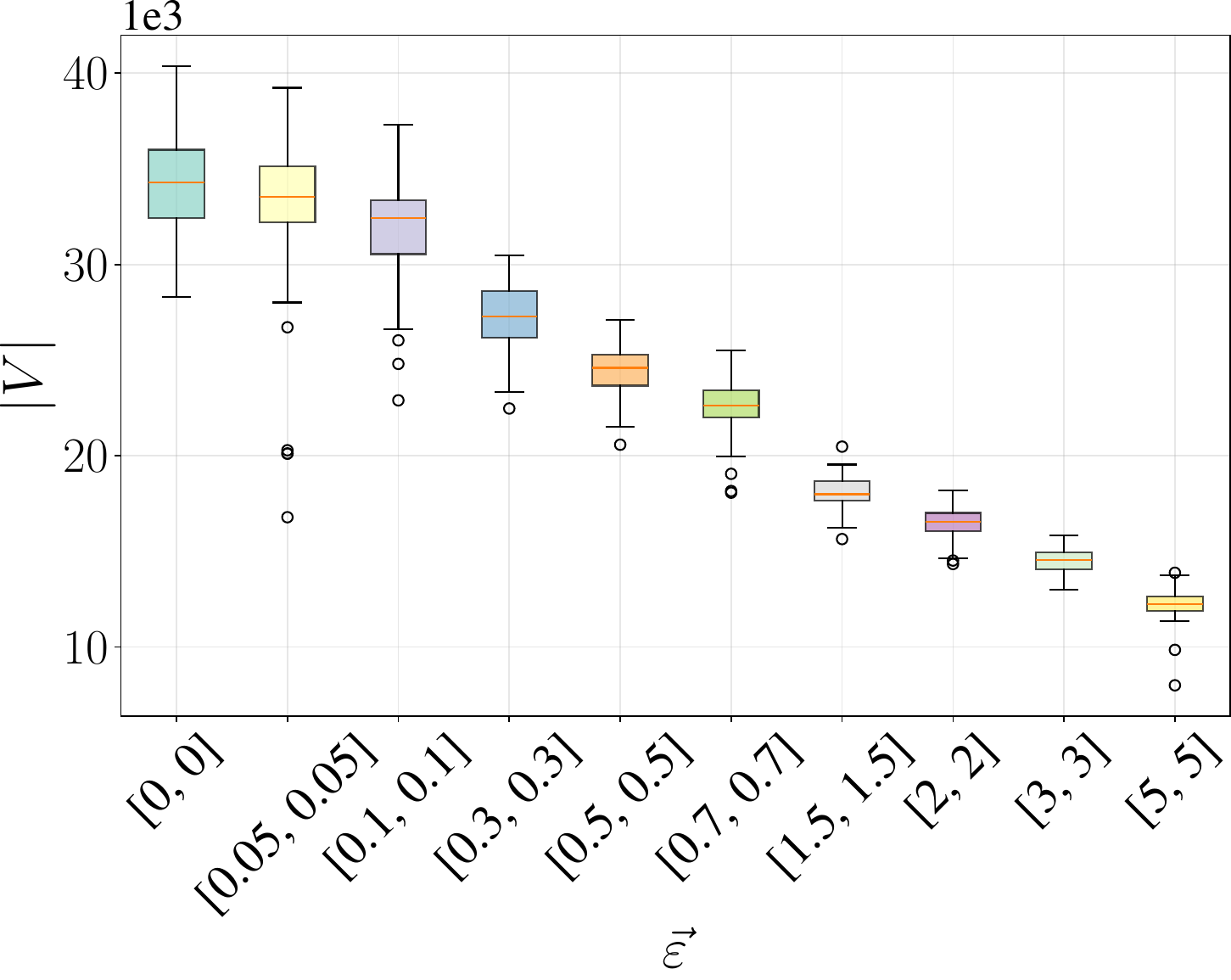}
    \caption{Size of search tree.}
    \label{fig:eps-node}
  \end{subfigure}
  \hfill
  \begin{subfigure}{0.24\textwidth}
    \centering
    \includegraphics[width=\textwidth]{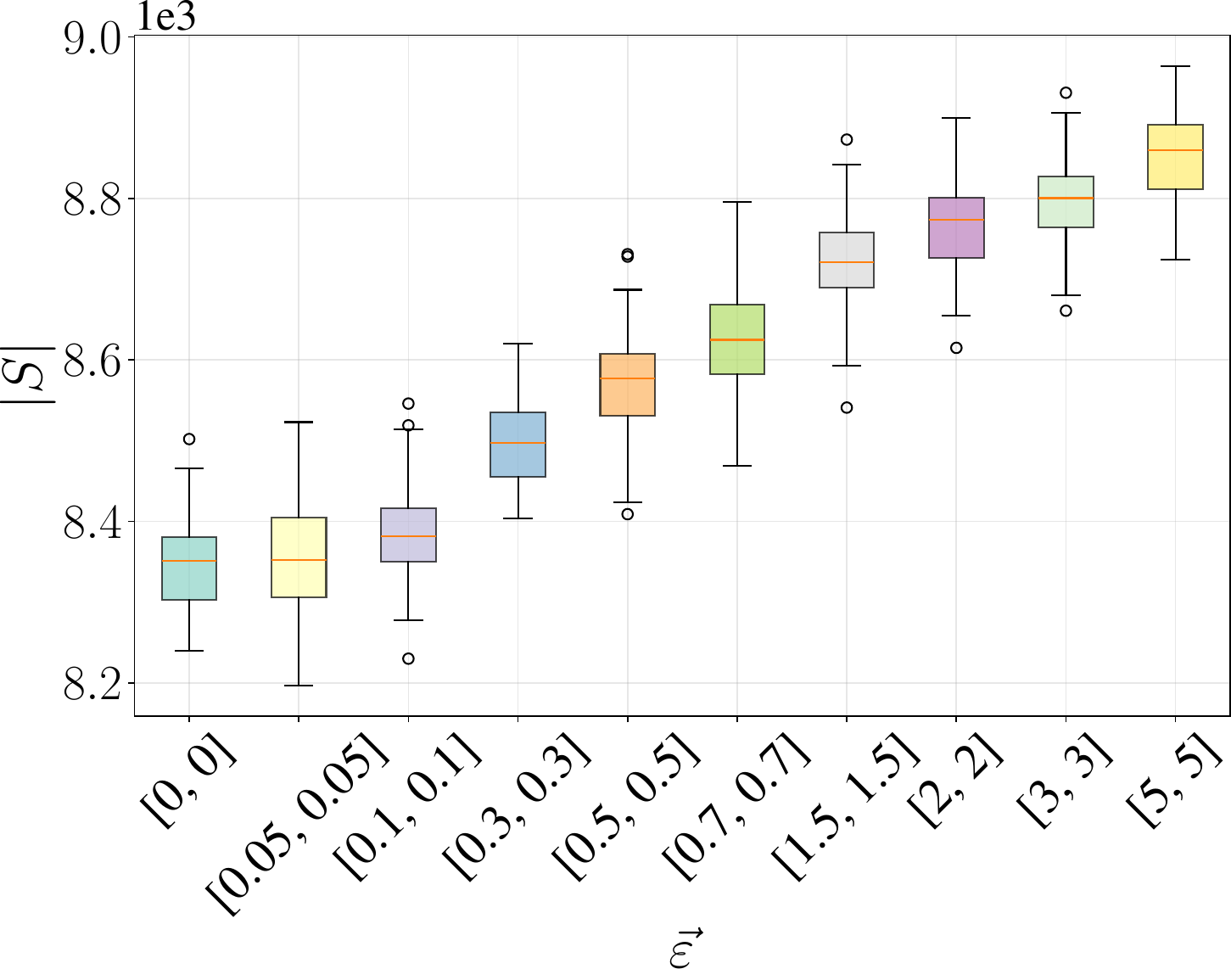}
    \caption{Number of witness nodes.}
    \label{fig:eps-witness}
  \end{subfigure}
  \begin{subfigure}{0.24\textwidth}
    \centering
    \includegraphics[width=\textwidth]{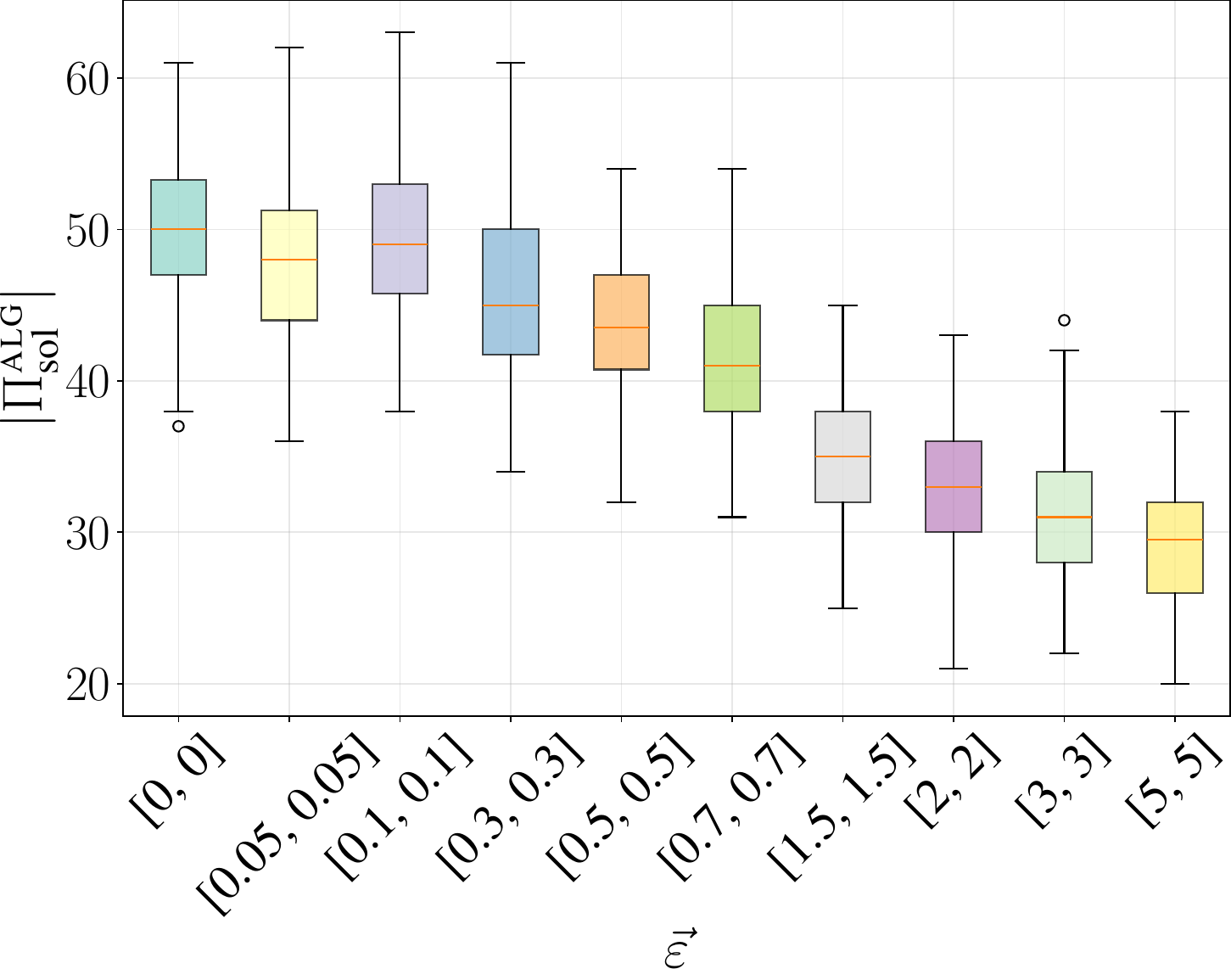}
    \caption{Size of solution set.}
    \label{fig:eps-sols}
  \end{subfigure}
  \hfill
  \begin{subfigure}{0.24\textwidth}
    \centering
    \includegraphics[width=\textwidth]{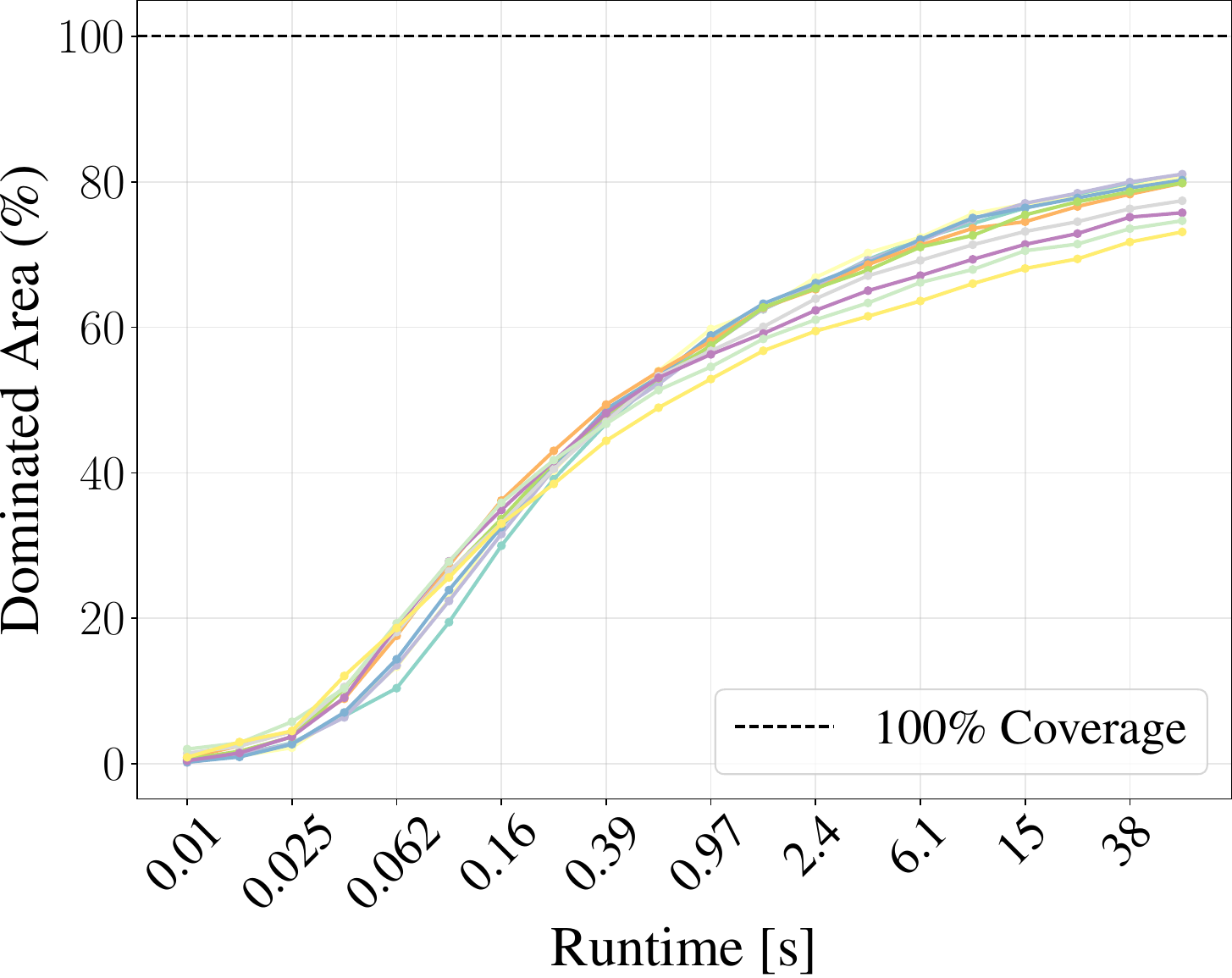}
    \caption{Coverage metric of $\algSols$.}
    \label{fig:eps-coverage}
  \end{subfigure}
  \caption{Case Study 7. Effect of $\vec\varepsilon$ on key metrics of \poSST with objectives \CL and \PL.}
  \label{fig:WS1poBench}
\end{figure}

\section{Conclusion}
\label{sec:conclusion}

In this paper, we introduced a unified framework for multi-objective kinodynamic motion planning built on the Stable Sparse-RRT algorithm. Our key insight is the extension from a single representative per witness neighborhood to a \emph{representative set}, enabling the simultaneous maintenance of locally Pareto-optimal subtrajectories within a single search tree. This structure underlies three algorithms: \lexSST for lexicographic minimization, \conSST for constrained optimization, and \poSST for Pareto-front approximation. 

For \lexSST, we proved that no scalar utility function can represent lexicographic dominance over continuous cost spaces, and,
% , introduced $\epsilon$-equivalence to enable secondary-objective refinement. 
% and established probabilistic $\delta$-robust completeness and asymptotic near-optimality for bi-objective problems. 
for \conSST, we showed that vanilla \SST is structurally incomplete under cost constraints. Instead, our algorithms are probabilistically complete and asymptotically near-optimal for both problems.
% 
% — confirmed empirically by the flytrap experiment (7\% vs.\ 100\% success) — 
% and proved completeness and near-optimality of our approach. 
Finally, \poSST approximates the entire Pareto front in a single planning instance with formal completeness and bounded sub-optimality guarantees, recovering diverse trade-offs  
% including non-convex Pareto regions inaccessible to weighted-sum scalarization, 
at a fraction of the computation time of repeated scalarized re-planning.

Several directions for future work remain. The $\epsilon$-equivalence approach for lexicographic search does not extend beyond two objectives; hence, how to apply \lexSST with three or more cost functions remains an open question.
% ; relatedly, while \conSST and \poSST are formulated for arbitrary $N$, empirical validation with three or more objectives is a natural next step. 
Also, since the cost of maintaining representative sets grows with the number of non-dominated nodes per witness, leveraging GPU-accelerated~\cite{KinoPAX2025, KinoPAXPlus2025} and vectorized~\cite{VAMP2024} sampling-based planning could bring these algorithms closer to real-time deployment. 
Adaptation of these algorithms and their formal analysis for multi-objective problems are left to future work.

% Finally, asymptotically optimal variants $\textsc{lexSST}^*$, $\textsc{coSST}^*$, and $\textsc{poSST}^*$ can be obtained by applying shrinking schedules to the sparsity parameters, analogous to $\textsc{SST}^*$~\cite{SST}; their formal analysis and validation are left to future work.

% \ml{points to make:\\
% - GPU parallelization, \\
% - existence of an Alg for $\epsilon$-optimal ordering of objectives in continuous domain.
% }

\appendix
\label{sec:appendix}
\subsection{Proof for Theorem \ref{thrm:lexOrder}}
\label{proof:lexOrder}

\begin{proof}
We prove the result by contradiction in the $N = 2$ case, and then argue that it extends to any $N > 2$.

Assume
%, for the sake of contradiction, 
that there exists a function $u: \mathbb{R}^2_{\geq0} \rightarrow \mathbb{R}$ such that $C \succ_{\text{lex}} C'$ iff $u(C) > u(C')$. Fix any $c_1 \in \mathbb{R}_{\geq0}$, then by definition of lexicographic preference, we have:
$(c_1, 0) \succ_{\text{lex}} (c_1, 1).$
So it must follow that:
$u((c_1, 0)) > u((c_1, 1)),$
and an interval between $c_1$-equivalent vectors can be defined:
$I(c_1) := [u((c_1, 1)), u((c_1, 0))] \subset \mathbb{R}.$

We claim that for any two distinct $c_1, c'_1 \in \mathbb{R}_+$, the intervals $I(c_1)$ and $I(c_1')$ are disjoint. W.L.O.G., assume $c_1 < c_1'$. Then:
$$(c_1, 1) \succ_{\text{lex}} (c_1', 0)
\ \Rightarrow \
u((c_1, 1)) > u((c_1', 0)).$$
By the construction of each interval, we have:
\[
 u((c_1', 1)) < u((c_1', 0)) < u((c_1, 1)) < u((c_1, 0)).
\]
Hence, 
% the intervals $I(c_1) = [u((c_1, 0)), u((c_1, 1))]$ and $I(c_1') = [u((c_1', 0)), u((c_1', 1))]$ are disjoint.
$I(c_1)  \cap I(c_1') = \emptyset$.
Now define the mapping:
\[
\phi : \mathbb{R}_+ \rightarrow \mathbb{I},  \quad \ \phi(c_1) = I(c_1),
\]
where $\mathbb{I}$ is the collection of disjoint intervals in $\mathbb{R}$. Since each $I(c_1)$ is distinct and disjoint, $\phi$ is injective.
Next, since the rationals $\mathbb{Q}$ are dense in $\mathbb{R}$, each interval $I(c_1)$ contains at least one rational number. Thus, we can define:
\[
\tau: \mathbb{I} \rightarrow \mathbb{Q}, \quad \tau(I(c_1)) = r_{c_1},
\]
where $r_{c_1}$ is any rational number contained in $I(c_1)$. Since each $r_{c_1}$ is drawn from a distinct and disjoint interval, they are unique. Thus, $\tau$ is injective and the composition $\tau \circ \phi : \mathbb{R}_{\geq0} \rightarrow \mathbb{Q}$ is likewise injective, implying:
$|\mathbb{R}_{\geq0}| \leq |\mathbb{Q}|.$
This is a contradiction, since $\mathbb{R}_{\geq0}$ is uncountable and $\mathbb{Q}$ is countable. 
Hence, the assumption is incorrect, and no such $u$ exists.

The contradiction above holds for vectors in $\mathbb{R}^2_{\geq0}$, and generalizes naturally to $\mathbb{R}^n_{\geq0}$ for $n > 2$. Since lexicographic ordering compares objectives sequentially, the utility function will necessarily compare objectives $c_i$ and $c_{i+1}$, resulting in the same conclusion.
\end{proof}

\subsection{Proof for Lemma~\ref{lem:ParetoSelect}}
\label{proof:ParetoSelect}

\begin{proof}[Proof]
Consider a node $x \in \mathcal{B}_{\dps}(x^*_i)$ added to the tree at iteration $n$. 
Suppose a sample $x_{\text{rand}}$ is drawn such that 
$x_{\text{rand}} \in \mathcal{B}_{\dps}(x) \cap B_\theta(x^*_i),$
where $\theta = \delta - \dps > 0$ by Proposition~\ref{prop:deltas}. 
In this case, node $x$ lies within the region considered by \textsc{ParetoSelect}, and $\gamma_x > 0$ unless $x$ is strictly dominated by another node $x'$ (i.e., $C(x') \succ C(x)$). 
In that case, $x'$ is the better representative of $\pi^*$.  
% 
% It is important to note that neither $\poSST$  nor \textsc{SST} guarantees that a $\delta$-similar trajectory to $\pi^*$ will be returned. Rather, they guarantee only that such a trajectory \emph{can} be found, and will be excluded only if a strictly better subtrajectory exists within the covering ball sequence. From that ball forward, the selected node $x'$ can continue to follow the reference with $\delta$-similarity, thereby preserving the cost bounds.
% 
This leads to a positive lower bound on the probability of selecting a node within $\mathcal{B}_\delta(x^*_i)$ that can generate a $\delta$-similar trajectory to $\pi^*$:
\[
\gamma = \frac{\mu(B_\theta(x^*_i) \cap \mathcal{B}_{\dps}(x))}{|\mathcal{B}_{\dps}(x))| \cdot \mu(\X_f)} > 0,
\]
where $\mu(\cdot)$ denotes the Lebesgue measure and $|\cdot |$ denotes the cardinality of a set.
% over the corresponding subset of the state space.
\end{proof}

\subsection{Proof for Lemma~\ref{lem:pruneDom}}
\label{proof:pruneDom}
\begin{proof}
Suppose a node $x$ is generated by $\poSST$  at iteration $n$ such that $x \in \mathcal{B}_{\delta_c}(x^*_i)$. Then, $x$ is either used to establish a new witness set or is added to an existing witness located at most $\delta_s$ away. In the latter case, $x$ can lie at most a distance $\delta_c$ from $x^*_i$ while being associated with a witness at a distance $\delta_c + \delta_s$ from $x^*_i$, as shown by the blue ball in Figure~\ref{fig:radiiSchematic}.
% \ml{what do you mean by `the worst case'? There are only two cases as you state previously.}\yr{maybe addressed.}
% 
\begin{figure}[htbp]
    \centering
    \includegraphics[width=0.8\columnwidth]{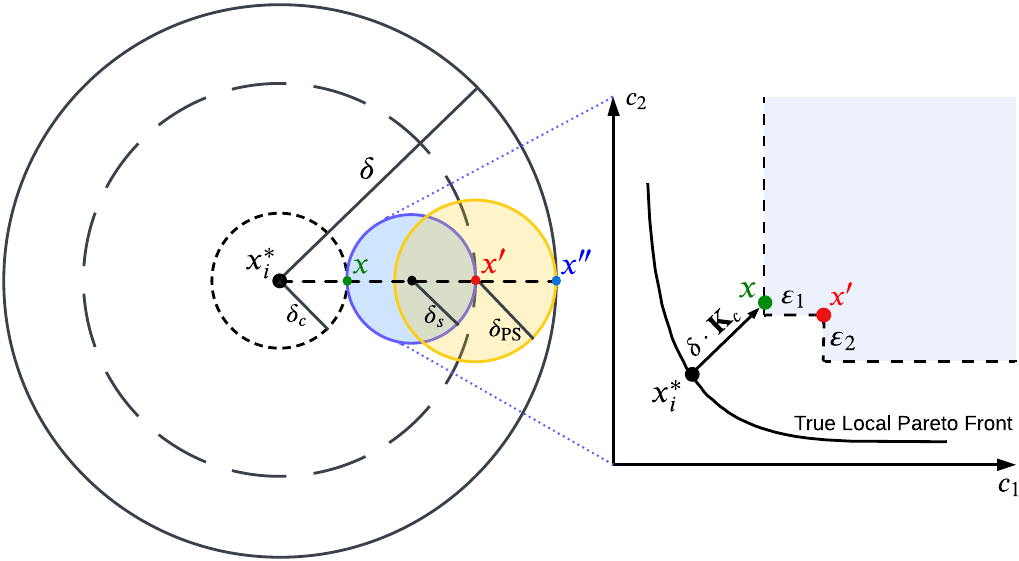} 
    \caption{The witness and selection radii used in \poSST.}
    \label{fig:radiiSchematic}
\end{figure}

The \textsc{PruneDominated} procedure ensures that a node $x \in \mathcal{B}_{\delta_c}(x_i^*)$ is pruned only if there exists another node $x' \in \mathcal{B}_{\delta_c + 2\delta_s}(x_i^*)$ such that $C(x') \succeq C(x) + \witEps$. The presence of such a node $x'$ may lead to the pruning of any node it $\varepsilon$-dominates, as indicated by the gray region in Figure~\ref{fig:radiiSchematic} (right).

Consequently, for any reference trajectory $\pi^* \in \poDSol$, there always exists a node $x' \in V$ that deviates from $x$ by at most $\varepsilon_i$ in some (but not all) cost components and remains within a distance of $\delta - \dps$ from $x_i^*$ for all iterations $n' > n$.
\end{proof} 

\subsection{Proof for Lemma~\ref{lem:poSST}}
\label{proof:lemmaPoSST}

\begin{proof}
   Refer to Fig.~\ref{fig:radiiSchematic} for visualization. The \textsc{ParetoSelect} procedure considers all non-dominated nodes in the neighborhood $\mathcal{B}_{\dps}(x_{\text{rand}})$. For the algorithm to consider only nodes within $\mathcal{B}_\delta(x^*_i)$, the sampled state $x_{\text{rand}}$ must lie in $\mathcal{B}_{\delta - \dps}(x^*_i)$. 
    In the worst case, a node $x \in \mathcal{B}_{\delta_c}(x^*_i)$ is replaced by a node $x' \in \mathcal{B}_{\delta - \dps}(x^*_i)$, lying on the outer edge of that ball. For $x'$ to be considered, the sample $x_{\text{rand}}$ must fall within the intersection of $\mathcal{B}_{\delta - \dps}(x^*_i)$ and $\mathcal{B}_{\dps}(x')$. Once $x'$ is considered, then it will have a non-zero probability of selection unless it is strictly dominated by another node $x'' \in \mathcal{B}_{\dps}(x')$, which, in the worst case, lies on the outer edge of $\mathcal{B}_{\delta}(x_i^*)$. Therefore, the probability of selecting such a node is lower-bounded by:
    $\gamma_{\text{PS}} = \frac{\mu \left( \mathcal{B}_{\delta - \dps}(x^*_i) \cap \mathcal{B}_{\dps}(x') \right)}{|\mathcal{B}_{\dps}(x')| \cdot \mu(\X)} > 0.$
    \noindent
\end{proof}

\subsection{Proof for Theorem~\ref{thm:poSST-complete}}
\label{proof:poSST-complete}
\begin{proof}
Given that the initial state satisfies $x_0 = \pi^*(0) \in \mathcal{B}_{\delta_c}(\pi^*(0))$ for all $\pi^* \in \Pi^*$, and that $V$ is initialized as $\{x_0\}$, we can inductively establish the continued progress of $\poSST$ in generating a $\delta$-similar trajectory to any reference trajectory $\pi^*$. At each iteration, the following two conditions are guaranteed: (1) with probability $\rho_{\delta \rightarrow \delta_c}>0$, the propagation procedure \textsc{MonteCarloProp} successfully transitions from $\mathcal{B}_{\delta}(x^*_i)$ to the next ball $\mathcal{B}_{\delta_c}(x^*_{i+1})$ along the reference trajectory by \cite[Theorem 17]{SST}; and (2) with probability $\gamma_{\text{PS}} > 0$, a node within $\mathcal{B}_{\delta }(x^*_i)$ is selected for extension by Lemma~\ref{lem:poSST}. 
Together, these properties guarantee that at every iteration, $\poSST$ maintains a non-zero probability of generating a node within each ball $\mathcal{B}_{\delta_c}(x^*_i)$ and selecting a node in $\mathcal{B}_{\delta}(x^*_i)$ for all $i = 0, 1, \dots, M - 1$, thereby ensuring probabilistic visitation of every ball in a covering sequence about any reference path.
\end{proof}

\subsection{Proof for Theorem~\ref{thm:poSST-optimal}}
\label{proof:poSST-optimal}
\begin{proof}

Consider a $\delta$-similar trajectory segment $\overline{x_j \rightarrow x_{j+1}}$ generated by \poSST, where $x_j \in \mathcal{B}_\delta(x_j^*)$ and $x_{j+1} \in \mathcal{B}_\delta(x_{j+1}^*)$ for some optimal trajectory $\pi^*$. Then, due to the Lipschitz continuity of the cost functions, each cost component $c_i$ satisfies:
% 
% \[
$c_i(\overline{x_j \rightarrow x_{j+1}}) \leq c_i(\overline{x_j^* \rightarrow x_{j+1}^*}) + \delta \cdot K_{c,i}.$
% \]
Furthermore, from Lemma~\ref{lem:pruneDom}, we know that if a node $x_j \in \mathcal{B}_{\delta - \dps}(x_j^*)$ exists at iteration $n$, then there exists at least one node $x_j' \in V$ that is either $\varepsilon$-dominant or equal to $x_j$ in the same ball $\mathcal{B}_{\delta - \dps}(x_j^*)$. Lemma~\ref{lem:ParetoSelect} ensures that $x_j'$ will be selected with probability at least $\gamma_{\text{PS}}>0$, or else a dominant node $x_j'' \in \mathcal{B}_\delta(x_j^*)$ will be selected. 
% Note that $x_j$, $x_j'$, and $x_j''$ may refer to the same node. 
Therefore, we have
\[
C(x_j'') \succeq C(x_j') \succeq C(x_j) + \witEps
\]
for all future iterations of the algorithm.

Now, consider the first segment $\overline{x_0 \rightarrow x_1}$ of a trajectory generated by $\poSST$  that approximates a Pareto-optimal trajectory $\pi^*$. Then, the cost of a representative segment at an iteration $n'>n$ can be bounded as:
$$\quad C(\overline{x_0 \rightarrow x_1'}) \succeq C(\overline{x_0 \rightarrow x_1}) \succeq C(\overline{x_0 \rightarrow x_1^*}) + \delta \cdot \mathbf{K}_c + \vec\varepsilon.$$
% 
% Assuming this bound holds for every segment $\overline{x_i \rightarrow x_{i+1}}$, 
We can extend the bound to a $k$-segment trajectory from $x_0$:
\begin{align}
C(\overline{x_0 \rightarrow x_k'}) &\succeq C(\overline{x_0 \rightarrow x_k}) = \sum_{j=0}^{k} C(\overline{x_j \rightarrow x_{j+1}}) \notag \\
&\succeq \sum_{j=0}^{k} \left[C(\overline{x_j \rightarrow x_{j+1}^*}) + \delta \cdot \mathbf{K}_c + \vec\varepsilon \right] \notag \\ 
&= C(\overline{x_0 \rightarrow x_k^*}) + k  (\delta \cdot \mathbf{K}_c + \vec\varepsilon)
\end{align}
Let $\Delta c_i$ denote the minimum cost incurred in objective $i$ among all segments of $\pi^*$ in $\mathbb{B}(\pi^*, \delta, \Delta c_1)$. Then, the maximum number of segments $k$ is bounded by $k \leq \frac{c_i(\pi^*)}{\Delta c_i}$ for each $i$, yielding the final sub-optimality bound:
% 
% \[
% C(\pi') \preceq C(\pi^*) + \frac{C(\pi^*)}{C_\Delta}(\delta \cdot \mathbf{K}_c + \vec\varepsilon) = \left(1 + \frac{\delta \cdot \mathbf{K}_c + \vec\varepsilon}{C_\Delta} \right) \cdot C(\pi^*)
% \]
\begin{equation}
    \label{eq:boundsAppx}
    c_i(\pi) \leq \left(1 + \frac{\delta \cdot K_{c, i} + \varepsilon_i}{\Delta c_i} \right) \cdot c_i(\pi^*) \quad \forall i.
\end{equation}
\end{proof}

\subsection{Proof for Lemma~\ref{lemma:lexSST}}
\label{proof:lemmaLexSST}
\begin{proof}
By Theorems~\ref{thm:poSST-complete} and~\ref{thm:poSST-optimal}, the \textsc{ParetoSelect} and \textsc{PruneDominated} procedures do not hinder the generation of a $\delta$-similar trajectory to any Pareto-optimal solution. Since $\lexMin$ is Pareto-optimal by definition, a node $x_j \in \mathcal{B}_\delta(\lexBall)$ that lies within the covering ball sequence of $\lexMin$ will have a primary cost that satisfies \eqref{eq:bounds} after the $j$-th segment:
$c_1(x_j) \leq c_1(\lexBall) + j\cdot (\delta \cdot K_{c,1} + \varepsilon_1).$
\textsc{PruneLexSet} removes all nodes exceeding $\bar c_1^* + \epsilon$, where $\bar c_1^* = \min_{x \in \repLex} c_1(x)$. Since $\epsilon \geq k(\delta \cdot K_{c,1} + \varepsilon_1)$, $ \bar c_1^* \geq c_1(\lexBall)$, and $k \geq j$, we get:
\[
c_1(x_j) \leq c_1(\lexBall) + j(\delta \cdot K_{c,1} + \varepsilon_1) \leq \bar c_1^* + \epsilon,
\]
and $x_j$ remains in $\repLex$.
\end{proof}

\subsection{Proof for Theorem~\ref{thrm:lexComplete}}
\label{proof:lexComplete}
\begin{proof}
% The proof closely follows that of Theorem~\ref{thm:poSST-complete}. 
The search tree is initialized with $x_0 \in \mathcal{B}_{\delta_c}(\lexMin(0))$. At every iteration: Lemma~\ref{lem:ParetoSelect} guarantees a positive probability of selecting a node in $\mathcal{B}_{\delta}(\lexBall)$; Lemma~\ref{lemma:lexSST} ensures \textsc{PruneLexSet} retains at least one such node; and \cite[Theorem 17]{SST} guarantees a positive probability of transitioning to $\mathcal{B}_{\delta_c}(x^*_{\text{lex},j+1})$. Together, these ensure \lexSST makes progress toward a $\delta$-similar trajectory to $\lexMin$ at every iteration.
\end{proof}

\subsection{Proof for Theorem~\ref{thrm:lexOptimal}}
\label{proof:lexOptimal}
\begin{proof}
The proof follows that of Theorem~\ref{thm:poSST-optimal}, restricted to $\lexMin$. Since \lexSST is guaranteed to find a $\delta$-similar trajectory $\hat\pi_\text{lex}$ to $\lexMin$, the bound in \eqref{eq:bounds} gives:
\[
c_i(\hat\pi_\text{lex}) \leq \left(1 + \frac{\delta \cdot K_{c,i} + \varepsilon_i}{\Delta c_i}\right) \cdot c_i(\lexMin) \ \text{ for } i\in \{1, 2\}.
\]
By Lemma~\ref{lemma:lexSST}, the solution set $\Sols$ always contains a trajectory $\hat\pi_\text{lex}'$ with $C(\hat\pi_\text{lex}') \succeq C(\hat\pi_\text{lex})$. The returned solution $\lexApprox$ incurs an additional $\epsilon$-penalty on the primary cost bound due to the $\epsilon$-window, yielding: 
\begin{equation}
\label{eq:lex-optimality}
c_1(\lexApprox) \leq \left(1 + \frac{\delta \cdot K_{c,1} + \varepsilon_1}{\Delta c_1}\right) \cdot c_1(\lexMin) + \epsilon.
\end{equation}
\end{proof}

\subsection{Proof for Lemma~\ref{lemma:conSST}}
\label{proof:conSST}
\begin{proof}
The proof follows that of Lemma~\ref{lemma:lexSST}, replacing the $\epsilon$-window on the primary cost with per-constraint buffers $\bar\epsilon_i$. Since $\conMin$ minimizes $c_N$ and is thus Pareto-optimal, a node $x_j$ within the covering ball sequence of $\conMin$ satisfies \eqref{eq:bounds}:
\[
c_i(x_j) \leq c_i(\conBall) + j\cdot (\delta \cdot K_{c,i} + \varepsilon_i) \quad \forall i.
\]
\textsc{PruneConSet} removes nodes exceeding $\bar{c}_i + \bar\epsilon_i$. Since $\bar\epsilon_i \geq k(\delta \cdot K_{c,i} + \varepsilon_i)$, $\bar{c}_i \geq c_i(\conBall)$, and $k\geq i$, we get $c_i(x_j) \leq \bar{c}_i + \bar\epsilon_i$ and $x_j$ is retained in $\repCon$.
\end{proof}

\subsection{Proof for Theorem~\ref{thrm:conComplete}}
\label{proof:conComplete}

\begin{proof}
Follows the outline of Theorem~\ref{thrm:lexComplete}, with Lemma~\ref{lemma:conSST} replacing Lemma~\ref{lemma:lexSST} to guarantee that \textsc{PruneCoSet} retains at least one node in $\mathcal{B}_\delta(\conBall)$ at every iteration.
\end{proof}

\subsection{Proof for Theorem~\ref{thrm:conOptimal}}
\label{proof:conOptimal}

\begin{proof}
Follows directly from Theorem~\ref{thm:poSST-optimal}, restricted to $\conMin$, which ensure a motion plan $\conApprox$ will be generated that satisfies:
\begin{equation*}
\label{eq:con-optimality}
c_i(\hat\pi_\text{con}) \leq \left(1 + \frac{\delta \cdot K_{c,i} + \varepsilon_i}{\Delta c_i}\right) \cdot c_i(\conMin) \quad\forall i.
\end{equation*}
By Lemma~\ref{lemma:conSST}, the solution set $\Sols$ will contain a trajectory $\hat\pi_\text{con}'$ with $C(\hat\pi_\text{con}') \succeq C(\hat\pi_\text{con})$ for iterations $n' >n$. Then, the returned solution $\conApprox = \arg\min_{\pi \in \Sols} c_N(\pi)$ satisfies:
\begin{equation}
\label{eq:con-optimality2}
c_i(\conApprox) \leq \left(1 + \frac{\delta \cdot K_{c,i} + \varepsilon_i}{\Delta c_i}\right) \cdot c_i(\conMin) + \mathbf{1}_{i \neq N} \cdot \bar\epsilon_i,
\end{equation}
where $\mathbf{1}_{\{\cdot\}}$ is an indicator function applied to the buffer $\bar\epsilon_i$ which accounts for the fixed constraint window, analogous to the $\epsilon$-penalty in Eq.~\eqref{eq:lex-optimality}.
\end{proof}

\bibliographystyle{unsrt}
\bibliography{refs}

\end{document}